\newtheorem{theorem}{Theorem}[section]
\newtheorem{lemma}[theorem]{Lemma}
\newtheorem{proposition}[theorem]{Proposition}
\newtheorem{definition}[theorem]{Definition}
\newtheorem{condition}[theorem]{Condition}
\newtheorem{assumption}[theorem]{Assumption}
\def\1{\bm{1}}
\def\vzero{{\bm{0}}}
\def\vone{{\bm{1}}}
\DeclareMathAlphabet{\mathsfit}{\encodingdefault}{\sfdefault}{m}{sl}
\SetMathAlphabet{\mathsfit}{bold}{\encodingdefault}{\sfdefault}{bx}{n}
\newcommand{\R}{\mathbb{R}}
\newcommand{\norm}[1]{\left\|#1\right\|}
\newcommand{\inprod}[1]{\left\langle #1 \right\rangle}
\def\ep{\varepsilon}
\newcommand{\ed}[1]{\textcolor{blue}{[ED: #1]}}
\newcommand{\bitem}{\begin{itemize}}
\newcommand{\eitem}{\end{itemize}}
\newcommand{\benum}{\begin{enumerate}}
\newcommand{\eenum}{\end{enumerate}}
\newcommand{\beq}{\begin{equation}}
\newcommand{\eeq}{\end{equation}}
\newcommand{\beqs}{\begin{equation*}}
\newcommand{\eeqs}{\end{equation*}}
\newcommand{\bals}{\begin{align*}}
\newcommand{\eals}{\end{align*}}
\newcommand{\Pw}{\mathcal{P}_{t}}
\newcommand{\Pu}{P_{u_t}}
\newcommand{\ourAlgo}{rPGD }
\title{Implicit Regularization and Convergence for \\Weight Normalization}
\author{%
Xiaoxia Wu\thanks{Equal Contribution, xwu@math.utexas.edu, dobriban@wharton.upenn.edu, tongzheng@utexas.edu, shanshanw@google.com}
\\
 \texttt{\small University of Texas at Austin}    
  \And
  Edgar Dobriban$^*$\\
  \texttt{\small University of Pennsylvania}  
  \And
   Tongzhenng Ren$^*$
    \\
 \texttt{ \small University of Texas at Austin}  
    \And
   Shanshan Wu$^*$ \\
    \texttt{\small Google Research}  
       \And
Zhiyuan Li\\
    \texttt{\small Princeton University}  
\And
Suriya Gunasekar\\
    \texttt{\small Microsoft Research} 
\And
Rachel Ward
    \\
 \texttt{\small University of Texas at Austin}    
\And
Qiang Liu
    \\
 \texttt{\small University of Texas at Austin}    
 }
\begin{document}

\maketitle
\begin{abstract}

Normalization methods such as batch \citep{ioffe2015batch}, weight \citep{salimans2016weight}, instance \citep{ulyanov2016instance}, and layer normalization \citep{ba2016layer} have been widely used in modern machine learning. 
Here, we study the weight normalization (WN) method \citep{salimans2016weight} and a variant called reparametrized projected gradient descent (rPGD) for overparametrized least squares regression. WN and rPGD reparametrize the weights with a scale $g$ and a unit vector $w$ and thus the objective function becomes \emph{non-convex}. We show that this non-convex formulation has beneficial regularization effects compared to gradient descent on the original objective. These methods adaptively regularize the weights and converge close to the minimum $\ell_2$ norm solution, even for initializations far from zero. For certain stepsizes of $g$ and $w$, we show that they can converge close to the minimum norm solution. This is different from the behavior of gradient descent, which converges to the minimum norm solution only when started at a point in the range space of the feature matrix, and is thus more sensitive to initialization.

\end{abstract}
\section{Introduction}

Modern machine learning models often have more parameters than data points, allowing a fine-grained adaptation to the data, but also suffering from the risk of over-fitting. To alleviate this, various explicit and implicit regularization methods are used. For instance, weight decay can control the model complexity by shrinking the norm of the weights, and dropout can reduce the model capacity by sub-sampling features during training \citep{gal2016dropout,mianjy2018implicit,arora2020dropout}. 
Recent state-of-the-art techniques such as batch, weight, and layer normalization \citep{ioffe2015batch, salimans2016weight,ba2016layer}, empirically have a regularization effect, e.g., as described in \cite{ioffe2015batch}, "batch normalisation acts as a regularizer, in some cases eliminating the need for dropout". 

While normalization methods are practically popular and successful, their theoretical understanding has only started to emerge recently. 
For instance, normalization methods make learning more robust to hyperparameters such as the learning rate \citep{wu2018wngrad,arora2019implicit}. Moreover, it has been argued that normalization methods can make the model robust to the shift and scaling of the inputs, preventing ``internal covariate shift" \citep{ioffe2015batch} as well as smooth or modify \citep{santurkar2018does,pmlr-v89-lian19a} the optimization landscape. 
\begin{figure}[tb]
\begin{minipage}[c]{0.57\textwidth}
    \caption{Comparison of the outputs $||\widehat{x}||=\|\widehat{g}\widehat{w}\|$ provided by GD, WN and \ourAlgo on an overparametrized linear regression problem (see Section \ref{sec:setup}). All algorithms (with stepsizes $0.005$) start from the same initialization and stop when the loss reaches $10^{-5}$. Note that the orange (rPGD) and  green (WN) curves overlap (see Lemma \ref{cor:flow} for explanation and Section~\ref{sec:lr_exp} for experimental details). GD converges to the minimum $\ell_2$-norm solution only when $\|x_0\|=0$, while WN and \ourAlgo converge close to the minimum norm solution for a wider range of initializations with smaller standard deviation. }
    \label{fig:lr_init_intro}
    \end{minipage} 
    \begin{minipage}[c]{0.4\textwidth}    \includegraphics[width=1.\textwidth]{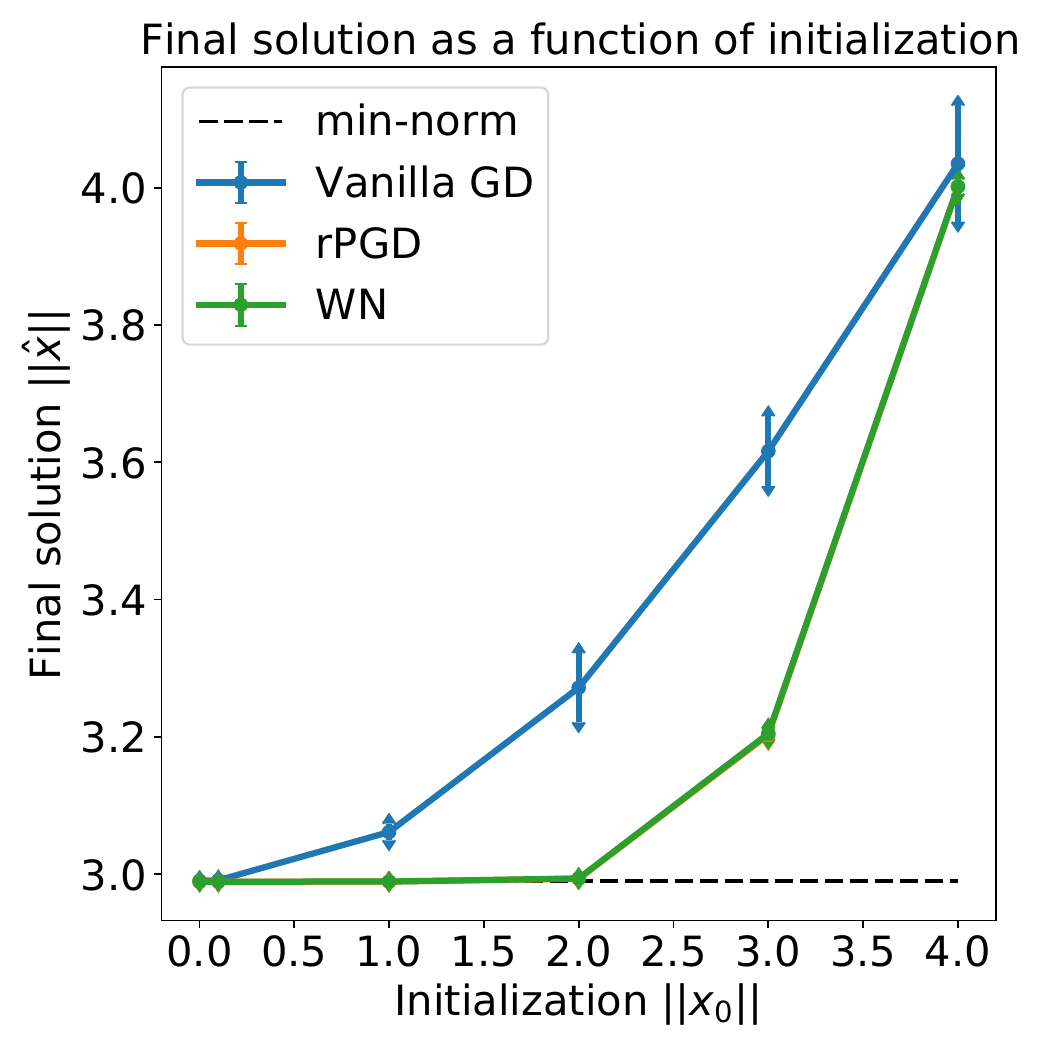}
    \end{minipage}
    \vspace{-0.4cm}
\end{figure}

Yet, a \emph{precise} characterization of the regularization effect of normalization methods in overparametrized models is not available. For  overparametrized models, there are typically infinitely many global minima, as shown e.g., in matrix completion \citep{ge2016matrix} and neural networks \citep{ge2017no}. Thus, we can analyze how different algorithms converge to different global minima as a way of quantifying implicit bias.  It is critical for the algorithm to converge to a solution with good generalization properties, e.g., \cite{zhang2016understanding,neyshabur2018the}, etc. For the key model of over-parameterized linear least squares, it is well-known that gradient descent (GD) converges to the minimum Euclidean norm solution when started from zero,  \citep[see e.g.][]{hastie2019surprises}. It has been argued that this may have favorable generalization properties in learning theory (norms can control the Radamechar complexity), as well as more recent analyses~\citep{BLLT19,hastie2019surprises, belkin2019two,liang2018just}.

However, for non-convex optimization, starting from the origin might be problematic -- this is true in particular in neural networks with ReLU activation function which is often used \citep{lecun2015deep}.  In neural networks, we often instead apply random initialization \citep{glorot2010understanding,he2015delving} which can for instance help escape saddle points \citep{lee2016gradient}.  Thus, it is important to study algorithms with initializations not close to zero.

With this in mind, we study how a particular normalization method, weight normalization (WN) \citep{salimans2016weight}, affects the choice of global minimum in overparametrized least squares regression.
WN writes the model parameters $x$ as $x=gw/\|w\|_2$, and optimizes over the "length" $g \in \R$ and the unnormalized direction $\smash{w\in \mathbb{R}^d}$ separately. Inspired by weight normalization, we also study a related method where we parametrize the weight as $x=gw$, with $g\in \R$ and a normalized direction $w$ with $\|w\|_2=1$, \citep[see e.g.][]{douglas2000gradient}. Different from WN, this method performs projected GD  on the unit norm vector $v$, while WN does GD on $w$ such that $w/\|w\|$ is the unit vector.  We call this variant the \emph{reparametrized projected gradient descent} (rPGD) algorithm. We show that the two algorithms (rPGD and WN) have the same limit  when the stepsize tends to zero. Arguing in both discrete and continuous time, we show that both find global minima \emph{robust to initialization}.

{\bf Our Contributions.}  
We consider the overparametrized least squares (LS) optimization problem, which is convex but has infinitely many global minima. 
As a simplified companion of WN in LS, we introduce the \ourAlgo algorithm (Alg. \ref{alg:main}), which is projected gradient descent on a nonconvex reparametrization of LS. 
We show that WN and \ourAlgo  have the same limiting flow---the WN flow---in continuous time (Lemma \ref{cor:flow}). We characterize the stationary points of the loss, showing that the nonconvex reparametrization introduces some additional stationary points that are in general not global minima. However, we also show that the loss still decreases at a geometric rate, if we can control the scale parameter $g$.

How to control the scale parameter? Perhaps surprisingly, we show the delicate property that the scale and the orthogonal complement of the weight vector are linked through an invariant (Lemma \ref{lemma:wperp}). This allows us to show that the WN flow converges at a geometric rate in spite of the non-convexity of the reparameterized objective. We precisely characterize the solution, and when it is close to the min norm solution. 


In discrete-time, when the stepsize is not infinitely small, we first consider a simple setting where the feature matrix is orthogonal and characterize the behavior of \ourAlgo (Theorem \ref{thm:orthogonal1}). We show that by appropriately lowering the learning rate for the scale $g$, \ourAlgo converges to the minimum $\ell_2$ norm solution.  We give sharp iteration complexities  and upper bounds for the stepsize required for $g$. We extend the result to general data matrices $A$ (Theorem \ref{thm:non-orthogonal}), where the results become more challenging to prove and a bit harder to parse.  This sheds light the empirical observation that only optimizing the direction $w$ training the last layer of neural nets improves generalization \citep{goyal2017accurate,xu2019understanding}.





\subsection{Setup}\label{sec:setup}
We use $\|\cdot\|$ for the $\ell_2$ norm, and consider the standard overparametrized linear regression problem:
\begin{equation}
    \min_{x\in \R^d}\frac{1}{2} \|Ax-y\|^2, \label{eqn:lsq}
\end{equation}
where $A\in \R^{m\times d}$ ($m<d$) is the feature matrix and $y\in\R^m$ is the target vector. 
Without loss of generality, we assume that the feature matrix $A$ has full rank $m$. 
This objective has infinitely many global minimizers, and among them let the minimum $\ell_2$-norm solution be $x^*$. Observe that $x^*$ is characterized by the two properties: (1) $Ax^*=y$; (2) $x^*$ is in the row space of the matrix $A$. We can describe condition (2) via Definition \ref{def:space}.
\begin{definition}
For any $z \in \mathbb{R}^d$, we can write $z=z^{\parallel}+z^{\perp}$ where $Az^{\parallel} = Az \text{ and } Az^{\perp} = 0.$ \label{def:space} 
\end{definition}
Then we can equivalently write condition (2) as $x^{*\parallel}=x^{*}.$
We focus on weight normalization and a related reparametrized projected gradient descent method. Notably, both transform the original convex LS problem to a non-convex problem, which increases the difficulty of theoretical analysis.

\paragraph{Weight normalization (WN)} 
WN reparametrizes the variable $x$ as $g\cdot w/\|w\|$, where $g\in\mathbb{R}$ and $w\in\mathbb{R}^d$, which leads to the following minimization problem:
\begin{align}
  \min_{g\in \mathbb{R}, w\in\mathbb{R}^d} h(w,g) = \frac12 \left\|g A w/\|w\|-y\right\|^2.   \label{eqn:main0}
\end{align}

We can write the min norm solution as $x^*=g^* w^*/\|w^*\|$, where $w^*$ is unique up to scale. However, we can always choose $w^*$ so that $g^*>0$, unless $x^*=0$, which implies that $y=0$. We exclude this degenerate case throughout the paper. The discrete time WN algorithm is shown in Algorithm \ref{alg:wn}. 

\begin{minipage}{0.47\textwidth}
\begin{small}
\begin{algorithm}[H]
    \caption{WN for \eqref{eqn:main0}}
    \label{alg:wn}
\begin{algorithmic}
    \STATE {\bfseries Input:} Unit norm $w_0$ and scalar $g_0$,iterations $T$, step-sizes $\{\gamma_t\}_{t=0}^{T-1}$ and $\{\eta_t\}_{t=0}^{T-1}$ \\
    \FOR{$t=0, 1, 2, \cdots,T-1$}
        \STATE $w_{t+1} = w_t - \eta_t  \nabla_{w}h(w_t, g_t)$
        \STATE $g_{t+1} = g_t - \gamma_t  \nabla_{g}h(w_t, g_t) $
        \STATE
    \ENDFOR
\end{algorithmic}
\end{algorithm}
\end{small}
\end{minipage}
\begin{minipage}{0.47\textwidth}
\begin{small}
\begin{algorithm}[H]
    \caption{\ourAlgo for \eqref{eqn:main}}
    \label{alg:main}
\begin{algorithmic}
    \STATE {\bfseries Input:} Unit norm  $w_0$ and $g_0$, number of iterations $T$, step-sizes $\{\gamma_t\}_{t=0}^{T-1}$ and $\{\eta_t\}_{t=0}^{T-1}$  
    \FOR{$t=0, 1, 2, \cdots,T-1$}
        \STATE $v_t = w_t - \eta_t \nabla_{w}f(w_t, g_t)$ (gradient step)
        \STATE $w_{t+1} = \frac{v_t}{\|v_t\|}$  \,\, (projection)
        \STATE $g_{t+1} = g_t - \gamma_t \nabla_{g} f(w_t, g_t)$ (gradient step)
    \ENDFOR
\end{algorithmic}
\end{algorithm}
\end{small}
\end{minipage}

\paragraph{Reparametrized Projected Gradient Descent (rPGD)}
Inspired by WN algorithm, we investigate an algorithm that directly updates the direction of $w$. See \cite{douglas2000gradient} for an example of such algorithms. 
Since the direction is a unit vector, we can perform projected gradient descent on it. To be more concrete, we reparametrize the variable $x$ as $gw$, where $g$ denotes the scale and $w\in\R^d$ with $\|w\|=1$ denotes the direction, and transform \eqref{eqn:main0} into the following problem: 
\begin{equation}
    \min_{g\in \mathbb{R}, w\in\mathbb{R}^d} f(w, g):= \frac{1}{2}\|Agw - y\|^2,\quad \textnormal{s.t.}\quad \|w\|=1.
    \label{eqn:main}
\end{equation}
The minimum norm solution can be uniquely written as $x^*=g^*w^*$, where $g^*>0$ and $\|w^*\|=1$. To solve \eqref{eqn:main}, we update $g$ with standard gradient descent, and update $w$ via projected gradient descent (PGD) (see Algorithm~\ref{alg:main}). We call this algorithm reparameterized PGD (rPGD).



One may observe that both algorithms can heuristically be viewed as a variation of  adaptive $\ell_2$ regularization, where the magnitude of the regularization depends on the current iteration. We refer the readers to Appendix \ref{sec:adareg} for a detailed discussion.


\section{Continuous Time Analysis} \label{sec:main}
%

In this section, we study the properties of a continuous limit of WN and rPGD, to give insight into the implicit regularization of normalization methods. We use constant stepsizes for both the update of the scale $g$ and weight $w$, and take them to zero in a way that their ratio remains a constant.
\begin{condition}[Stepsizes]\label{cond:stepsize}
For both Algorithms 1 and 2, use constant stepsizes $\eta_t = \eta$ and $\gamma_t=c\eta$ for $g$ and $w$ respectively, with $c\ge 0$ a fixed constant ratio. 
We take the continuous limit $\eta \to 0$.
\end{condition}
Setting $c=0$ amounts to fixing $g$ and only updating $w$.
We first prove that the continuous limit of the dynamics of $(g_t, w_t/\|w_t\|)$ for WN evolves the same as the continuous limit of the dynamics of $(g_t,w_t)$ for rPGD, assuming we start with $\|w_0\|=1$ for WN. The proof can be found in Appendix \ref{pf:flow}.
\begin{lemma}[Limiting flow for WN and rPGD]\label{cor:flow}
Assume Condition \ref{cond:stepsize} and that $\|w_0\|=1$ for WN. Then WN (Algorithm \ref{alg:wn}) with $(g_t, w_t/\|w_t\|)$ and  rPGD (Algorithm  \ref{alg:main}) with $(g_t, w_t)$  have the same limiting dynamics, which we call \textbf{WN flow}. This is given by the pair of ordinary differential equations
\begin{align}
\frac{dg_t}{dt} = - c\nabla_g f(w_t,g_t)  & \quad
\frac{dw_t}{dt} = - g_t\Pw\left(\nabla_w f(w_t,g_t)\right).
\end{align}
Here $f$ is from \eqref{eqn:main}. With $r = y - A g w$ to denote the residual, $\nabla_w f= A^\top  r$,  $\nabla_g f=w^\top A^\top r$, and $\Pw = I-w_tw_t^\top/\|w_t\|^2$ the projection matrix onto the space orthogonal to $w_t$.

\end{lemma}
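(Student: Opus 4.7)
The plan is to derive the continuous limit $\eta\to 0$ with the standard scaling $t=n\eta$ for each algorithm separately and observe that the resulting ODE systems coincide. Throughout I will use the Condition \ref{cond:stepsize} scaling $\gamma_t=c\eta$, so that the $g$-update trivially converges (since $g_{n+1}-g_n=-c\eta\,\nabla_g f$) to $\dot g=-c\nabla_g f$ for both algorithms; the nontrivial content is matching the $w$-dynamics.

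First I would analyze rPGD, where the projection enforces $\|w_n\|=1$ exactly. Writing $v_n=w_n-\eta\,\nabla_w f(w_n,g_n)$, I would Taylor-expand in $\eta$: from
\begin{equation*}
\|v_n\|^2 = 1 - 2\eta\, w_n^\top \nabla_w f + \eta^2\|\nabla_w f\|^2
\end{equation*}
one obtains $1/\|v_n\|=1+\eta\, w_n^\top\nabla_w f+O(\eta^2)$. Multiplying out and collecting terms gives
\begin{equation*}
w_{n+1}-w_n = -\eta\bigl(I - w_n w_n^\top\bigr)\nabla_w f(w_n,g_n) + O(\eta^2) = -\eta\,\Pw\nabla_w f + O(\eta^2),
\end{equation*}
which, after rescaling $t=n\eta$, yields the stated ODE (up to the notational identification of $\nabla_w f$ with $\pm g A^\top r$ as written in the lemma). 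This part is a routine expansion.

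Next I would analyze WN, which operates on an unconstrained $w$. Setting $\phi(w)=w/\|w\|$ and $h(w,g)=f(\phi(w),g)$, the chain rule with $D\phi = \tfrac{1}{\|w\|}(I - ww^\top/\|w\|^2)$ gives
\begin{equation*}
\nabla_w h(w,g) = \tfrac{1}{\|w\|}\Pw\,\bigl.\nabla_u f(u,g)\bigr|_{u=w/\|w\|}.
\end{equation*}
The crucial observation is $w^\top\nabla_w h=0$, which implies $\|w_{n+1}\|^2 = \|w_n\|^2 + \eta^2\|\nabla_w h(w_n,g_n)\|^2$. Starting from $\|w_0\|=1$, a telescoping/Grönwall argument on any compact time interval $[0,T]$ (using that iterates stay bounded and $\nabla_w h$ is Lipschitz on bounded sets) then shows the cumulative drift satisfies $\|w_n\|^2 = 1 + O(\eta)$ uniformly. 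Consequently $u_n:=w_n/\|w_n\|$ agrees with $w_n$ to first order in $\eta$, and substituting $\|w_n\|=1+O(\eta)$ into the WN increment recovers exactly the projected-gradient form derived for rPGD, so the limiting dynamics of $u_n$ match those of rPGD's $w_n$.

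The main obstacle I anticipate is this last uniform-in-$\eta$ control of $\|w_n\|$ for WN: because each iteration increments $\|w_n\|^2$ by $O(\eta^2)$ but we must take $n$ up to $T/\eta$ steps, a naive bound yields only $O(1)$ drift. One needs the iterate boundedness and a careful telescoping to pull out a factor of $\eta$ and obtain the correct $O(\eta)$ bound. Once this is in place, the standard discrete-to-continuous convergence machinery (equicontinuity of the piecewise-linear interpolants plus consistency of the increments with the vector field $-\Pw\nabla_w f,\,-c\nabla_g f$) identifies the common limit as the WN flow and completes the proof.
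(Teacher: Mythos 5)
Your treatment of rPGD is essentially identical to the paper's: expand $\|v_n\|^{-1}$ to first order in $\eta$, observe that the leading correction reconstitutes the projection $P_{w_n^\perp}$, and pass to the limit $\eta\to 0$. For WN, however, you take a genuinely different route. The paper first passes to the continuous-time flow on the \emph{unnormalized} $(g_t,w_t)$ and then observes, via $w_t^\top \dot w_t = -\tfrac{g_t}{\|w_t\|}\,w_t^\top P_{w_t^\perp}(\cdots)=0$, that $\tfrac{d}{dt}\|w_t\|^2 = 0$, so $\|w_t\|\equiv\|w_0\|=1$ exactly along the flow and the two ODE systems collapse into one another. You instead stay in discrete time, exploit the same orthogonality in the form $w_n^\top\nabla_w h(w_n,g_n)=0$, deduce $\|w_{n+1}\|^2 = \|w_n\|^2 + \eta^2\|\nabla_w h(w_n,g_n)\|^2$, and control the cumulative drift over $\Theta(1/\eta)$ steps so that $u_n := w_n/\|w_n\|$ tracks $w_n$ to first order. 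Both arguments hinge on the same tangency fact (the WN $w$-gradient is orthogonal to $w$); yours is the discrete, pre-limit counterpart of the paper's continuous-time conserved quantity. Your route is more self-contained about the discrete-to-continuous passage, which the paper handles informally by simply asserting the flow limit; the paper's route is shorter because conservation is exact in continuous time and no drift estimate is needed.

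One point to correct: the ``main obstacle'' you flag at the end is not actually an obstacle. With $\|\nabla_w h\|\le M$ uniformly along the trajectory over $[0,T]$ (which follows since $w_n/\|w_n\|$ lives on the unit sphere and $g_n$ moves by $O(\eta)$ per step), each iteration adds at most $M^2\eta^2$ to $\|w_n\|^2$, and there are at most $T/\eta$ iterations; the telescoped total is at most $M^2T\eta = O(\eta)$, not $O(1)$. So the naive bound already gives the uniform estimate $\|w_n\|^2 = 1 + O(\eta)$ without any additional care, and the rest of your argument goes through as planned.
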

While the flow is valuable, the nonconvex reparametrization introduces some new stationary points. We characterize them, and later use this to understand the convergence.
\begin{lemma}[Stationary points] \label{lem:stat_p_loss} Suppose the smallest eigenvalue of $AA^\top $, is positive, $\lambda_{\min}:=\lambda_{\min}(AA^\top )>0$.
The stationary points of the reparameterized loss from \eqref{eqn:main0} either (a) have loss equal to zero, or (b) belong to the set  $\mathcal{S}:= \{(g,w): g =0, y^\top  Aw = 0\}.$ If the loss \eqref{eqn:main0} at $g,w$ is strictly less than the loss at $(g=0,w)$, i.e. $\|y\|^2> \|Agw-y\|^2$, we are always in case (a).
\end{lemma}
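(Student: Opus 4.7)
The plan is to compute $\nabla_w h$ and $\partial_g h$ for the weight-normalized loss $h(w,g) = \tfrac{1}{2}\|gAw/\|w\|-y\|^2$ of \eqref{eqn:main0}, set both to zero, and case-split on whether the scale $g$ vanishes. Write $u = w/\|w\|$ and let $r = y - gAu$ denote the residual.

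The $g$-derivative is immediate: $\partial_g h = -u^T A^T r$. The only mildly delicate calculation is $\nabla_w h$. For this I would apply the chain rule through the map $w \mapsto u = w/\|w\|$, whose Jacobian is the standard expression $\|w\|^{-1}(I - uu^T)$. This naturally produces the projector $I - uu^T$ onto $u^\perp$, giving $\nabla_w h = -(g/\|w\|)(I - uu^T) A^T r$. This is the main place to be careful; everything else is a clean case analysis.

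The stationarity system is therefore $u^T A^T r = 0$, together with $g = 0$ or $(I - uu^T) A^T r = 0$. In the branch $g=0$, we have $r = y$ and the first condition becomes $y^T A w = 0$ (since $u$ and $w$ are colinear), putting $(g,w) \in \mathcal{S}$; this is case (b). In the branch $g \neq 0$, $(I-uu^T)A^T r = 0$ forces $A^T r$ to be a scalar multiple of $u$, and $u^T A^T r = 0$ then forces $A^T r = 0$. Invoking $\lambda_{\min}(AA^T) > 0$, we have $\|A^T r\|^2 = r^T A A^T r \geq \lambda_{\min}(AA^T)\,\|r\|^2$, so $A^T r = 0$ implies $r = 0$, and hence the loss vanishes; this is case (a).

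For the last assertion: every point of $\mathcal{S}$ has $g=0$, at which the loss equals $\tfrac{1}{2}\|y\|^2$. Thus if the loss at $(g,w)$ is strictly smaller than $\tfrac{1}{2}\|y\|^2$, i.e.\ $\|y\|^2 > \|Agw - y\|^2$, the point cannot lie in $\mathcal{S}$ and must fall in case (a).
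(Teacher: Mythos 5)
Your proof is correct and follows the same strategy as the paper: compute the two partial derivatives, split on whether $g=0$, and in the $g\neq 0$ branch combine the two stationarity conditions to get $A^T r=0$, then invoke $\lambda_{\min}(AA^T)>0$ to conclude $r=0$. You are somewhat more careful than the paper in expressing the gradients through $u=w/\|w\|$, and you explicitly supply the short argument for the final assertion (which the paper leaves implicit), but otherwise the two arguments coincide.
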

It is a folklore result that under gradient flow, the loss is non-increasing even in the nonconvex case \citep[see e.g.][]{rockafellar2009variational}.
For the WN gradient flow, we can make this folklore rigorous and, provided the scale parameter $g_t$ is lower bounded, show that the loss decreases at a \emph{geometric rate}.
\begin{lemma}[Rate of $\|r_t\|$] \label{lem:loss}
Under the setting of Lemma \ref{cor:flow}, we have the bounds:
\begin{align}\label{ulb}
 -\max\{g_t^2, c\}\|A^\top r_t\|^2\leq d[1/2\|r_t\|^2]/dt \leq -\min\{g_t^2, c\}\|A^\top r_t\|^2 \le 0.
\end{align}
This shows that $\|r_t\|$ is non-increasing. If for some $C>0$, $g_t>C$ for all $t$, then the loss decreases geometrically at rate $\min(C^2,c)$.
\end{lemma}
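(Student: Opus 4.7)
The plan is to differentiate the squared residual along the WN flow and then use the orthogonal decomposition induced by the constraint $\|w_t\|=1$. First I would note that the flow preserves $\|w_t\|=1$, because $\dot w_t \propto \Pw(\cdot)$ is by construction orthogonal to $w_t$, so $\Pw = I - w_t w_t^\top$ and the Pythagorean identity
\[
\|A^\top r_t\|^2 \;=\; (w_t^\top A^\top r_t)^2 + \|\Pw A^\top r_t\|^2
\]
holds throughout the evolution.

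Next I would compute $\dot r_t$ from $r_t = y - g_t A w_t$ via the product rule,
\[
\dot r_t \;=\; -\dot g_t\, A w_t \;-\; g_t\, A \dot w_t ,
\]
and take the inner product with $r_t$. Plugging in $\dot g_t = -c\,\nabla_g f = c\, w_t^\top A^\top r_t$ and $\dot w_t = g_t\,\Pw A^\top r_t$ from Lemma \ref{cor:flow} (using that the convention in the statement absorbs a sign into the shorthand for the gradients), the two cross terms organize themselves cleanly:
\[
\frac{d}{dt}\!\left[\tfrac12 \|r_t\|^2\right] \;=\; r_t^\top \dot r_t \;=\; -c\,(w_t^\top A^\top r_t)^2 \;-\; g_t^2\,\|\Pw A^\top r_t\|^2 .
\]
Sandwiching the two nonnegative pieces by $\min\{g_t^2,c\}$ and $\max\{g_t^2,c\}$ and using the Pythagorean identity above immediately yields the two-sided bound \eqref{ulb} and monotonicity of $\|r_t\|$.

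For the geometric decay, if $g_t > C$ for all $t$, then $\min\{g_t^2,c\}\ge \min\{C^2,c\}$ and the displayed bound becomes
\[
\frac{d}{dt}\!\left[\tfrac12 \|r_t\|^2\right] \;\le\; -\min\{C^2,c\}\,\|A^\top r_t\|^2 \;\le\; -\min\{C^2,c\}\,\lambda_{\min}(AA^\top)\,\|r_t\|^2,
\]
using that $A$ has full row rank $m$. Gronwall's inequality then gives
$\|r_t\|^2 \le \|r_0\|^2\,\exp\!\bigl(-2\min\{C^2,c\}\,\lambda_{\min}(AA^\top)\,t\bigr)$,
which is the claimed geometric rate (the rate constant $\min\{C^2,c\}$ stated in the lemma is the relevant factor, with the $\lambda_{\min}(AA^\top)$ prefactor absorbed).

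The only non-routine step is unpacking the flow equations with the paper's sign conventions so that the cross term $(\dot g_t)(w_t^\top A^\top r_t)$ lines up with $c\,(w_t^\top A^\top r_t)^2$; once that is done, everything reduces to the Pythagorean split of $\|A^\top r_t\|^2$ with respect to $w_t$ and $\Pw$. I do not expect any real obstacle beyond this bookkeeping.
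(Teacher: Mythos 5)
Your proof is correct and follows essentially the same route as the paper: differentiate $\tfrac12\|r_t\|^2$ along the flow, substitute the flow equations to get $-c(w_t^\top A^\top r_t)^2 - g_t^2\|\Pw A^\top r_t\|^2$ (which is exactly the paper's $-r_t^\top A[c\, w_t w_t^\top + g_t^2 \Pw]A^\top r_t$), sandwich with $\min\{g_t^2,c\}$ and $\max\{g_t^2,c\}$, and apply Gronwall with $\lambda_{\min}(AA^\top)$ for the geometric rate. You are slightly more explicit than the paper in spelling out the Pythagorean split $\|A^\top r_t\|^2 = (w_t^\top A^\top r_t)^2 + \|\Pw A^\top r_t\|^2$ that justifies the sandwich, but this is the same argument with the step made visible rather than a genuinely different approach.
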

How can we control the scale parameter? Perhaps surprisingly, we show that the scale parameter and the orthogonal complement of the weight vector are linked through an \emph{invariant}.




\begin{lemma} [Invariant] Assume $c>0$ in Condition \ref{cond:stepsize}. Under the setting from Lemma \ref{cor:flow}, let $\smash{w_t = w_t^\perp+w_t^{\parallel}}$ as defined in Definition \ref{def:space}. We have at time $t>0$,
\begin{align}
w_t^\perp= \exp\left({\frac{g_0^2-g_t^2}{2c}}\right) w_0^\perp \quad \text{and so}\quad \|w_t^\perp\|^2 \cdot  \exp(g_t^2/2c)
=
\|w_0^\perp\|^2 \cdot  \exp(g_0^2/2c).
\label{eq:inv1}
\end{align}\label{lemma:wperp}
\end{lemma}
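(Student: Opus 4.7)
The plan is to derive and then solve a scalar linear ODE for $w_t^\perp$ along the WN flow. First I would apply the orthogonal projector $P_N := I - A^\top(AA^\top)^{-1}A$ onto the null space of $A$ to both sides of the flow equation $\dot w_t = -g_t\, \mathcal{P}_t \nabla_w f(w_t,g_t)$ from Lemma \ref{cor:flow}. By that lemma, and by working in the rPGD frame where $\|w_t\|\equiv 1$, the projector reduces to $\mathcal{P}_t = I - w_t w_t^\top$. Because $P_N A^\top = 0$ and $\nabla_w f = A^\top r_t$, the piece $P_N A^\top r_t$ vanishes and only the radial correction survives, yielding
\[\frac{d w_t^\perp}{dt} \;=\; g_t\, (w_t^\top A^\top r_t)\, w_t^\perp.\]

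The second step is to match the scalar coefficient with $\dot g_t$. Since $\dot g_t = -c\,\nabla_g f(w_t,g_t) = -c\, w_t^\top A^\top r_t$, we get $w_t^\top A^\top r_t = -\dot g_t/c$, and hence
\[\frac{d w_t^\perp}{dt} \;=\; -\frac{g_t\, \dot g_t}{c}\, w_t^\perp \;=\; -\frac{1}{2c}\, \frac{d(g_t^2)}{dt}\, w_t^\perp.\]
This is a linear vector ODE whose rate is a perfect time derivative depending only on $t$, so I can integrate it componentwise (equivalently, multiply through by the integrating factor $\exp(g_t^2/(2c))$) from $0$ to $t$ to obtain
\[w_t^\perp \;=\; \exp\!\left(\frac{g_0^2 - g_t^2}{2c}\right) w_0^\perp,\]
from which the stated norm-squared conservation law follows immediately by taking squared norms.

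I do not expect a serious obstacle: the proof is essentially an invariant computation. The one point to be careful about is that the simplification $\mathcal{P}_t = I - w_t w_t^\top$ uses $\|w_t\|=1$, which holds in the rPGD formulation by construction and in the WN formulation once one tracks $w_t/\|w_t\|$, as guaranteed by Lemma \ref{cor:flow}. Conceptually, the null-space component of $w_t$ can only evolve by rescaling along the radial direction $w_t$, and the WN flow ties this rescaling rate precisely to the derivative of $g_t^2$ via the coefficient $c$, producing the stated invariant linking $g_t$ and $w_t^\perp$.
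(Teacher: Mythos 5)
Your proof is correct and follows essentially the same route as the paper's: project the WN flow onto the null space of $A$, use $P_N A^\top = 0$ to kill the ``data'' direction so that only the radial correction $g_t (w_t^\top A^\top r_t)\, w_t^\perp$ survives, identify $w_t^\top A^\top r_t = -\dot g_t/c$, and integrate the resulting scalar linear ODE with the integrating factor $\exp(g_t^2/(2c))$. One small remark: when you square the vector identity $w_t^\perp = \exp((g_0^2-g_t^2)/(2c))\,w_0^\perp$ you in fact obtain $\|w_t^\perp\|^2\exp(g_t^2/c)=\|w_0^\perp\|^2\exp(g_0^2/c)$, so the factor in the paper's second displayed formula in \eqref{eq:inv1} should read $g^2_t/c$ rather than $g^2_t/(2c)$; your vector form and derivation are the correct ones.
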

Lemma \ref{lemma:wperp} shows that the orthogonal complement $w_t^\perp$ can change during the WN flow dynamics. This is the key property of WN that can yield additional regularization. Lemma \ref{lemma:wperp} also implies that $\|w_t^\perp\|^2 \cdot  \exp(g_t^2/2c)$ is invariant along the path.
If we initialize with small $|g_0|$ and $|g_t|$ is greater than $|g_0|$ (we will describe the dynamics of $g_t$ in the next part), then $\|w_t^{\perp}\|^2$ will decrease, and we get close to the minimum norm solution. This is in contrast to gradient descent and flow, where $\|w^{\perp}\|^2$ is preserved (see e.g., ~\citep{hastie2019surprises}). 

The invariant \eqref{eq:inv1} in the optimization path holds for certain more general settings. Specifically, it holds for linearly parametrized loss functions that only depend on a small dimensional linear subspace of the parameter space (e.g., overparametrized logistic regression). See Appendix \ref{sec:beyond}. 
Equipped with the above lemmas, we can discuss the solution and implicit regularization effect of the WN flow.
\begin{theorem}[WN flow Solution]\label{thm:convergence}
Assume Condition \ref{cond:stepsize} and $\lambda_{\min}>0$. 
Suppose we initialize the WN flow at $g_0,w_0$, such that $\|w_0\|=1$. 
We have that either
(a) the loss converges to zero, or
(b) the iterates $(g_t,w_t)$ converge to a stationary point in $\mathcal{S}$ as defined in Lemma \ref{lem:stat_p_loss}. In case (a), we characterize the solutions based on $g_t$:
\begin{itemize}
    \item[\textbf{Part I.}] %
If $c>0$, and the loss converges to zero, the solution can be expressed as  \begin{align}\lim_{t\to \infty }g_{t}w_{t} = x^*+g^* w_{0}^{\perp}\exp\left(\frac{g_0^2-g^{*2}}{2c}\right).\label{eq:sol}
    \end{align}
   \item[\textbf{Part II.}] 
   If $c=0$ and $A$ is orthogonal, i.e., $AA^\top =I$, then $w_t\to w^*$. If $A$ is not orthogonal, then the flow still converges to a point $\tilde w_0$ in the row space of $A$ (i.e,  $\tilde w_0^\perp=0$). When restarting the WN flow with $c>0$ from $g_0,\tilde w_0$, then $(g_0,\tilde w_0)\to (g^*,w^*)$.
\label{thm:rPGD_flow_with_fixg}
\end{itemize} 
\end{theorem}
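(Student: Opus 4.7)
The proof naturally splits into establishing the dichotomy, then characterizing the limit in each of the two cases.

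First I would settle the dichotomy between cases (a) and (b). By Lemma \ref{lem:loss}, $\|r_t\|^2$ is nonincreasing and nonnegative, so it converges to some $L_\infty\ge 0$. If $L_\infty=0$ we are in case (a). Suppose $L_\infty>0$ and $c>0$. Since $A$ has full row rank, $\|A^T r_t\|^2 \ge \lambda_{\min}\|r_t\|^2 \ge \lambda_{\min} L_\infty>0$, so the upper bound in \eqref{ulb} forces $\min(g_t^2,c)\to 0$, i.e.\ $g_t\to 0$. Because $\|w_t\|\equiv 1$ is preserved by the WN flow (direct computation: $d\|w_t\|^2/dt = -2g_t w_t^\top \mathcal{P}_t A^T r_t =0$), the trajectory is bounded. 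Applying LaSalle's invariance principle together with the stationary-point classification in Lemma \ref{lem:stat_p_loss} places every $\omega$-limit point in $\mathcal{S}$. Since $f$ is a polynomial (hence analytic), the Kurdyka--Łojasiewicz inequality upgrades limit-point convergence to convergence of the full trajectory to a single point in $\mathcal{S}$, giving case (b).

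For Part I, assume $c>0$ and $L_\infty=0$. Then $Ag_t w_t\to y$. Decompose $g_t w_t = g_t w_t^{\parallel} + g_t w_t^{\perp}$ as in Definition \ref{def:space}. Since $A$ is a bijection from the row space onto $\mathbb{R}^m$ (by $\lambda_{\min}>0$), $A(g_t w_t^{\parallel}) = A(g_t w_t) \to y = Ax^*$ forces $g_t w_t^{\parallel}\to x^*$. By Lemma \ref{lemma:wperp}, $w_t^{\perp}$ is an explicit function of $g_t$, namely $w_t^{\perp}=w_0^{\perp}\exp((g_0^2-g_t^2)/(2c))$. Combined with $\|w_t\|^2=\|w_t^{\parallel}\|^2+\|w_t^{\perp}\|^2=1$ and $\|g_t w_t^{\parallel}\|^2\to g^{*2}$, one obtains a scalar equation determining the limit of $g_t$, and substituting into the explicit perpendicular formula yields
\begin{equation*}
\lim_{t\to\infty} g_t w_t = x^* + \Big(\lim_{t\to\infty} g_t\Big)\, w_0^{\perp}\exp\!\Big(\tfrac{g_0^2-(\lim g_t)^2}{2c}\Big),
\end{equation*}
which recovers the stated formula. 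For the sufficient condition, if $\|y\|^2>\|Ag_0 w_0-y\|^2$, then monotonicity gives $\|r_t\|^2<\|y\|^2$ for all $t$; since every point of $\mathcal{S}$ has $g=0$ and hence loss exactly $\|y\|^2/2$, case (b) is ruled out and we must be in case (a).

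For Part II with $c=0$, the scale $g_t\equiv g_0$ is frozen and only $w_t$ evolves on the unit sphere under $dw_t/dt=-g_0 \mathcal{P}_t A^T r_t$. When $AA^T=I$, $\nabla_w f=g_0 P_V w - A^T y$ lies entirely in the row space $V$, and a direct differentiation shows $w_t^{\perp}=\alpha(t)\,w_0^{\perp}$ for some scalar $\alpha(t)$. Using the Lyapunov function $w_t^\top A^T y$ (which is monotone along the flow and maximized at $w^*$), together with the compactness of the unit sphere and LaSalle, yields $w_t\to w^*$. For general $A$, the same projected-gradient-flow analysis identifies $\omega$-limit points as critical points of $w\mapsto \|Ag_0 w-y\|^2$ on the sphere, which necessarily have $w_t^{\perp}\to \tilde w_0^{\perp}$ satisfying $A^T(Ag_0 \tilde w_0-y)\parallel \tilde w_0$; one then shows $\tilde w_0\in\mathrm{row}(A)$. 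Restarting with $c>0$ from $(g_0,\tilde w_0)$ with $\tilde w_0^{\perp}=0$: by Lemma \ref{lemma:wperp} the invariant fixes $w_t^{\perp}\equiv 0$ for the entire second phase, so the dynamics reduce to a flow in the row space of $A$, on which the loss is strongly convex (by $\lambda_{\min}>0$) and converges to its unique zero-loss minimizer, which is $(g^*,w^*)$.

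The main obstacles are two. (i) Upgrading $\omega$-limit-set statements into honest trajectory convergence in the nonconvex reparametrization: in Part I this requires a KL/Łojasiewicz argument, and in Part II with $c=0$ on the sphere a handcrafted Lyapunov function seems necessary since the generic LaSalle argument only pins down the limit set. (ii) In Part I, reconciling the implicit scalar equation for $\lim g_t$ with the explicit closed form in the theorem statement is delicate; the cleanest way is to substitute the invariant and the residual equation directly, rather than trying to solve the scalar equation in closed form.
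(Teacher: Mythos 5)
Your proposal follows the same three-part skeleton as the paper's proof: (i) use monotonicity of $\|r_t\|^2$ plus the stationary-point classification to establish the dichotomy, (ii) for Part~I decompose $g_t w_t$ into row-space and orthocomplement pieces, feed in Lemma~\ref{lemma:wperp}, and identify the limit, and (iii) for Part~II fix $g$, analyze the flow on the sphere, argue the $\omega$-limit set lies in the row space, and then restart. The differences are in the tactics you use at a few junctures, and two of them are worth flagging.

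On the dichotomy, you go further than the paper. The paper's own proof explicitly concedes that it only establishes convergence to the \emph{set} $\mathcal{S}$, not to a single stationary point (``we have not shown that $w_t$ converges to a specific stationary point''), even though the theorem statement asserts point convergence. Your Kurdyka--\L{}ojasiewicz upgrade actually repairs that gap. It is legitimate here because the WN flow is a genuine Riemannian gradient flow: $\dot g = -c\,\partial_g f$ and $\dot w = -\mathrm{grad}_{S^{d-1}} f$, i.e.\ gradient flow of the analytic function $f$ on the product manifold $\mathbb{R}\times S^{d-1}$ with the constant diagonal metric $c^{-1}\,dg^2 \oplus (\text{round metric})$, and the \L{}ojasiewicz convergence theorem applies to analytic gradient flows on such manifolds. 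You should state this explicitly (that it is a gradient flow in a fixed metric), since for a flow that is \emph{not} a gradient flow the \L{}ojasiewicz argument would not be automatic.

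In Part~I, you are right to be nervous. Writing $g_\infty = \lim g_t$, the constraint $\|w_t^\parallel\|^2 + \|w_t^\perp\|^2 = 1$ and $g_t\|w_t^\parallel\|\to g^*$ together with the invariant force $g_\infty^2\bigl(1-\|w_0^\perp\|^2 e^{(g_0^2-g_\infty^2)/c}\bigr)=g^{*2}$, which has $g_\infty=g^*$ as a solution only when $w_0^\perp=0$. So $g_t\to g^*$ is not exactly true when $w_0^\perp\neq 0$, and the displayed formula with $g^*$ in both places should really read $g_\infty$. The paper's proof simply asserts $g_t\to g^*$ in one sentence, so it shares this gap; your version is more candid about it, but your closing claim that substituting ``recovers the stated formula'' should be softened to an explicit acknowledgment that the stated formula holds only to leading order in $\|w_0^\perp\|$ (or exactly with $g_\infty$ in place of $g^*$).

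In Part~II you replace the paper's direct subsequence argument (showing $w_t$ aligns asymptotically with $-A^T r_t/\|A^T r_t\|$, then a compactness plus sign argument) by a Lyapunov function $w_t^\top A^T y$. This can be made to work in the orthogonal case, but it is not ``obviously'' monotone: expanding $\tfrac{d}{dt}\,w_t^\top A^T y$ with $AA^\top = I$ gives a quantity of the form $(\|y\|^2 - s^2) - g_0\,s(1-a)$ with $s=y^\top Aw_t$ and $a=\|Aw_t\|^2$, whose sign requires the Cauchy--Schwarz bound $s^2 \le a\|y\|^2$ and the hypothesis $g_0<g^*$ to pin down. You should spell this out; as written the monotonicity is asserted rather than shown. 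The paper's subsequence argument sidesteps this by working directly with the structure of the critical points, which is arguably more robust, though also somewhat terse for general $A$.

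Minor: the sign conventions for $\nabla_w f$ and the flow equations in Lemma~\ref{cor:flow} and its proof are internally inconsistent in the paper (the final displayed formulas in the appendix have the opposite sign to what the preceding derivation produces). You appear to have inherited one of these sign choices without rederiving; it does not affect the structure of your argument, but be aware of it when checking the direction of monotonicity in the Lyapunov computation.
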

We defer the proofs of Lemmas \ref{lem:loss}, \ref{lemma:wperp} and Theorem \ref{thm:convergence} to Appendix \ref{pf:rpgdf}. Part I of Theorem \ref{thm:convergence} shows that, if we initialize with $g_0^2\leq g^{*2}$ and we are not stuck at $\mathcal{S}$, the WN flow will converge to a solution that is close to the minimum norm solution. Compared with GD where the final solution is $x_t = x^*+ g^*w_0^{\perp}$, WN flow has smaller component in the orthocomplement of the row space of $A$. In contrast, if $g_0^2> g^{*2}$, then WN flow can converge to a solution that is \emph{farther} from $x^*$ than GD. 


{Part II} in Theorem \ref{thm:convergence} shows a distinction between orthogonal and general $A$. For orthogonal $A$, even fixing the scale $g_0$ we can converge to the direction of the minimum norm solution. Although we do not directly recover $g^*$ in the flow, this can be recovered as $|g^*| = \|y\|$. For general $A$ with fixed $g$, we do not necessarily converge to the right direction, only to the row span of $A$. However, if we run the flow with $c = 0$ until convergence, and then turn on the flow for $g$ (i.e. set $c > 0$), we converge to the minimum norm solution. The results for discrete time presented later mirror this. See Figure \ref{fig:simple} for an illustration. We mention that the flow for the fixed $g$ case is well known \citep[See e.g.][Section 1.6]{helmke2012optimization}), in the special case that the matrix $A$ is square.  
Theorem \ref{thm:convergence} provides no rate of convergence. 
By our results on the rate of decay of $\|r_t\|$, and by controlling $g_t$ using the invariant, we can provide a convergence rate below. 


\begin{theorem}[Convergence Rate]\label{convergence_rate} Suppose that Condition \ref{cond:stepsize} with $c > 0$ holds, 
that $\|w_0\|=1$, 
and that the smallest eigenvalue $\lambda_{\min}$ of $AA^\top $ is strictly positive.  If
$g_0^2>2c\log(1/\|w_0^\perp\|)$, the loss along the WN flow path $(g_t,w_t)$ satisfies $f(w_T,g_T)\leq \ep$ after time $T\ge T_0$, where
\[T_0= \frac{\log(f(w_0,g_0)/\ep)}{\lambda_{\min} \min\left\{2c\log\|w_0^\perp\|+g_0^2,c\right\}}.\]
\end{theorem}


The theorem states that the loss converges geometrically as long as $g_0^2$ is above the required threshold $2c\log(1/\|w_0^\perp\|)$.
The theorem focuses on convergence, not implicit regularization. 
However, as described above, the regularization is favorable if $|g_0|<|g^*|$.




\begin{figure}[t]
\begin{minipage}[c]{0.4\textwidth}
        \centering
    \includegraphics[width=1.\linewidth]{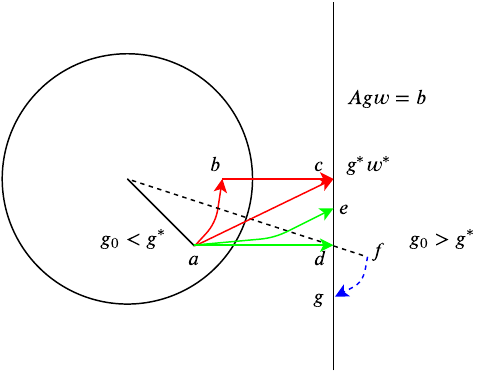}
\end{minipage}
\begin{minipage}[c]{0.57\textwidth}
    \caption{Consider the function $f(w_1, w_2, g)$ with $A = a \in \mathbb{R}^{1 \times 2}$.
    Then GD converges to $d$, while rPGD and WN could result in a point ($e$ or $c$) closer to minimum norm $c$ depending on the stepsize schedule of $g$. Part I in Theorem \ref{thm:convergence} suggests that  rPGD and WN  will follow the path  $a\to e$ if $\gamma_t$ and $\eta_t$ converge to zero  at the same rates, and Part II implies the red path $a\to b\to c$ to the minimum norm solution (if $g_0$ is fixed for a certain time, and updated later). The optimal path $a\to c$ is taken when $g$ is updated in a careful way. 
    On the other hand, starting with $g_0>g^*$, for instance at $f$, \eqref{eq:sol} shows the limit is $g$, further away from $g^*w^*$. } 
    \label{fig:simple}
    \end{minipage} 
\end{figure}

\textbf{A Concrete Example.} To gain more insight, we provide here a simple example  (see also Figure \ref{fig:simple}). Suppose we have a two-dimensional parameter $w$, and we make a 1-dimensional observation using the matrix $A=[1,0]$, and $y=1$. Then, the equation we are solving is $gw[1]=1$ (where square brackets index coordinates of vectors), and the minimum norm solution is $w=[1,0]^\top $, with $g^*=1$. Our results guarantee that the WN flow converges to either (1) a zero of the loss, or (2) to a stationary point such that $g=0$ and $w^\top  A^\top  y=0$. The second condition reduces to $w[1] \cdot y=0$. Now, if $y\neq 0$ (which is the typical case), then this reduces to $w[1]=0$, and since $\|w\|=1$, we have two solutions $w[2] = \pm 1$. So this leads to two spurious stationary points $(g,w) = (0,[0,\pm 1]^\top )$, which are not global minima. The loss value at these points is $1$, and so if we start at any point such that the loss is less than one, then we converge to a global mininum. If $y=0$, then this leads to infinitely many stationary points, i.e. all of those with $g=0$, but these turn out to be global minima.

Suppose moreover that we start with $w_0=[0,1]^\top $ and set $c=1$.  Suppose now that we start with some $g_0 \neq 0$. Then WN flow converges to a solution $gw = [1,\exp([g_0^2-1]/2)]^\top.$ If $g_0$ is relatively small, this quantity is close to $x^*=[1,0]^\top$, closer than the gradient flow solution $[1,1]$.
\section{Discrete Time Analysis}


\label{sec:orthogonal}
In this section, we switch to discrete time. It turns out that analyzing rPGD is more tractable than WN, so we will focus on rPGD. Since the two algorithms collapse to the same flow in continuous time, their dynamics should be ``close" in finite time, especially in the small stepsize regime. 
We show that rPGD with properly chosen learning rates converges close to the min norm solution even when the initialization is \emph{far away from the origin}. 
We study rPGD based on the intuition that $\|w_t^{\perp}\|$ decreases after the normalization step.

\paragraph{Orthogonal Data Matrix.} Consider first the simple case where the feature matrix $A$ has orthonormal rows, i.e., $AA^\top=I$. Our strategy for rPGD to reach the minimum $\ell_2$-norm solution is  \emph{to use  the optimal stepsize for $w$ and a small stepsize for $g$ such that  $g_t^2< g^{*2}$ for all iterations}. The key intuition is that with a small stepsize, the loss stays positive and ensures the direction $w_t$ has sufficient time to find $w^*$.  On the contrary, if we use a large stepsize for $g$, then it is possible for $g_t$ to be greater than $g^*$ so that $w_t$ can potentially converge to the wrong direction. 

\begin{condition}\label{con:stepsize1}(Two-stage learning rates)
We update $w$ with its optimal step-size  $\eta_t=1/g_t^2$.\footnote{The Hessian for $w$ in problem \ref{eqn:main} is $\nabla_{w}^2f(w,g) = g^2A^\top A$. For orthogonal $A$, $\lambda_{\max}(\nabla_{w}^2f(w,g)) =g^2$.} For the  stepsize of $g_t$, we use two constant values: (a) $\smash{\gamma_t=\gamma^{(1)}}$ when $0\leq t\leq T_1$; (b) $\smash{\gamma_t= \gamma^{(2)}}$  when  $t\geq T_1+1$, for a $T_1$ specified below. 
\end{condition}
\begin{theorem}[Convergence for Orthogonal Matrix $A$]\label{thm:ort}
Suppose the initialization satisfies $0< g_0 < g^*$, and that $w_0$ is a vector with $\|w_0\|=1$. Let $\delta_0 =(g^*)^2 -(g_0)^2$. 
Set an error parameter $\ep>0$ and
the stepsize given in Condition \ref{con:stepsize1} with a hyper-parameter $\rho\in(0,1]$ for $\gamma^{(1)}$. Running the rPGD algorithm, we can reach  $\|w_{T_1}^\perp\|\leq \ep$ and $g_{T_1}^2\leq g^{*2}-\rho \delta_0$ after $T_1$ iterations, and   $\|w_{T}^\perp\|\leq \ep$ and $\|Ag_{T}w_{T}-b \|^2\leq 3\ep g^{*2}$ after $T = T_1 + T_2$ iterations, if we set stepsizes as follows.
\begin{itemize}
    \item[(a)] Set $\gamma^{(1)} = \mathcal{O}\left(\frac{\rho }{ \log(1/\ep)} \left(\frac{g_0}{g^{*}}\right)^2\log\left((1-\rho)\frac{g^{*}}{g_0} +\rho \right)\right)$ and  $\gamma^{(2)}\leq \frac{1}{4}$. Then we have \[
T_1= \mathcal{O}\left(\frac{(g^*)^2}{\rho\delta_0} \log\left( \frac{1}{\ep}\right)\right); \quad T_2 = \mathcal{O}\left( \frac{1}{\gamma^{(2)}}\log\left(\frac{(\rho 
\delta_0/g^{*2})^2}{\ep }\right) \right). \]
    \item[(b)] Set $\gamma^{(1)}=0$ and
    $\gamma^{(2)}< \frac{1}{4}$. Then we have 
    \[ T_1 = \mathcal{O}\left(\frac{g_0^2}{\delta_0}\log\left(\frac{1}{\ep}\right)\right);\quad T_2 =\mathcal{O}\left(\frac{1}{\gamma^{(2)}}\log \left( \frac{\sqrt{\delta_0/g^{*2}}}{\ep}\right)\right). \]
\end{itemize}
\label{thm:orthogonal1}
\end{theorem}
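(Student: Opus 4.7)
The plan is to exploit $AA^\top = I$ to reduce the iteration to an essentially 2D system. The minimum-norm solution is $x^* = A^\top y$, so $g^* = \|y\|$ and $w^* = A^\top y/g^*$ lies in the row space $V$ of $A$. With $P = A^\top A$ the projector onto $V$, the unit-step choice $\eta_t = 1/g_t^2$ makes the pre-projection update collapse to
\[ v_t \;=\; (I - P)w_t + (1/g_t) A^\top y \;=\; w_t^\perp + (g^*/g_t)\,w^*. \]
It follows inductively that for $t \ge 1$, $w_t$ lies in the 2D plane spanned by $w^*$ and the fixed unit vector $u := w_0^\perp/\|w_0^\perp\|$ (any $V$-component of $w_0$ orthogonal to $w^*$ is erased in one step). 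Writing $w_t = \alpha_t w^* + \beta_t u$ with $\alpha_t^2 + \beta_t^2 = 1$, the rPGD recursion reduces to the scalar system
\[ \beta_{t+1} = \frac{\beta_t}{\sqrt{\beta_t^2 + (g^*/g_t)^2}}, \quad \alpha_{t+1} = \frac{g^*/g_t}{\sqrt{\beta_t^2 + (g^*/g_t)^2}}, \quad g_{t+1} = g_t(1-\gamma_t \alpha_t^2) + \gamma_t \alpha_t g^*, \]
with $\|w_t^\perp\| = |\beta_t|$ and $\|A g_t w_t - y\|^2 = (g_t \alpha_t - g^*)^2$.

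\textbf{Phase 1.} The pivotal observation is that whenever $g_t \le g^*$, $|\beta_{t+1}| \le (g_t/g^*)|\beta_t|$. In case (b), $g_t \equiv g_0$, so $|\beta_t| \le |\beta_0|(g_0/g^*)^t$, and the stated $T_1$ follows from $\log(g^*/g_0) = \tfrac12\log(1+\delta_0/g_0^2) \gtrsim \delta_0/g_0^2$. In case (a), I plan a simultaneous induction on two invariants: $g_t^2 \le g^{*2} - \rho\delta_0$ (to preserve the $\beta$-contraction) and $|\beta_t| \le |\beta_0|\prod_{s<t}(g_s/g^*)$. From the $g$-update, $g_{t+1} - g_t \le \gamma^{(1)}\alpha_t(g^* - \alpha_t g_t) \le \gamma^{(1)} g^*$; telescoping and converting to squares gives a bound on $g_{T_1}^2-g_0^2$ in terms of $\sum_{t<T_1}\alpha_t(g^*-\alpha_t g_t)$. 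Substituting the closed form $\alpha_t^2 = (g^*/g_{t-1})^2/(\beta_{t-1}^2 + (g^*/g_{t-1})^2)$ and the contraction on $|\beta_t|$ converts this sum into a $\log$-integral producing the factor $\log((1-\rho)g^*/g_0 + \rho)$. Choosing $\gamma^{(1)}$ at the claimed threshold closes the induction.

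\textbf{Phase 2.} Starting from $|\beta_{T_1}| \le \ep$ and $g_{T_1}^2 \le g^{*2} - \rho\delta_0$, I first verify that $g_t \le g^*$ persists for $t > T_1$: the rearrangement $g^* - g_{t+1} = (g^* - g_t)(1 - \gamma^{(2)}\alpha_t^2) - \gamma^{(2)}\alpha_t g^*(1-\alpha_t)$, combined with $\alpha_t \ge \sqrt{1-\ep^2}$ and $\gamma^{(2)} \le 1/4$, keeps the right-hand side non-negative. Hence $|\beta_t|$ stays non-increasing and $\le \ep$ throughout phase 2. The $g$-dynamics then satisfies $g^* - g_{t+1} \le (1-\gamma^{(2)}/2)(g^* - g_t) + O(\gamma^{(2)}\ep^2 g^*)$, so geometric convergence yields $|g_T - g^*| \lesssim \sqrt{\ep}\,g^*$ in the stated $T_2$. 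A one-line estimate $|g_T\alpha_T - g^*| \le |g_T - g^*| + g^*(1-\alpha_T) \lesssim \sqrt\ep\,g^*$ then gives $\|Ag_T w_T - y\|^2 \le 3\ep g^{*2}$.

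\textbf{Main obstacle.} The technical heart is Phase 1, case (a), where the $(\alpha_t,\beta_t,g_t)$ dynamics are genuinely coupled: one must self-consistently choose $\gamma^{(1)}$ so that $\gamma^{(1)}T_1$ caps $g_t^2$ below $g^{*2}-\rho\delta_0$, while $T_1$ itself (controlled by the $\beta$-contraction at rate $g_t/g^*$) is large enough to drive $|\beta_{T_1}|\le\ep$. The unusual $\log((1-\rho)g^*/g_0+\rho)$ factor arises as an integrated-in-$t$ bound on $\alpha_t(g^*-\alpha_t g_t)$ along the trajectory, as $\alpha_t$ interpolates between $\sim g_0/g^*$ near $t=0$ and $\approx 1$ near $t=T_1$. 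Phase 2 is then comparatively routine since $\alpha$, $\beta$, and $g$ decouple near the solution.
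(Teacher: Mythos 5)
Your 2D reduction is a clean and equivalent way to parametrize the paper's quantities ($\alpha_t = \|Aw_t\|$, $\beta_t = \|w_t^\perp\|$, and $\|v_t\|^2 = \beta_t^2 + (g^*/g_t)^2$), and the contraction $|\beta_{t+1}| \le (g_t/g^*)|\beta_t|$ you isolate is exactly the engine the paper uses (Lemma \ref{lem:Aw}). Case (b) is essentially right, and your Phase 2 outline tracks the paper's (though closure of the invariant $g_t \le g^*$ requires a sharper constraint on $\gamma^{(2)}$ than just $\gamma^{(2)}\le 1/4$, cf.\ \eqref{eqn:gamma_ub}).

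The genuine gap is in Phase 1, case (a), which you flag as the obstacle but then sketch a plan that would not go through as stated. Your heuristic that ``$\alpha_t$ interpolates between $\sim g_0/g^*$ near $t=0$ and $\approx 1$ near $t=T_1$'' is false: from $\alpha_{t+1} = (g^*/g_t)/\sqrt{\beta_t^2 + (g^*/g_t)^2} = 1/\sqrt{1 + \beta_t^2(g_t/g^*)^2}$ and $g_t \le g^*$, $|\beta_t|\le 1$, you get $\alpha_t \ge 1/\sqrt{2}$ for all $t\ge 1$, so there is no regime where $\alpha_t$ is small. Consequently the sum $\sum_{t<T_1}\alpha_t(g^* - \alpha_t g_t)$ does not telescope into a logarithm as claimed; naively each term is of order $g^* - g_t$, and the sum scales like $T_1\delta_0$, not a log. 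The factor $\log\bigl((1-\rho)g^{*2}/g_0^2 + \rho\bigr)$ in the stepsize bound does not arise from an integrated bound on the summand; it arises because one must force the multiplicative estimate $g_{T_1}^2 \le g_0^2 \exp\bigl(\gamma^{(1)}(\|v_0\|^2-1)T_1\bigr)$ to lie below $g^{*2}-\rho\delta_0$, and the log of that target ratio is where the factor comes from. The key ingredient making that exponential bound tight is the monotonicity $\|v_t\|^2 \le \|v_{t-1}\|^2$ (in your variables, both $\beta_t^2$ and $(g^*/g_t)^2$ are decreasing), which lets one bound every factor in the product $\prod_t\bigl(1 + \tfrac12\gamma^{(1)}(\|v_{t-1}\|^2-1)\bigr)$ by the first one. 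That observation is what decouples the $(\alpha,\beta,g)$ dynamics you correctly identify as coupled, and it is missing from your proposal. Once you replace the ``log-integral'' plan with this monotonicity-plus-product argument, the induction closes as you intend.
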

We restate the theorem with the explicit forms of $T_1$ and $T_2$, along with the proof, in Appendix \ref{sec:proof_orthogonal} and \ref{sec:proof_orthogonal1}. The theorem  requires knowing $g^*$, which can be approximated by $\|y\|$ (as $g^*= \|y\|$). When all parameters other than $\ep$ are treated as constants, this shows that rPGD converges to the minimum norm solution with the same rate $\log(1/\ep)$ as standard GD starting from the origin. However, the constants in front the $\log(1/\ep)$ can be large: e.g., in case (a), $(g^*)^2/\rho\delta_0$ can be $\gg 1$ if $\rho$ is small or if $|g_0|/|g^*|\approx 1$.
This first $T_1$ iterations allow $w_t$ to "find" $w^*$, while the remaining $T_2$ allow $g_t$ to converge to $g^*$. Both cases show an intrinsic tradeoff between $T_1$ and $T_2$: a larger $\delta_0$ (being far from $g^*$) leads to faster convergence in the first phase for  $\smash{\|w_t^\perp\|}$ (i.e. smaller $T_1$), but slower convergence in $g_t$ and loss (i.e. larger $T_2$). Specifically, notice that $\delta_0$ is in the denominator of $T_1$ but in the numerator of $T_2$. 

Our proof shows that $g_t$ is always increasing for any $g_0>0$ (c.f. Lemma \ref{lem:g_increase}). Moreover, $\|w_t^{\perp}\|$ decreases at a geometric rate,
$\|w_{t+1}^{\perp}\|^2 \leq ({g_{t}^2}/{g^{*2}})\|w_t^{\perp}\|^2$, as long as $|g_t|$ is not too close to $|g^{*}|$ (c.f. Lemma \ref{lem:Aw}  or Equation \eqref{eq:wperp_decrease}). This is why the condition $g_{T_1}^2\leq g^{*2}-\rho \delta_0$ is needed, ensuring that $g_t$ is far away from $g^*$ in all steps before $T_1$. This is also why we require a stepsize for $g_t$ of order $1/\log(1/\ep)$ (c.f. Equation \ref{eq:control_g}), which is smaller than the constant stepsize in usual GD. Here $\rho$ leads to a tradeoff between $T_1$ and $T_2$: a smaller $\rho$ results in larger $T_1$ but smaller $T_2$, vice versa. When $\rho\approx \ep$, we  have $T_1=\mathcal{O}(1/\ep)$ up to log factors, (a slower rate) and $T_2=\mathcal{O}\left(1\right)$. \footnote{Note that the bound for $T_1$ could be tightened, possibly to $\log(1/\ep)$, by using refined analysis at the step from \eqref{eq:tighter1} to  \eqref{eq:wperp_decrease}.} A constant order $\rho$ leads to a faster $\log(1/\ep)$ rate. However, we choose to state the result for the entire range of $\rho\in(0,1]$ for completeness. When $\rho = 1$, the stepsize $\gamma^{(1)}$ becomes zero, hence $g_t$ does not change. In this case, we can get a stronger
result for $T_1$ (stated in (b)) using a slightly different method of proof, improving the bounds of case (a) respectively with a factor of $(g^*/g_0)^2>1$ for $T_1$.






We remark that for orthogonal $A$ with the optimal stepsize ($1/g_t^2$) for $w_t$ and $g_0\neq 0$, we have  $Aw_{t+1} = {Ag^* w^*}/{(g_t\|v_t\|)} \neq 0$ (c.f. Lemma \ref{lem:v_norm}).
Thus we can escape the saddle points and reach the global minimum, unlike in continuous time where we can be stuck at the stationary points $\mathcal{S}$. 

We reiterate that our motivation is not to outperform other methods (e.g. GD starts from zero) in search of the minimum norm solution, but to characterize the regularization effect of weight normalization and shed light on the empirical observation that fixing the scalar $g$ and only optimizing the directions $w$ in training the last layer of neural networks can improve generalization \citep{goyal2017accurate,xu2019understanding}. This is, to our knowledge, the first kind of theory on how to control the learning rates of parameters in weight normalization such as to converge to minimum norm solutions \emph{for initialization not close to origin}, which may have beneficial generalization properties.
\paragraph{General Data Matrix.} Inspired by the analysis for orthogonal $A$, we now study general data matrices. As we have seen from the orthogonal $A$ case, the stepsize for the scale parameter should be extremely small or even $0$ to make $\|w_t^{\perp}\|$ small. Thus, for simplicity, we focus on fixing $g:=g_0$ and update only $w$ using rPGD so that the orthogonal component $w^{\perp}$ decreases geometrically until $\|w_{T_1}^\perp\|\leq \ep$. In addition, we notice from the analysis in Theorem \ref{thm:ort} that updating $g_t$ and $w_t$ separately after $t>T_1$ (i.e., reaching small $\smash{\|w_t^\perp\|}$) shows no advantage over GD using $x=gw$. Thus, the best strategy to find $g^*w^*$ is to use rPGD only updating $w_t$ (so $g_t=g_0<g^*$) and then apply standard GD after $T_1$ once we have $\|w_{T_1}^\perp\|\leq \ep$. We focus on the complexity of $T_1$ in the remainder, as the remaining steps are standard GD, which is well understood.

Even though we fix $g$, the problem is still non-convex because the projection is on the \emph{sphere} (rather than the \emph{ball}), a non-convex surface. However, suppose we can ensure that after each update, the gradient step $v_t = w_t - \eta_t \nabla_w f(w_t,g_t)$ has norm $\|v_t\|\geq 1$.  Then the following two constrained non-convex problems are equivalent: 
\[ \min_{w\in\mathbb{R}^d} \|Ag_0w-y\|^2 \text{ s.t. } w\in\{w, \|w\|=1  \} \quad \Leftrightarrow\quad \min_{w\in\mathbb{R}^d} \|Ag_0w-y\|^2 \text{ s.t. } w\in\{w, \|w\|\leq1 \}\]
Thus our analysis will focus on showing that $\|v_t\|\ge1$. Note that, without loss of generality we can always scale $A$ so that its largest singular value is one.
\begin{proposition}[General Matrix $A$]
\label{thm:non-orthogonal}
Fix $\delta > 0$, and fix a full rank matrix $A$ with $\lambda_{\max}(AA^\top)=1$. With a fixed $g=g_0$ satisfying $g_0\leq[g^*\lambda_{\min}(AA^\top)]/(2+\delta)$, we can reach a solution with $\|w_{T_1}^\perp\|\leq \ep$ in a number of iterations $$T_1= \log\left( \frac{\|w_0^{\perp}\|}{\ep}\right)/\log(1+\delta).$$
\end{proposition}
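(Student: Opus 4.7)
The key structural observation is that the gradient with respect to $w$ at fixed $g=g_0$ satisfies
\[
\nabla_{w}f(w_t,g_0) = g_0 A^\top(Ag_0 w_t - y) \in \mathrm{range}(A^\top),
\]
so the gradient step $v_t = w_t - \eta_t \nabla_w f(w_t, g_0)$ leaves the orthogonal complement unchanged, $v_t^\perp = w_t^\perp$. After the normalization $w_{t+1} = v_t/\|v_t\|$ this gives the exact recursion $\|w_{t+1}^\perp\| = \|w_t^\perp\|/\|v_t\|$. If one can show $\|v_t\|\geq 1+\delta$ at every step, then $\|w_t^\perp\|$ contracts by $1/(1+\delta)$ per iteration and induction delivers $\|w_{T_1}^\perp\|\leq \|w_0^\perp\|/(1+\delta)^{T_1}\leq \ep$ exactly for the claimed $T_1$. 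As a bonus, $\|v_t\|\geq 1+\delta>1$ automatically validates the sphere/ball equivalence used in the paragraph before the statement.

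The remaining task is the uniform bound $\|v_t\|\geq 1+\delta$ with the stepsize $\eta_t = 1/g_0^2$ inherited from Condition~\ref{con:stepsize1} (still optimal since $\lambda_{\max}(\nabla_w^2 f)=g_0^2\lambda_{\max}(AA^\top)=g_0^2$). Writing $y = g^*Aw^*$, the update becomes
\[
v_t = w_t - A^\top A w_t + (g^*/g_0)A^\top A w^* = w_t + A^\top A\bigl((g^*/g_0)w^* - w_t\bigr).
\]
Introduce the thin SVD $A = U\Sigma V^\top$ and decompose $w_t = w_t^\perp + V\alpha_t$ with $\alpha_t = V^\top w_t$, so $\|\alpha_t\|^2 = 1 - \|w_t^\perp\|^2 \leq 1$. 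Since $w^*$ lies in the row space of $A$ with $\|w^*\|=1$, write $w^* = V\beta$ with $\|\beta\|=1$. Direct substitution yields
\[
v_t = w_t^\perp + V\bigl[(I-\Sigma^2)\alpha_t + (g^*/g_0)\Sigma^2\beta\bigr],
\]
and the two summands are orthogonal because $w_t^\perp$ lies in the null space of $A$, i.e. the orthogonal complement of the column span of $V$.

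The lower bound on $\|v_t\|$ then follows from the reverse triangle inequality applied to the $V$-component. With $\sigma_m^2 = \lambda_{\min}(AA^\top)$ and $\sigma_1^2 = \lambda_{\max}(AA^\top) = 1$, one has $\|(I-\Sigma^2)\alpha_t\|\leq (1-\sigma_m^2)\|\alpha_t\|\leq 1-\sigma_m^2$, while $(g^*/g_0)\|\Sigma^2\beta\|\geq (g^*/g_0)\sigma_m^2\geq 2+\delta$ by the hypothesis $g_0\leq g^*\sigma_m^2/(2+\delta)$. Hence
\[
\|v_t\|^2 \geq \bigl\|(I-\Sigma^2)\alpha_t + (g^*/g_0)\Sigma^2\beta\bigr\|^2 \geq \bigl((2+\delta)-(1-\sigma_m^2)\bigr)^2 \geq (1+\delta)^2,
\]
and combining with the first paragraph finishes the proof. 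The main obstacle I anticipate is precisely this lower bound: a priori the two terms in $v_t$ could cancel, so some structural argument is needed. The right idea is to decompose into the SVD basis, exploit that $w^*$ lives entirely in the row space with unit norm, and use the gap assumption on $g_0$ to ensure the signal term $(g^*/g_0)\Sigma^2\beta$ dominates the error term $(I-\Sigma^2)\alpha_t$ by the required margin. Once this uniform contraction is established, the iteration count $T_1$ follows by elementary induction.
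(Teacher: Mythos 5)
Your proof is correct and follows essentially the same route as the paper's own argument in Lemma~\ref{lem:anyw} of Appendix~\ref{sec:general}: decompose $v_t$ via the SVD into the row-space and null-space components, observe $v_t^\perp = w_t^\perp$ so that $\|w_{t+1}^\perp\| = \|w_t^\perp\|/\|v_t\|$, and then lower bound $\|v_t\|$ by the reverse triangle inequality, obtaining $\|v_t\|\geq (g^*/g_0)\lambda_{\min}(AA^\top) - (1-\lambda_{\min}(AA^\top)) \geq 1+\delta$ under the hypothesis on $g_0$. The only cosmetic difference is notational (you use $\sigma_m^2$ for $\lambda_{\min}(AA^\top)$ while the paper overloads $\sigma_m$ for the same quantity) and that you plug in the slightly stronger stated bound $g_0\leq g^*\lambda_{\min}/(2+\delta)$ directly, where the paper's lemma proves the result under the marginally weaker condition $g_0\leq g^*\lambda_{\min}/(2+\delta-\lambda_{\min})$.
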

The proof is in Appendix \ref{sec:general}. The proposition implies that if we set a small $g_0=\mathcal{O}(g^*\lambda_{\min}(AA^\top))$ for general $A$ and $w^*$, running rPGD with fixed $g_0$ helps regularize the iterates. After starting from $w_{T_1}$, we can converge close to the minimum norm solution using standard GD. If the eigenvalues of $A$ are "not too spread out", we can get a better condition for $g_0$ using concentation inequalities for eigenvalues. See inequality \eqref{eq:g02}  in Proposition \ref{thm:non-orthogonal1} for more details. 


\section{Discussion} 
\label{relw}
{\bf Limitation of our work.}
It is important to recognize the limitations of our work. First, our theoretical work only addresses weight normalization (not batch, layer, instance or other normalization methods), and only concerns the setting of linear least squares regression. While this may seem limiting, it is still significant: even in this setting, the problem is not understood, and leads to intriguing insights. In fact there is some recent work on Neural Tangent Kernels arguing overparametrized NNs can be equivalent to linear problems, see e.g., \citep{jacot2018neural,du2018gradient,lee2019wide}, etc. Second, the continuous limit is only an approximation; however it leads to elegant and interpretable results, which are moreover also reflected in simulations. Third, some of our results concern a two-stage algorithm where the scale is fixed for the first stage; nevertheless, our results on the standard ``one-stage" algorithm in continuous time suggest such discrete-time results extend to the situation where the scale is not fixed, but slowly-varying for the first stage.

{\bf Related Work.}
While there is a large literature on weight normalization and implicit regularization (see Sec \ref{relw}), our work differs in crucial ways. We study the overparametrized case and characterize the implicit regularization for a broad range of initializations (unlike works that study initialization with small norm). Also, we prove convergence and characterize the solution explicitly (unlike works such as \citep{gunasekar2018characterizing} that assume convergence to minimizers). Below we can only discuss a small number of related works.

\textit{Implicit regularization.} 
It has been recognized early that optimization algorithms can have an implicit regularization effect, both in applied mathematics \citep{strand1974theory}, and in deep learning \citep{morgan1990generalization,neyshabur2014search}. It has been argued that ``algorithmic regularization" can be one of the main differences between the perspectives of statistical data analysis and more traditional computer science \citep{mahoney2012approximate}. 

Theoretical work has shown that gradient descent is a form of regularization for exponential-type losses such as logistic regression, converging to the max-margin SVM for separable data \citep{soudry2018implicit, PBL19}, as well as for non-separable data \citep{ji2019implicit}. Similar results have been obtained for other optimization methods \citep{gunasekar2018characterizing}, as well as for matrix factorization \citep{gunasekar2017implicit,arora2019implicit}, sparse regression \citep{vavskevivcius2019implicit}, and connecting to ridge regression \citep{ali2018continuous}. For instance, \citep{li2018algorithmic} showed that GD with small initialization and small step size finds low-rank solutions for matrix sensing. There have also been arguments that neural networks perform a type of self-regularization, some connecting to random matrix theory \citep{martin2018implicit,mahoney2019traditional}. Popular methods for regularization include weight decay (a.k.a., ridge regression) \citep{dobriban2018high,liu2019ridge}, dropout \citep{wager2013dropout}, data augmentation \citep{chen2019invariance}, etc.

\textit{Convergence of normalization methods.} \citep{salimans2016weight} argued that their proposed weight normalization (WN) method, optimizing $x=gw/\|w\|_2$ over $g\ge 0$ and $w\in \mathbb{R}^d$, increases the norm of $w$, and leads to robustness to the choice of stepsize. \citep{hoffer2018norm} studied normalization with weight decay and learning-rate adjustments.  \citep{du2018gradient} proved that GD with WN from randomly initialized weights could recover the right parameters with constant probability in a one-hidden neural network with Gaussian input. \citep{pmlr-v97-ward19a} connected the WN with adaptive gradient methods and proved the sub-linear convergence for both GD and SGD.  \citep{cai2019quantitative} showed that for under-parametrized least squares regression (which is different from our over-parametrized setting), batch normalized GD converges for arbitrary learning rates for the weights, with linear convergence for constant learning rate. Similar results for scale-invariant parameters can be found in  \citep{arora2018theoretical} with more general models, extending to the non-convex case. \citep{pmlr-v89-kohler19a} proved linear convergence of batch normalization in halfspace learning and neural networks with Gaussian data, using however parameter-dependent learning rates and optimal update of the length $g$. \citep{luo2018towards} analyzed batch normalization by using a basic block of neural networks and concluded that batch normalization has implicit regularization. \citep{dukler2020optimization} discussed the convergence of two-layer ReLU network with weight normalization under the NTK regime. However, none of the above give the invariants we do.

\textit{Nonlinear Least Mean Squares (NLMS) .} 
Normalization methods are possibly related to the Nonlinear Least Mean Squares (NLMS) methods from signal processing \citep[see e.g.][]{proakis2001digital,haykin2002least, haykin2005adaptive,hayes2009statistical}. NLMS can be viewed as an online algorithm where the samples $a_t,y_t$ ($a_t$ are the rows of $A$, $y_t$ are the entries of $y$) arrive in an online fashion, and we update the iterates as $x_{t+1}=x_t - \eta r_t a_t/\|a_t\|^2$, where $r_t = y_t - x_t^\top a_t$ are the residuals. There is a connection to randomized Kaczmarcz methods \cite{strohmer2009randomized}. However, it is not obvious how they are related to weight normalization or rPGD/WN, e.g., these methods are under online setting, while rPGD/WN are offline. 
\section*{Broader Impact}

Our work is on the foundations and theory of machine learning. One of the distinctive characteristics of contemporary machine learning is that it relies on a large number of "ad hoc" techniques, that have been developed and validated through computational experiments. For instance, the optimization of neural networks is in general a highly nonconvex problem, and there is no complete theoretical understanding yet as to how exactly it works in practice. Moreover, there a large number of practical "hacks" that people have developed that help in practice, but lack a solid foundation. Our work is about one of these techniques, weight normalization. We develop some nontrivial theoretical results about it in a simplified "model". This work does not directly propose any new algorithms. But we hope that our work will have an impact in practice, namely that it will help practitioners understand what the WN method is doing (important, as people naturally want to understand and know "why" things work), and possibly in the future, help us develop better algorithms (here the principle being that "if you understand it you can improve it", which has been useful in engineering and computer science for decades). 
\section*{Acknowledgments}
The authors thank Nathan Srebro and Sanjeev Arora for constructive suggestions.
XW, ED, SG, and RW thank the Institute for Advanced Study for their hospitality during the Special Year on Optimization, Statistics, and Theoretical Machine Learning.  XW, SW, ED, SG, and RW thank the Simons Institute for their hospitality during the Summer 2019 program on the Foundations of Deep Learning. RW acknowledges funding from AFOSR and Facebook AI Research. This material is based upon work supported by the National Science Foundation under Grant No. DMS-1638352.

\bibliography{ref}
\bibliographystyle{plainnat} 
\newpage
\appendix
\newpage

\section{Adaptive Regularization} \label{sec:adareg}


We illustrate that the two Algorithms  can heuristically be viewed as GD on adaptively $\ell_2$-regularized regression problems. 
The regularization parameter changes for each iteration in the algorithms:

\begin{equation*}
   \min_{x_{t+1}} \left(\frac{1}{2}\|Ax_{t+1}-Ax^*\|^2 + \lambda(w_t,g_t,\eta,A,y)\|x_{t+1}\|^2\right).
\end{equation*}


However, it is difficult to characterize the behavior of $\lambda_t$ in general. 

{\bf WN.} Let $x_t=\frac{g_tw_t}{\|w_t\|}$. Notice that:
\begin{align*}
    \frac{g_{t+1} w_{t+1}}{\|w_{t+1}\|}
    = & \frac{g_t}{\|w_t\|} w_t + (\frac{g_{t+1}}{\|w_{t+1}\|} - \frac{g_t}{\|w_t\|})w_t  - \eta_t \frac{g_{t+1}}{\|w_{t+1}\|} \frac{g_t}{\|w_t\|} (I - \frac{w_t w_t^T}{\|w_t\|^2}) A^T r
\end{align*}

This can be translated to the update of $x_t$ as
\begin{align*}
    x_{t+1} = & x_t - \eta_t \frac{g_{t+1}g_t}{\|w_{t+1}\|\|w_t\|} A^T r - \left( 1 -\frac{g_{t+1}\|w_t\|}{g_t\|w_{t+1}\|} 
     - \eta_t \frac{g_{t+1}}{\|w_{t+1}\|} \langle \frac{w_t}{\|w_t\|^2}, A^T r\rangle \right) x_t
\end{align*}

{\bf rPGD.} Let $x_t=g_tw_t$. The update of $w_t$ in Algorithm~\ref{alg:main} is
\begin{equation}
    w_{t+1} = \frac{1}{\|v_t\|}(w_t -\eta_tg_tA^T(Ax_t -Ax^*)).
\end{equation}
We can now write the update of $x_{t+1} = g_{t+1}w_{t+1}$ as
$$
x_{t+1} =  x_t - \frac{\eta_tg_tg_{t+1}}{\|v_t\|} A^T(Ax_t-Ax^*) - (1-\frac{g_{t+1}}{g_t\|v_t\|})x_t. 
$$
Both updates can be viewed as a gradient step on the following $\ell_2$-regularized regression problem, with specific choices of $\lambda_t$ at iteration $t$:
\begin{equation*}
    \frac{1}{2}\|Ax_{t+1}-Ax^*\|^2 + \lambda_t\|x_{t+1}\|^2,
\end{equation*}
We see that the regularization parameter changes for each iteration for both WN and rPGD, as follows: 
\begin{align*}
   \text{(WN)} \quad \lambda_t =& \frac{\|w_{t+1}\|\|w_t\|}{g_{t+1} g_t} (1 -\frac{g_{t+1}\|w_t\|}{g_t\|w_{t+1}\|} 
     - \eta \frac{g_{t+1}}{\|w_{t+1}\|} \langle \frac{w_t}{\|w_t\|^2}, A^T r\rangle)\\
   \text{(rPGD)} \quad \lambda_t =&   (g_t\|v_t\|-g_{t+1})/(\eta_tg_t^2g_{t+1}).
\end{align*}
 The regularization parameter $\lambda_t$ is highly dependent on $g_t$, $g_{t+1}$ and the input matrix $A$. However, it is difficult to characterize the behavior of $\lambda_t$ in general. In particular, we require the parameters $g_t$, $g_{t+1}$, $w_t$ and $w_{t+1}$ updated in a way that $\lambda_t>0$. 
For the simpler setting of orthogonal $A$, we can see for rPGD that: 1) If the learning rate of $g$ is small enough, we will have $g_{t+1}<g_t\|v_t\|$, which means that $\lambda_t>0$; 2) When $g_tw_t$ is close to $g^*w^*$, we will have $\|v_t\|\approx 1$, and $g_{t+1}\approx g_t$, which means that $\lambda_t\approx 0$.

\section{Proof of Lemma~\ref{cor:flow}}
\label{pf:flow}

\begin{proof}
{\bf rPGD.} First we start with the reparametrized Projected Gradient Descent algorithm. The gradients for rPGD are
\begin{equation*}
    \nabla_{w}f(w, g) = g \nabla L(gw),\;  \nabla_{g}f(w, g) = w^T\nabla L(gw),
\end{equation*}
First, the gradient step on $g$ clearly leads to the gradient flow for $g$. Second, for the update on $w$, we expand all terms to first order in $\eta$. Let $a_t=\nabla_{w_t}f(w_t, g_t)$. We start by expanding the squared Euclidean norm
\begin{align*}
\|w_t - \eta\nabla_{w_t}f(w_t, g_t)\|_2^2 &= \|w_t - \eta a_t\|_2^2\\
&=  \|w_t\|^2 - 2\eta w_t^\top a_t + \eta^2 \|a_t\|_2^2\\
&=  1 - 2\eta w_t^\top a_t + O(\eta^2).
\end{align*}

On the last line, we have used that the iterates are normalized, so $\|w_t\|^2=1$.

Now, we can use the expansion $(1+x)^{1/2} = 1+ x/2 + O(x^2)$, valid for $|x|\ll1$,  on the right hand side of the above display, to get
\begin{align*}
\|w_t - \eta\nabla_{w_t}f(w_t, g_t)\|_2
&=  1 - \eta w_t^\top a_t + O(\eta^2).
\end{align*}

Next we use this expansion in the update rule for the weights:
\begin{align*}
&w_{t+1} = \frac{w_t - \eta \nabla_{w_t}f(w_t, g_t)}
{\|w_t - \eta\nabla_{w_t}f(w_t, g_t)\|_2}\\
&= \frac{w_t - \eta a_t}
{1 - \eta w_t^\top a_t + O(\eta^2)}
\\
&= (w_t - \eta a_t) \cdot (1 + \eta w_t^\top a_t+ O(\eta^2)).
\end{align*}
In the last line, we have used the expansion
\begin{align*}
\frac{1}{1 - \eta x + O(\eta^2)}
&= 1 + \eta x+ O(\eta^2)
\end{align*}
valid for $|\eta|\ll1$ and $x$ of a constant order. Recall now that for an arbitrary vector $w$, we defined $P_{w^\perp} = I-\frac{ww^\top}{\|w\|_2^2}$ as the projection into the orthocomplement of $w$. Since $\|w_t\|=1$, we have $\Pw = I-w_tw_t^\top$. By expanding the product and rearranging, keeping only the terms of larger order than $\eta$, we find
\begin{align*}
&w_{t+1}
= w_t - \eta a_t+ \eta w_t w_t^\top a_t+ O(\eta^2)
\\
&= w_t - \eta \Pw a_t+ O(\eta^2).
\end{align*}

Taking $\eta\to 0$ and substituting the expression for $a_t$ and $\nabla_{w}f(w, g)$, 
we obtain the rPGD flow dynamics for $w$, i.e., $\dot w_t= -g_t \Pw\nabla L (g_tw_t)$. The update rule for $g$ follows directly.

{\bf WN.} We now study weight normalization \cite{salimans2016weight}. The WN objective function can be written using the loss function $L(x) = \|Ax-y\|^2/2$ as
$$h(g,w) =
L\left(g \frac{w}{\|w\|}\right).
$$
The discrete time algorithm is thus updated as
\begin{align*}
 v_t &= w_t/\|w_t\|\\
 r_t &= y-g_t A v_t\\
 g_{t+1} &= g_t- c\eta \cdot\langle{v_t, \nabla L\left(g_t\frac{w_t}{\|w_t\|} \right) \rangle}\\
 w_{t+1} &= w_t - \eta \cdot g_t \cdot \Pw \frac{1}{\|w_t\|}\nabla L(g_t \frac{w_t}{\|w_t\|}).
\end{align*}
When $\eta\to 0$, we recover the gradient flow on $g_t$ and $w_t$, i.e., recalling $v_t= w_t/\|w_t\|$
\begin{align*}
 \dot g_{t} &=-c\cdot\langle{v_t,\nabla L(g_t v_t)\rangle}\\
 \dot w_{t} &=- g_t \cdot \Pw \frac{1}{\|w_t\|}\nabla L(g_t v_t)=- \frac{g_t}{\|w_t\|} \cdot \Pw \nabla L(g_t v_t).
\end{align*}
Note the fact $\frac{d\|w_t\|^2}{dt}=2w_t^T\dot w_t=0$
which gives $\|w_t\|=\|w_0\| = 1$.


Hence, for WN with initialization $\|w_0\|=1$, we have that $g_t,v_t$ evolves exactly equivalent to the rPGD flow. The final formula for WN and rPGD that we will analyze is:
\begin{align*}
 \dot g_{t} &=-c\cdot w_t^T A^T r_t\\
 \dot w_{t} &=- g_t \cdot \Pw A^Tr_t.
\end{align*}
\end{proof}

\section{Remaining Proofs for Section \ref{sec:main}}
\label{pf:rpgdf}

\subsection{Proof of Lemma \ref{lem:stat_p_loss}}
\begin{proof}
At a stationary points of the loss, we have, with $r = y - Agw$
\begin{align*}
 \partial_g h(w,g) &=w^T A^T r = 0\\
 \partial_w h(w,g) &= g \cdot P_{w^\perp} A^Tr = 0.
\end{align*}
If $g\neq 0$, then we get $P_{w^\perp} A^Tr = 0$. By adding this up with the first equation, we get $A^Tr = 0$. Using that the smallest eigenvalue of $AA^T$ is nonzero, we conclude that $r=0$. Hence this is stationary point with zero loss, which is also a global minimum.

Else, if $g=0$, we see that $r = y - Agw = y$. Hence in this case, the stationary points belong to the set $S:= \{(g,w): g =0, y^T Aw = 0\}.$ This finishes the proof.
\end{proof}

\subsection{Proof of Lemma \ref{lem:loss} (dynamics of loss $\|r_t\|$)} 
We have
\begin{align*}
d[1/2 \|r_t\|^2]/dt &= r_t^T \dot r_t 
= r_t^T A d(g_t w_t)/dt\\
&= r_t^T A [\dot g_t w_t+ g_t \dot w_t]\\
&= - r_t^T A[c\cdot w_tw_t^T A^T r_t+ g_t^2 \Pw A^T r_t]\\
&= - r_t^T A[c\cdot w_tw_t^T+ g_t^2\Pw] A^T r_t
\end{align*}
Thus, 
\begin{align*}
d[1/2\|r_t\|^2]/dt \leq -\min\{g_t^2, c\}\|A^Tr_t\|^2,\quad d[1/2\|r_t\|^2]/dt\geq -\max\{g_t^2, c\}\|A^Tr_t\|^2.
\end{align*}
We get a geometric convergence of the loss to zero, as soon as we can get a lower bound on $g_t^2$, which will be discussed below.  If we have $g_t^2\ge C^2$ for some constant $C^2>0$, we have
\begin{align*}
\min(g_t^2,c) r_t^T A A^T r_t
\ge \min(C^2,c) \lambda_{\min}(A A^T) \|r_t\|^2
\end{align*}
and so with $k:=\min(C^2,c) \lambda_{\min}(A A^T)$,
\begin{align*}
d[1/2 \cdot \|r_t\|^2]/dt &\le - k \|r_t\|^2 \quad \Rightarrow \quad \|r_t\|^2\le
 \exp(-kt) \|r_0\|^2.
\end{align*}

\subsection{Proof of Lemma \ref{lemma:wperp}}


Define $P^\perp $ the projection into the orthocomplement of the row span of $A$,  hence $w^\perp = P^\perp w$ and $P^\perp A^T=0$. For simplicity, write $h(w_t,g_t) =h_t$ 
        \begin{equation*}
        \begin{split}
         \frac{d{w_t^\perp}}{dt}    &= P^\perp \frac{d{w_t}}{dt} 
            = -\frac{g_t}{\norm{w_t}}P^\perp(I-\frac{w_tw_t^\top}{\norm{w_t}^2}) \nabla_w h_t
            = \frac{g_t}{\norm{w_t}} P^\perp\frac{w_tw_t^\top}{\norm{w_t}^2} \nabla_w h_t 
            =  \frac{g_t}{\norm{w_t}^2} P^\perp{w_t} \left(\nabla_{g} h_t\right)  \\
           & =  -\frac{g_t}{\norm{w_t}^2}  P^\perp {w_t}\left(\frac{1}{c} \frac{dg_t}{dt}\right) 
            =  -\frac{1}{2}\frac{P^\perp{w_t}}{\norm{w_t}^2 c}\frac{d g^2_t}{dt} = -\frac{1}{2}\frac{w_t^\perp}{\norm{w_0}^2c}\frac{d g^2_t}{dt}.
        \end{split}
        \end{equation*}
The last equality due to the fact that
    ${d\|w_t\|^2}/{dt}=2w_t^T\dot w_t=0$, which is also observed in \citep{tian2019luck,tian2019over}.
Solving the dynamics $\frac{d{w_t^\perp}}{dt} =-\frac{1}{2}\frac{w_t^\perp}{c\norm{w_0}^2}\frac{d g^2_t}{dt} $ with $\|w_0\|=1$ results in \eqref{eq:inv1}.

\subsection{Proof of Theorem~\ref{thm:convergence} (Convergence in the general case)}

\subsubsection{Proof that either the loss converges to zero, or the iterates $(g_t,w_t)$ converge to the stationary set $\mathcal{S}$ defined in Lemma \ref{lem:stat_p_loss}.}


We start with the ODE for the loss, 
\begin{align}\label{ol}
d[1/2 \|r_t\|^2]/dt
&= - r_t^T A[c\cdot w_tw_t^T+ g_t^2\Pw] A^T r_t.
\end{align}
This shows that the loss is decreasing, possibly not strictly.

If we have $g_t^2$ bounded away from zero, then the loss converges to zero geometrically. Thus, the only remaining case is when $g_t \to 0$.

Now, we have that the iterates take the form $x_t = g_t w_t$, and $\|w_t\|=1$ are bounded. Hence, as $g_t\to 0$, we must have $x_t \to 0$.

Since the loss is continuous, we also have $\|r_t\|^2 = \|y- Ag_t w_t\|^2 \to \|y\|^2$.

Suppose that the loss does not converge to zero. Since the loss is decreasing, this means that $\|r_t\|\to c>0$ for some constant $c$. 

From Equation \eqref{ol}, this implies that 
\begin{align*}
(r_t^T Aw_t)^2 &\to 0.
\end{align*}
Else, if this quantity is bounded away from zero, then $d[1/2 \|r_t\|^2]/dt<-c'$ for some $c'>0$, which would show $\|r_t\|$ decreases unboundedly, and is a contradiction.



Thus, we conclude that if the residual $r_t$ does not converge to zero, then the iterates $x_t = g_t w_t$ converge to zero: $x_t \to 0$. Moreover, $g_t \to 0$ and $y^T Aw_t \to 0$. Given that $w_t$ is bounded, this shows that $(g_t,w_t)$ converges to the \emph{set} of those stationary points of the loss characterized by 
$$S:= \{(g,w): g =0, y^T Aw = 0\}.$$
Note specifically that we have not shown that $w_t$ converges to a specific stationary point, but rather only that it converges to the set $S$ of stationary points given above.



This result does not give a rate of convergence, so it is weaker than the result when $g_t$ is bounded away from zero. However, it is also more general. Initializing such that the loss is less than the loss at zero can be achieved conveniently, because we can calculate the value of the loss.

\subsubsection{Part I: Characterizing the solution when $c>0$}

The characterization follows by tracking the dynamics of the components in the row span of $A$ and its orthocomplement separately. The component in the row span converges to $x^*$. The normalized component $w_t^\perp$ in the orthocomplement is characterized by the invariant from the prior lemma. The scale of that component converges to $g^*$, as $g_t \to g^*$. This gives the desired result.

\subsubsection{Part II: Fixed  $g_t$, i.e. $c=0$}
For the fixed $g_t$ case, we have
\begin{align*}
d[1/2 \cdot \|r_t\|^2]/dt &= r_t^T \dot r_t 
= r_t^T A d(g_t w_t)/dt\\
&= r_t^T A [\dot g_t w_t+ g_t \dot w_t]\\
&= r_t^T A g_0 \dot w_t\\
&= - r_t^T A g_0 \Pw g_0 A^T r_t\\
&= - g_0^2 \|\Pw A^T r_t\|^2.
\end{align*}
Now, it follows that $\|r_t\|$ is a non-increasing quantity, which is also strictly decreasing as long as $\|\Pw A^T r_t\|>0$. It also follows that as $t\to\infty$, we have $\|\Pw A^T r_t\|\to0$. Now, since $g_0<g^*$, it follows that $A^T r_t$ has a norm that is strictly bounded away from zero, i.e., $\|A^T r_t\|>c_0>0$ for some $c_0>0$. So, we do not have the residual going to zero in this case.
Hence, this can also be written as $w_t = c_t \cdot A^T r_t/\|A^T r_t\|$, for some sequence of scalars $c_t$ with $\liminf c_t^2>0$. 

Hence, $w_t$ becomes asymptotically parallel to the row space of $A$. Now, since $w_t$ lives on the compact space of unit vectors, considering any subsequence of it, it also follows that it has a convergent subsequence. Let $w$ be the limit along any convergent subsequence. It follows that $w=\pm A^T r/\|A^T r\|$. Next we note that the solution with $+$ actually \emph{maximizes} the loss over $\|w\|=1$. Hence, the only possible solution is $w=- A^T r/\|A^T r\|$. Since this holds for any convergent subsequence, it follows that $w_t$ itself must converge. 

Now we get a more explicit form for the solution $w$. We can say that $w$ is the unique unit norm vector such that $w=- \hat{c} A^T r$, for some $\hat{c}>0$. Then we can write that equation as
\begin{align*}
w &= - \hat{c}A^T(Agw-y)\\
(I + cgA^TA) w &= \hat{c}A^Ty\\
w &= (\hat{c}^{-1}I + gA^TA)^{-1} A^Ty.
\end{align*}
Thus, $w$ is the unique vector of the above form such that $\|w\|=1$. This can be viewed as a form of implicit regularization. Namely, $w$ is the unique vector, for which there is some regularization parameter $\hat{c}$ such that, $w$ minimizes the regularized least squares objective 
$$\frac12 \|Agw-y\|^2 +\frac{g}{2\hat{c}}\|w\|^2$$
and $\|w\|=1$. This $w$ will in general not be the pointing in the direction of the optimal solution. We recall that the optimal solution $w^*$ has the form $w^* = A^\dagger b/\|A^\dagger y\|$, where $A^\dagger$ is the pseudoinverse of $A$. We recall that the action of the pseudoinverse can be characterized as the limit of ridge regularization with infinitely small penalization, i.e., $A^\dagger y = \lim_{\lambda\to0}(A^TA+\lambda I)^{-1} A^Ty$. For orthogonal $A$, we see that $w$ converges to the right direction, because $A^T=A^\dagger$. However, for general $A$, the flow does not in general converge to the min-norm direction.

Now, suppose we start the flow for both $g_t,w_t$ again from a point $w_0$ that belongs to the span of the row space of $A$ (call it $R$). Then, by the update rule for $w_t$, it follows that $w_t\in R$ for all $t$. Therefore, the derivative of the loss becomes
\begin{align*}
d[1/2 \cdot \|r_t\|^2]/dt &= r_t^T \dot r_t 
= r_t^T A d(g_t w_t)/dt\\
&= r_t^T A [\dot g_t w_t+ g_t \dot w_t]\\
&= - r_t^T A[w_tw_t^T A^T r_t+ g_t^2 \Pw A^T r_t]\\
&= - (r_t^T Aw_t)^2.
\end{align*}
From arguments similar to before, it follows that $r_t^T Aw_t\to 0$ as $t\to\infty$. Moreover, from a similar subsequence argument it also follows that $w_t\to w$ such that $r^TAw=0$. Since $\|w\|=1$ and $A$ has full row rank, it follows that $r=0$. Hence the flow converges to a zero of the loss. Moreover, since $w\in R$, it follows that this is the minimum norm solution.

\subsection{Proof of Theorem~\ref{convergence_rate} (Convergence rate)}

Note first that by \eqref{eq:inv1}, we have 
\begin{align}
     g_t^2=
2\log\|w_0^\perp\|+g_0^2-2\log\|w_t^\perp\|\ge 2\log\|w_0^\perp\|+g_0^2 \label{eq:lowwer}
\end{align} 
where the last inequality is due to the fact $\|w_t^\perp\| \le \|w_t\|=1$.
Thus, we have our  lower bound. From Lemma \ref{lem:loss}, we get the convergence rate for  this case.

\section{Beyond Linear Regression}\label{sec:beyond}
Here we illustrate that the invariant in the optimization path holds more generally than for linear regression, and specifically for certain general loss functions that only depend on a small dimensional subspace of the parameter space.
Let $L:\R^d \to \R$ be the loss function, and our goal is to solve
\begin{align}
\min_{x\in \R^d} L(x)\label{eq:main}
\end{align}
where $L(x)$ is differentiable and satisfies Assumption \ref{asmp:ld}.
\begin{assumption}[Low-dimensional gradient]\label{asmp:ld}
There exists a projection matrix $P\in\R^{d\times d} $ with rank $r<d$  such that 
$$
     (I-P)\nabla L(x) = 0 , \forall x\in \R^d.
$$
\end{assumption}
Let $ P^\perp= (I-P)$.  Assumption \ref{asmp:ld} is equivalent to the fact that the gradient of $L$ lives in the low-dimensional space given by the span of $P$, 
$\nabla L(x) \in span(P)$. This implies 
$$L(x)=L(Px)  \quad  \forall x\in\R^d.$$  This means that the objective only depends on the projection of $x$ into the span of $P$. To use the orthogonal projection in what follows, define $x^{\parallel} = P x$ and $x^{\perp} = P^\perp x$. For the undetermined linear regression, $P=A^\dagger A$ where $A^\dagger$ is the pseudo-inverse of the matrix $A$.
\begin{theorem}[WN flow Invariance for General Loss]
\label{thm:implicit}\label{thm:general}
Consider the loss function in \eqref{eq:main} with Assumption  \ref{asmp:ld}. The WN method transforms the loss function to $ h(g,w) =  L \left(g_t\frac{w_t}{\|w_t\|}\right)$. The WN gradient flow from Algorithm \ref{alg:wn} with initial condition $(w_0,g_0)$, started from $w_0$ with not necessarily unit norm, has the invariant
 \[w_t^\perp= \exp\left(\frac{g_0^2-g_t^2}{2\|w_0\|^2}\right) w_0^\perp.\]
\end{theorem}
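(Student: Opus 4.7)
The plan is to mimic the proof of Lemma~\ref{lemma:wperp}, but replace the specific projector $I - A^\dagger A$ for linear least squares by the abstract projector $P^\perp$ guaranteed by Assumption~\ref{asmp:ld}. Specifically, I would first write down the limiting flow for the WN reparametrization $h(g,w) = L(gw/\|w\|)$: by the chain rule, $\nabla_g h = \langle w_t/\|w_t\|,\, \nabla L(g_t w_t/\|w_t\|)\rangle$ and $\nabla_w h = (g_t/\|w_t\|)\, P_{w_t^\perp} \nabla L(g_t w_t/\|w_t\|)$, with $P_{w_t^\perp} = I - w_t w_t^\top/\|w_t\|^2$. Under the step-size convention from Condition~\ref{cond:stepsize}, this yields $\dot g_t = -c\,\nabla_g h$ and $\dot w_t = -\nabla_w h$.

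Next I would observe two crucial algebraic facts. First, $\|w_t\|$ is conserved: $\tfrac{d}{dt}\|w_t\|^2 = 2\langle w_t, \dot w_t\rangle = 0$, because $P_{w_t^\perp} w_t = 0$. Hence $\|w_t\| = \|w_0\|$ throughout the flow. Second, applying $P^\perp$ to the $w$-dynamics and using Assumption~\ref{asmp:ld} in the form $P^\perp \nabla L = 0$ gives
\begin{equation*}
\dot w_t^\perp = P^\perp \dot w_t = \frac{g_t}{\|w_t\|^3}\, w_t^\perp\, (w_t^\top \nabla L),
\end{equation*}
since the only surviving contribution comes from the rank-one term $-P^\perp w_t w_t^\top/\|w_t\|^2 = -w_t^\perp w_t^\top/\|w_t\|^2$ acting on $\nabla L$. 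Now the scalar $w_t^\top \nabla L = \|w_t\|\cdot \nabla_g h = -\|w_t\|\dot g_t/c$, so the orthocomplement dynamics collapses to the scalar ODE
\begin{equation*}
\dot w_t^\perp = -\frac{g_t \dot g_t}{c\|w_0\|^2}\, w_t^\perp = -\frac{1}{2c\|w_0\|^2}\frac{d(g_t^2)}{dt}\, w_t^\perp.
\end{equation*}

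Finally, since $w_t^\perp$ satisfies a scalar linear ODE, it remains a (time-varying) multiple of $w_0^\perp$; integrating $d\log\|w_t^\perp\|/dt$ from $0$ to $t$ yields exactly the stated invariant (with the constant $c$ absorbed consistently with the convention used in the theorem). I do not anticipate a real obstacle here: the only substantive ingredient is Assumption~\ref{asmp:ld}, which is precisely what makes the linear-regression proof go through, and the cancellation between the normal component of $\nabla_w h$ and $\nabla_g h$ is automatic from the WN parametrization. The only small care needed is to track the factor $\|w_0\|^2$ that was set to one in Lemma~\ref{lemma:wperp}, and to notice that $\|w_t\|$ remains constant without any assumption that $\nabla L$ is of least-squares type.
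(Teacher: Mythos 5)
Your proof is correct and follows essentially the same route as the paper: project the $w$-flow by $P^\perp$, use Assumption~\ref{asmp:ld} to kill $P^\perp\nabla L$, rewrite the surviving rank-one term via $\nabla_g h = -\dot g_t/c$, and integrate the resulting scalar ODE using conservation of $\|w_t\|$. Your observation that the denominator should read $2c\|w_0\|^2$ rather than $2\|w_0\|^2$ is also right --- the paper's own derivation in Lemma~\ref{lemma:wperp} produces the factor $c\|w_0\|^2$, so the theorem statement has an implicit $c=1$ (or a dropped $c$).
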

The proof of the above theorem is a simple extension of the proof in Part I of Lemma \ref{lemma:wperp} 
with $A^\top r_t=\nabla L \left(g_t\frac{w_t}{\|w_t\|}\right)$. This result suggests that the reason for the invariance is that the original objective function before reparametrization only depends on a smaller dimensional space.

\section{Remaining Proofs for Section \ref{sec:orthogonal}}
\subsection{Proof of Theorem \ref{thm:orthogonal1} Case (a)}
\label{sec:proof_orthogonal}
We restate the case (a) of Theorem \ref{thm:orthogonal1} here.
\begin{theorem}[Updating $g$ in Phase I] \label{partI}
 Suppose we  initialize with $g_0< g^*$. Let $\delta_0 = (g^*)^2 - (g_0)^2>0$, $0<\rho<1$ and $\delta< \frac{\varepsilon}{2g^*+\varepsilon}$. 
Suppose the number of iterations $T_1$ and $T_2$ is of the order:
\[
T_1= \left(1+\frac{(g^*)^2}{\rho\delta_0} \right)\log\left(\frac{1-\|Aw_0\|^2}{\delta^2}\right) = \mathcal{O}\left(\frac{(g^*)^2}{\rho\delta_0} \log\left( \frac{1}{\delta^2}\right)\right), 
\]
\[T_2 =\frac{1}{\gamma^{(2)}}\log \left(\frac{\rho \delta_0}{g^{*2}(1-\ep)\sqrt{\ep}}\right)
 = \mathcal{O}\left( \frac{1}{\gamma^{(2)}}\log\left(\frac{\rho (g^{*2}-g_0^2)}{g^{*2}\sqrt{\varepsilon}}\right) \right).\]
Fix $g_1=g_0$ at the first step. For iterations $t= 1, \ldots, T_1-1$, set the stepsize for $g$ to 
\[\gamma^{(1 )}< \min\left\{\frac{\|Ag_0w_0\|^2}{2 T_1((g^*)^2-\|Ag_0w_0\|^2)} \log\left((1-\rho)\frac{g^{*2}}{g_0^2}+\rho \right), \frac{1}{2(1+\|Aw_0\|^2)}\right\}\]
and to any $\gamma^{(2)}\leq \frac{1}{4}$ for $t= T_1,T_1+1,\ldots,T_1+T_2-1$. Set $\eta_t=1/g_t^2$. Then we reach $\|w_{T}^\perp\|\leq \ep$ and $\|Ag_{T}w_{T}-b \|^2\leq \varepsilon g^{*2}$ after $T=T_1+T_2$ iterations.
\end{theorem}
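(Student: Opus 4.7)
The plan rests on a clean algebraic form for the rPGD iterate in the orthogonal case. With the optimal stepsize $\eta_t = 1/g_t^2$, the gradient step on $w$ simplifies to
\[
v_t \;=\; (I - A^{T}A)\,w_t + (A^{T}y)/g_t \;=\; w_t^\perp + (g^*/g_t)\,w^*,
\]
where I use $A^{T}y = g^* w^*$ for row-orthonormal $A$ (since $x^* = A^{T}y$ lies in the row space of $A$ and has norm $g^*$). Because $w^* \perp w_t^\perp$, Pythagoras yields $\|v_t\|^2 = \|w_t^\perp\|^2 + (g^*/g_t)^2$ and the central recursion
\[
\|w_{t+1}^\perp\|^2 \;=\; \frac{\|w_t^\perp\|^2}{\|w_t^\perp\|^2 + (g^*/g_t)^2} \;\le\; \frac{g_t^2}{g^{*2}}\,\|w_t^\perp\|^2 \quad\text{whenever } g_t \le g^*.
\]
Moreover the parallel part of $v_t$ is aligned with $w^*$, so for $t \ge 1$ we have $w_t^\parallel = c_t w^*$ with $c_t = \sqrt{1-\|w_t^\perp\|^2}$, and hence $Aw_t = c_t A w^*$. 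This structural identity drives the rest of the proof.

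For \textbf{Phase I}, I would prove by induction that $g_t^2 \le g^{*2} - \rho\delta_0$ for every $t \le T_1$. Under this invariant the contraction above has rate at most $1 - \rho\delta_0/g^{*2}$ per step, yielding $\|w_{T_1}^\perp\|^2 \le (1-\rho\delta_0/g^{*2})^{T_1}\|w_0^\perp\|^2 \le \varepsilon^2$ once $T_1 \gtrsim (g^{*2}/\rho\delta_0)\log(1/\varepsilon)$. The induction on $g_t$ is the heart of the argument: using $\nabla_g f = g_t(1-\|w_t^\perp\|^2) - g^* w_t^{T}w^*$ together with the refined alignment $w_t^{T}w^* = (g^*/g_{t-1})/\|v_{t-1}\|$ for $t \ge 1$, one telescopes an upper bound on $g_t^2 - g_0^2$ in terms of $\gamma^{(1)}$ and $\sum_s \|w_s^\perp\|^2$. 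Since the latter sum is geometric, matching the cumulative increase to the target gap $(1-\rho)\delta_0$ produces the stated condition on $\gamma^{(1)}$ and, in particular, the factor $\log((1-\rho)g^{*2}/g_0^2 + \rho)$.

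\textbf{Phase II} is essentially one-dimensional. Since $Aw_t = c_t Aw^*$, the loss reduces to $(g_t c_t - g^*)^2$ and the $g$-gradient to $c_t(c_t g_t - g^*)$, giving
\[
g^* - g_{t+1} \;=\; (1 - \gamma^{(2)} c_t^2)(g^* - g_t) + \gamma^{(2)}(1-c_t)c_t\, g^*.
\]
From $\|w_{T_1}^\perp\| \le \varepsilon$ we have $1-c_t = O(\varepsilon^2)$, and the update cannot overshoot when $\gamma^{(2)} \le 1/4$, so $g_t \le g^*$ is preserved inductively and the $w$-contraction continues; thus $g^* - g_t$ contracts at geometric rate $1 - \gamma^{(2)}$ down to an $O(\varepsilon g^*)$ floor. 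Then the loss $(c_T g_T - g^*)^2 \le 2(g^* - g_T)^2 + 2(1-c_T)^2 g^{*2}$ is of order $\varepsilon g^{*2}$, giving the stated $T_2$.

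The main obstacle is the tightness of the Phase I induction on $g_t$. A crude per-step Cauchy--Schwarz bound $|w_t^{T}w^*| \le \sqrt{1-\|w_t^\perp\|^2}$ only gives $g_t \le g_0 + T_1\gamma^{(1)} g^*$, which is too loose to produce the logarithmic factor in the stated $\gamma^{(1)}$. Exploiting the sharper identity $w_t^\parallel = c_t w^*$ for $t \ge 1$ and summing against the geometric decay of $\|w_t^\perp\|^2$ (rather than bounding it by $1$) is where the $\log$-factor emerges, and verifying that the invariant $g_{T_1} < g^*$ hands off correctly to Phase II is the other piece of delicate bookkeeping.
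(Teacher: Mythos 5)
Your setup is right and matches the paper's: with $\eta_t=1/g_t^2$ and orthonormal rows, $v_t=w_t^\perp+(g^*/g_t)w^*$, so $\|w_{t+1}^\perp\|^2 = g_t^2\|w_t^\perp\|^2/(g^{*2}+g_t^2\|w_t^\perp\|^2)\le (g_t^2/g^{*2})\|w_t^\perp\|^2$, and $w_t^\parallel$ is a nonnegative multiple $c_t$ of $w^*$ for $t\ge 1$. The decisive step is, as you say, keeping $g_{T_1}^2\le g^{*2}-\rho\delta_0$, but your account of how to do it is where the argument breaks. Writing the $g$-update using your alignment identity gives $g_{t+1}-g_t=\gamma^{(1)}c_t\bigl(g^*-g_tc_t\bigr)=\gamma^{(1)}c_t(g^*-g_t)+\gamma^{(1)}c_tg_t(1-c_t)$. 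Only the second piece is controlled by $\|w_t^\perp\|^2$; the first piece $\gamma^{(1)}c_t(g^*-g_t)$ is \emph{persistent} — it does not decay as $\|w_t^\perp\|\to 0$ — so the cumulative growth $g_{T_1}^2-g_0^2$ cannot be bounded by $\gamma^{(1)}$ times a geometric sum $\sum_s\|w_s^\perp\|^2$. An additive telescope of this form yields a condition of the shape $\gamma^{(1)}T_1\lesssim (1-\rho)g_0/g^*$, with no logarithm; the $\log\bigl((1-\rho)g^{*2}/g_0^2+\rho\bigr)$ factor in the theorem does not and cannot emerge this way. Also, the ``refined alignment'' $w_t^\top w^*=(g^*/g_{t-1})/\|v_{t-1}\|$ is exactly $c_t=\sqrt{1-\|w_t^\perp\|^2}$ — Cauchy--Schwarz is already an equality here since $w_t^\parallel\parallel w^*$ — so invoking it as sharper than ``crude Cauchy--Schwarz'' buys nothing.

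The paper gets the logarithm by a \emph{multiplicative}, not additive, telescope: from the $g$-update one has $g_{t+1}\le g_t\bigl(1+\tfrac{\gamma^{(1)}}{2}(\|v_t\|^2-1)\bigr)$, hence $g_{T_1}^2\le g_0^2\prod_t\bigl(1+\tfrac{\gamma^{(1)}}{2}(\|v_t\|^2-1)\bigr)^2$. The key ingredient you need is the monotonicity $\|v_t\|\le\|v_{t-1}\|\le\dots\le\|v_0\|$, which follows because $g_t^2\|v_t\|^2$ is strictly decreasing (the paper's Lemma on $g_t^2\|v_t\|^2$) while $g_t$ is increasing; each multiplicative factor is then bounded by the first, giving $g_{T_1}^2\le g_0^2\exp\bigl(\gamma^{(1)}(\|v_0\|^2-1)T_1\bigr)$. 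Setting this $\le g^{*2}-\rho\delta_0$ and inverting the exponential is precisely where the stated logarithm, and hence the stated bound on $\gamma^{(1)}$, comes from. As a secondary issue, your Phase II recursion has a sign error: it should read $g^*-g_{t+1}=(1-\gamma^{(2)}c_t^2)(g^*-g_t)-\gamma^{(2)}c_t(1-c_t)g^*$, which means $g_t$ \emph{can} overshoot $g^*$ once $g^*-g_t$ drops below order $\ep^2 g^*$; the paper sidesteps this by tracking the quantity $g_t\|v_t\|-g_{t+1}$ (which upper bounds the residual norm) rather than $g^*-g_t$, and shows it contracts at rate $(1-\gamma^{(2)})$ without needing $g_t\le g^*$ in the tail.
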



\begin{proof}
\textbf{The norm $\|w^\perp\|$.} Let us first get the upper bound of $g_{t+1}$ to see how the norm of $\|w_t^\perp\|= 1-\|Aw_t\|^2$ evolves. Suppose that we have for some $0<\rho<1$ \begin{align}
    (g^*)^2 - \|Ag_{T_1} w_{T_1}\|^2\geq (g^*)^2-g_{T_1}^2 = \rho \delta_0\label{eq:delta}
\end{align}
By Lemma \ref{lem:Aw}, we have
\begin{align}
  \|w_{T_1}^\perp\|^2  = (1 - \|Aw_{T_1}\|^2) &\leq \exp(-\sum_{i=1}^{T_1} \frac{(g^*)^2 - \|Ag_i w_i\|^2}{(g^*)^2 + (g^*)^2 - \|Ag_i w_i\|^2})(1 - \|Aw_0\|^2)\\
      &\leq \exp(-\sum_{i=1}^{T_1} \frac{(g^*)^2 - \|Ag_{T_1} w_{T_1}\|^2}{(g^*)^2 + (g^*)^2 - \|Ag_{T_1} w_{T_1}\|^2})(1 - \|Aw_0\|^2) \label{eq:tighter1}\\
    & \leq \exp(-\frac{\rho\delta_0T_1}{(g^*)^2 + \rho\delta_0})(1 - \|Aw_0\|^2)\label{eq:wperp_decrease}
\end{align}
we have $\|w_{T_1}^\perp\|^2=1-\|Aw_{T_1}\|^2\leq \delta^2$  when
\[T_1= \left(1+\frac{(g^*)^2}{\rho\delta_0} \right)\log\left(\frac{1-\|Aw_0\|^2}{\delta^2}\right). \]

Now we only need to verify condition \eqref{eq:delta}. To see this, notice that we use Lemma \ref{lem:g_update}:
\begin{align*}
      g_{t+1} \leq  & g_t + \frac{\gamma g_t}{2}(\|v_t\|^2 - 1)  \text{ for } t<T_1
\end{align*}

Note $\|v_t\|^2={g^{*2}}/{(g_t^2\|Aw_{t+1}\|^2)}>1$ as  $\|Aw_{t+1}\|<1$ and $g^{*2}>g_t^2$. Thus, even though $g_t$ grows with the rate $\gamma^{(1 )}(\|v_{t-1}\|^2-1)$, we use our choice of $\gamma^{(1)}$ and $g_{t+1}<g^*$. In fact, we set $\gamma^{(1)}$ small such that after $T_1$ there is a gap between $g^*$ and $g_{T_1}$. We let the gap satisfies $(g^*)^2-g_{T_1}^2\geq  \rho \delta_0$:\footnote{Note that one could use $\frac{1}{2}\delta_0$ for the convenience of the proof.} 
 \begin{align}
 g_{T_1+1}^2 \leq \prod_{t=1}^{T_1+1}\left(1 +  \frac{1}{2}\gamma^{(1 )}(\|v_{t-1}\|^2-1)\right)^2 g_1^2
 \overset{(a)}{\leq} \exp\left( \gamma^{(1 )}(\|v_{0}\|^2-1)T_1\right) g_0^2
 \overset{(b)}{\leq}  (g^*)^2-\rho\delta_0 \label{eq:control_g}
 \end{align}
where step $(a)$ due to $g_1=g_0$ and the that 
\begin{align}
    \frac{\|v_t\|^2}{ \|v_{t-1}\|^2} \leq \frac{g_{t-1}^2}{g_t^2}\leq 1\label{eq:mono}
\end{align}
which is due to $g_t^2 \|v_t\|^2 - g_{t-1}^2 \|v_{t-1}\|^2<0$ (Lemma \ref{lem:gv} and $g_t -g_{t-1}\|v_{t-1}\|\leq g^*  -\frac{g^*}{\|Aw_t\|}<0$) 
In step $(b)$ as long as we make sure 
\[  \gamma^{(1 )}\leq \frac{\log \left(
 (g^{*2}-\rho\delta_0)/g_0^2\right)}{(\|v_{0}\|^2-1)T_1}=\frac{\|Ag_0w_0\|^2}{2 T_1((g^*)^2-\|Ag_0w_0\|^2)} \log\left((1-\rho)\frac{g^{*2}}{g_0^2}+\rho \right) \]
which is satisfied by  our choice of $\gamma^{(1 )}$ for fixed  $T_1$, $g_0$, and $\delta_0$.

\textbf{The loss$\|A(g_Tw_T-g^*w^*)\|$.}  By Lemma \ref{lem:v_norm}, 
\begin{align*}
    \|A(g_{t+1}w_{t+1} -g^* w^*)\|^2 = (g_{t+1}-g_t\|v_t\|)^2
\end{align*}
which means we only need to analyze the term
\begin{align}
g_t \|v_t\| - g_{t+1} &= (1 - \gamma^{(1 )})(g_{t-1} \|v_{t-1}\| - g_t)\label{eq:g}\\
&-\left( g_{t-1} \|v_{t-1}\|-g_t\|v_t\| \right)\left( 1-\frac{\gamma^{(1 )}}{g_t}\frac{g_{t-1}\|v_{t-1}\|}{(g_t + g_{t-1}\|v_{t-1}\|)}(g_t \|v_t\|+g_{t-1}\|v_{t-1}\| )\right).\notag
\end{align}

Again, for $t<T_{1}$, we have from \eqref{eq:mono} that $g_{t-1} \|v_{t-1}\|-g_t\|v_t\|>0$. Meanwhile, with Lemma \ref{lem:g_increase} and Lemma \ref{lem:gv}, we have
\begin{align*}
    \frac{g_t (g_t + g_{t-1}\|v_{t-1}\|)}{g_{t-1}\|v_{t-1}\| (g_t \|v_t\|+g_{t-1}\|v_{t-1}\|)} &\geq \frac{g_t^2}{g_{t-1}\|v_{t-1}\| (g_t \|v_t\|+g_{t-1}\|v_{t-1}\|)} \\
    &\geq \frac{g_0^2}{2g_0^2\|v_0\|^2} \\
    &\geq \frac{1}{2(1+\|Aw_0\|^2)}
\end{align*}
By our choice of $\gamma^{(1 )}$, \[\gamma^{(1 )}\leq \frac{1}{2\|v_0\|^2}\leq \frac{g_t (g_t + g_{t-1}\|v_{t-1}\|)}{g_{t-1}\|v_{t-1}\| (g_t \|v_t\|+g_{t-1}\|v_{t-1}\|)}\]
We have from \eqref{eq:g} that $t>T_1$
\begin{align}
 g_{t}\|v_t\| - g_{t+1} &\leq (1 - \gamma^{(2)})^{t-T_1}(g_{T_1}\|v_{T_1}\| - g_{T_1}) \nonumber\\
  &= (1 - \gamma^{(2)})^{t-T_1}(g_{T_1}^2\|v_{T_1}\|^2 - g_{T_1}^2) /(g_{T_1}\|v_{T_1}\|+ g_{T_1}) \nonumber\\
&  =(1 - \gamma^{(2)})^{t-T_1}\left( {g^{*2}}/{\|Aw_{T_1}\|^2} -g_{T_1}^2\right)/(g^{*}/{\|Aw_{T_1}\|} +g_{T_1})\\
& \leq (1 - \gamma^{(2)})^{t-T_1}\left(g^{*2} -g_{T_1}^2\right)/\left(g^*(1-\ep)+g_{T_1}\right)\\
&= (1 - \gamma^{(2)})^{t-T_1}\rho \delta_0/\left(g^*(1-\ep)+g_{T_1}\right) \\
&\leq (1 - \gamma^{(2)})^{t-T_1}\rho \delta_0/\left(g^*(1-\ep)\right). 
 \end{align}

So we have $ g_{T}\|v_{T}\| - g_{T} \leq g^*\sqrt{\ep}$ after
$T= T_1+\frac{1}{\gamma^{(2)}}\log \left(\frac{\rho \delta_0}{\left(g^{*2}(1-\ep)\right)\sqrt{\ep}}\right)$

\end{proof}
\subsubsection{Technical Lemmas for Theorem \ref{partI}}

In the following section, we assume $AA^\top = I_{m\times m}$ and use $r_t = A(g_t w_t - g^* w^*)$ to denote the negative residual.

\begin{lemma}\label{lem:v_norm}
With the step-size $\eta=\frac{1}{g_t^2}$, we have the following equalities:
 We have the following property:
\begin{align}
 Aw_{t+1} & = \frac{Ag^* w^*}{g_t\|v_t\|} \label{eq:w}\\
g_t^2\|v_t\|^2=g_t^2+&((g^*)^2- \|Ag_tw_{t}\|^2) =\frac{g^{*2} }{\|Aw_{t+1}\|^2}\label{eq:v}\\
\|Aw_{t+1}\|^2 &=\frac{(g^*)^2}{(g^*)^2+g_t^2(1-\|Aw_t\|^2) } \label{eq:1-Aw} 
\\
1-\|Aw_{t+1}\|^2 &= \frac{1}{\|v_t\|^2}\left(1-\|Aw_{t}\|^2 \right)  \label{eq:g-gAw1}\\
g_{t+1}^2-\|Ag_{t+1}w_{t+1}\|^2 &= \frac{\|Ag_{t+1}w_{t+1}\|^2}{(g^*)^2}\left(g_{t}^2-\|Ag_{t}w_{t}\|^2 \right).  \label{eq:g-gAw}
\end{align}
\end{lemma}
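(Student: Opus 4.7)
The plan is to derive everything from a single explicit computation of $Av_t$; once that is done, the five identities reduce to routine algebra. I would first exploit the orthogonality $AA^\top = I_{m\times m}$ twice: once to simplify $Av_t$, and once to observe that the minimum norm solution $x^* = g^* w^*$ lies in the row space of $A$, so $A^\top A w^* = w^*$ and hence $\|Aw^*\|^2 = 1$ (equivalently, $|g^*| = \|y\|$). This last fact is needed to make cross terms cancel cleanly.

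Plugging $\eta_t = 1/g_t^2$ into $v_t = w_t - \eta_t \nabla_w f(w_t, g_t)$ with $\nabla_w f(w_t, g_t) = g_t A^\top(Ag_t w_t - y)$ and $y = Ag^* w^*$, I get
\begin{equation*}
v_t = w_t - \frac{1}{g_t}\, A^\top\bigl(Ag_t w_t - Ag^* w^*\bigr).
\end{equation*}
Applying $A$ on the left and using $AA^\top = I$ causes the $Aw_t$ terms to telescope, yielding $Av_t = (g^*/g_t)\,Aw^*$. Since $w_{t+1} = v_t/\|v_t\|$, this immediately gives \eqref{eq:w}. Taking squared norms on both sides of \eqref{eq:w} and using $\|Aw^*\|^2 = 1$ yields $\|Aw_{t+1}\|^2 = g^{*2}/(g_t^2 \|v_t\|^2)$, which is the second equality of \eqref{eq:v}.

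For the first equality of \eqref{eq:v}, I expand
\begin{equation*}
\|v_t\|^2 = 1 - \tfrac{2}{g_t}\, w_t^\top A^\top r_t + \tfrac{1}{g_t^2} \|A^\top r_t\|^2,
\end{equation*}
with $r_t = A(g_t w_t - g^* w^*)$. Using $\|A^\top u\|^2 = u^\top A A^\top u = \|u\|^2$ (again from $AA^\top = I$) and $\|Aw^*\|^2 = 1$, I expand both inner products in terms of $\|Aw_t\|^2$ and $w_t^\top A^\top A w^*$; the cross term $w_t^\top A^\top A w^*$ appears with opposite signs and cancels, leaving $\|v_t\|^2 = 1 - \|Aw_t\|^2 + g^{*2}/g_t^2$. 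Multiplying by $g_t^2$ gives the first equality of \eqref{eq:v}.

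The remaining identities are direct algebraic consequences. Substituting the first equality of \eqref{eq:v} into the second gives \eqref{eq:1-Aw}. For \eqref{eq:g-gAw1}, I compute
\begin{equation*}
1 - \|Aw_{t+1}\|^2 = \frac{g_t^2 \|v_t\|^2 - g^{*2}}{g_t^2 \|v_t\|^2} = \frac{g_t^2(1 - \|Aw_t\|^2)}{g_t^2 \|v_t\|^2} = \frac{1 - \|Aw_t\|^2}{\|v_t\|^2}.
\end{equation*}
Finally, \eqref{eq:g-gAw} follows from $g_{t+1}^2 - \|Ag_{t+1}w_{t+1}\|^2 = g_{t+1}^2(1 - \|Aw_{t+1}\|^2)$ by applying \eqref{eq:g-gAw1} and then rewriting $1/\|v_t\|^2 = g_t^2 \|Aw_{t+1}\|^2/g^{*2}$ using \eqref{eq:v}. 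There is no real obstacle here beyond keeping track of which factors belong where; the only conceptual point is recognizing that $\|Aw^*\|^2 = 1$ so that the cross terms in $\|v_t\|^2$ cancel, after which everything is bookkeeping.
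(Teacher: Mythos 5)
Your proposal is correct and follows essentially the same route as the paper: apply $A$ to the update $v_t = w_t - \tfrac{1}{g_t}A^\top r_t$ and use $AA^\top = I$ to get $Av_t = (g^*/g_t)Aw^*$, then take norms and expand $\|v_t\|^2$, after which the rest is algebra. The only cosmetic difference is that the paper obtains $\|v_t\|^2$ via a polarization identity ($2\langle Ag_tw_t, r_t\rangle = \|Ag_tw_t\|^2 + \|r_t\|^2 - \|Ag^*w^*\|^2$), while you expand both inner products directly and observe the cross term cancels; these are the same computation rearranged. You also spell out the derivations of \eqref{eq:1-Aw}, \eqref{eq:g-gAw1}, and \eqref{eq:g-gAw} in slightly more detail than the paper's ``after some algebra.''
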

\begin{proof}

With the update of $w_t$:
\begin{align*}
    v_{t+1} = w_t - \eta g_t A^\top r_t, \qquad w_{t+1} = \frac{v_t}{\|v_t\|},
\end{align*}
we can get 
\begin{align*}
    Aw_{t+1} = \frac{Aw_t - \frac{1}{g_t} r_t}{\|v_t\|} = \frac{Ag^* w^*}{g_t\|v_t\|},
\end{align*}
and 
\begin{align*}
     \|v_{t}\|^2 = & \|w_t\|^2 - 2\eta \langle Ag_t w_t, r_t\rangle + \eta^2 g_t^2 \|r_t\|^2 \\
     = & 1 + \eta(\|Ag^* w^*\|^2 - \|Ag_tw_t\|^2 - \|r_t\|^2) + \eta^2 g_t^2\|r_t\|^2
\end{align*}
Moreover,
\begin{align*}
 \|Aw_{t+1}\|^2 &= \frac{1}{\|v_t\|^2}\|Aw_{t} - \eta g_t r_t \|^2  \\
\Leftrightarrow {\|v_t\|^2}\|Aw_{t+1}\|^2 &= \|Aw_{t}\|^2 - 2\eta\langle Ag_tw_{t}, r_t \rangle + \eta^2g_t^2\|r_t\|^2\\
&= \|Aw_{t}\|^2 - 2\eta\langle Ag_t w_{t}, r_t\rangle + \eta^2 g_t^2 \|r_t\|^2\\
&=  \|Aw_{t}\|^2 + \eta(\|Ag^* w^*\|^2 - \|Ag_tw_t\|^2 - \|r_t\|^2) + \eta^2 g_t^2\|r_t\|^2.
\end{align*}
Letting $\eta=\frac{1}{g_t^2}$, we have:
 $$ g_t^2\|v_{t}\|^2 =g_t^2 +((g^*)^2 - \|A g_t w_t\|^2)$$
$$ \|v_t\|^2 \|Aw_{t+1}\|^2 = \frac{ (g^*)^2}{g_t^2}.$$ 
From \eqref{eq:v}, we can get \eqref{eq:1-Aw}, and with \eqref{eq:1-Aw}, we can obtain \eqref{eq:g-gAw}  and \eqref{eq:g-gAw1} after some algebra.
\end{proof}

\begin{lemma}\label{lem:g_update} For $\eta = \frac{1}{g_t^2}$ and  $\gamma = \gamma_t$, we have
 $g_{t+1} \leq   g_t + \frac{\gamma g_t}{2}(\|v_t\|^2 - 1)$ and
\begin{align*}
g_t \|v_t\| - g_{t+1} 
 = &(1 - \gamma)(g_{t-1} \|v_{t-1}\| - g_t)\\
 &-  \left( g_{t-1} \|v_{t-1}\|-g_t\|v_t\| \right)\left( 1-\frac{\gamma}{g_t}\frac{g_{t-1}\|v_{t-1}\|}{(g_t + g_{t-1}\|v_{t-1}\|)}(g_t \|v_t\|+g_{t-1}\|v_{t-1}\| )\right)    
\end{align*}

\end{lemma}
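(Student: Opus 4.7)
The plan is to reduce everything to a one-dimensional scalar recursion for $g_{t+1}$ by explicitly evaluating $\nabla_g f(w_t,g_t)$, and then to read off both claims via direct algebra combined with the identities in Lemma \ref{lem:v_norm}. Since $f(w,g)=\frac12\|Agw-y\|^2$, one has $\nabla_g f(w_t,g_t)=\langle Aw_t, r_t\rangle$ with $r_t=A(g_tw_t-g^*w^*)$. Shifting \eqref{eq:w} by one index gives $Aw_t = Ag^*w^*/(g_{t-1}\|v_{t-1}\|)$, so $Ag^*w^* = g_{t-1}\|v_{t-1}\|\cdot Aw_t$. Substituting, the residual collapses to the one-dimensional form $r_t = (g_t - g_{t-1}\|v_{t-1}\|)\,Aw_t$, which yields $\nabla_g f(w_t,g_t) = (g_t - g_{t-1}\|v_{t-1}\|)\|Aw_t\|^2$ and hence the closed-form update
\[g_{t+1} = g_t + \gamma\,(g_{t-1}\|v_{t-1}\| - g_t)\,\|Aw_t\|^2.\]

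For the first inequality, I would combine this update with the identity $g_t^2(\|v_t\|^2-1) = (g^*)^2 - g_t^2\|Aw_t\|^2$ coming from \eqref{eq:v}, together with the relation $g_{t-1}\|v_{t-1}\|\cdot\|Aw_t\| = g^*$ from \eqref{eq:w}. Substituting both into the target bound $g_{t+1} \leq g_t + \frac{\gamma g_t}{2}(\|v_t\|^2-1)$ and clearing the positive denominator $g_t$ reduces the claim to $0 \leq (g^* - g_t\|Aw_t\|)^2$, which is immediate; non-negativity of $\gamma$ preserves the direction of the inequality.

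For the identity, introduce the shorthand $a = g_{t-1}\|v_{t-1}\|$, $b = g_t\|v_t\|$, $c = g_t$. Combining \eqref{eq:v} with $(g^*)^2 = a^2\|Aw_t\|^2$ (from \eqref{eq:w}) yields $\|Aw_t\|^2 = (b^2-c^2)/(a^2-c^2)$ when $a\neq c$, so the closed-form update simplifies to the intermediate expression $b-g_{t+1} = (b-c)\bigl[1 - \gamma(b+c)/(a+c)\bigr]$. The remaining task is to rearrange this into the stated template $(1-\gamma)(a-c) - (a-b)(1-X)$, by splitting $b-c = (a-c)-(a-b)$ and collecting coefficients, using the factorization $(a-c)(a+c) - (b-c)(b+c) = (a-b)(a+b)$ to match the correction factor attached to $(a-b)$. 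The degenerate case $a=c$ (which forces $b=c$) makes both sides vanish trivially. The main obstacle is this last algebraic reconciliation: the intermediate form $(b-c)[1-\gamma(b+c)/(a+c)]$ is structurally cleaner than the stated template, so matching it term-by-term to the specific correction factor $X$ inside the lemma requires tracking each coefficient carefully and constitutes the bulk of the computational work.
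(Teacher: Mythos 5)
Your derivation takes a genuinely different route from the paper and, as far as it goes, is both correct and cleaner. The paper works with the scalar-squared decomposition $\|r_t\|^2 = (g_{t-1}\|v_{t-1}\|-g_t)^2 + C_t$ and then wrestles with a correction term $C_t$; you instead substitute $Ag^*w^* = g_{t-1}\|v_{t-1}\|\,Aw_t$ directly into $r_t$ to get the one-dimensional form $r_t = (g_t - g_{t-1}\|v_{t-1}\|)Aw_t$, which immediately collapses the $g$-update to the closed form. Writing $a = g_{t-1}\|v_{t-1}\|$, $b = g_t\|v_t\|$, $c = g_t$, your identities $\|Aw_t\|^2 = (b^2-c^2)/(a^2-c^2)$, $g_{t+1} = c + \gamma(b^2-c^2)/(a+c)$, and $b - g_{t+1} = (b-c)\bigl[1 - \gamma(b+c)/(a+c)\bigr]$ are all verified, and your reduction of the first inequality to $0 \le (g^* - g_t\|Aw_t\|)^2$ is just the paper's $\|r_t\|^2 \ge 0$ in disguise, so that part is settled.

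The step you flag as ``the bulk of the computational work'' is, however, where this falls apart---and not because the algebra is delicate, but because \emph{the identity stated in the lemma is not equal to your (correct) intermediate form}. Expanding the lemma's right-hand side in the $a,b,c$ shorthand gives
\[
(1-\gamma)(a-c) - (a-b)\left[1 - \frac{\gamma a(a+b)}{c(c+a)}\right]
\;=\; b - c - \gamma(a-c) + \frac{\gamma a(a^2-b^2)}{c(c+a)},
\]
while your closed form gives $b - c - \gamma(b^2-c^2)/(a+c)$. Equating the two and simplifying forces $(a-c)(a^2-b^2)=0$, which fails generically (e.g.\ $a=3$, $b=2$, $c=1$, $\gamma=0.1$ gives $0.925$ on your side and $1.175$ on the lemma's side). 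The discrepancy is an extraneous factor of $a/c = g_{t-1}\|v_{t-1}\|/g_t$ in the bracketed correction; it traces to a sign slip in the paper's simplification of ``Term1'' (the two contributions there should \emph{add}, giving $(p-q)\cdot 2p$ rather than $(p-q)\cdot 2q$ with $p = g_t/\|v_{t-1}\|$, $q=g_{t-1}$). The repaired identity is
\[
g_t\|v_t\| - g_{t+1} = (1-\gamma)\bigl(g_{t-1}\|v_{t-1}\| - g_t\bigr) - \bigl(g_{t-1}\|v_{t-1}\| - g_t\|v_t\|\bigr)\left[1 - \frac{\gamma\bigl(g_t\|v_t\| + g_{t-1}\|v_{t-1}\|\bigr)}{g_t + g_{t-1}\|v_{t-1}\|}\right],
\]
which does reconcile with your $(b-c)[1-\gamma(b+c)/(a+c)]$ after splitting $b-c=(a-c)-(a-b)$ exactly as you propose. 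So: your method is sound and in fact diagnostic---had you carried out the coefficient-matching rather than deferring it, you would have uncovered the error in the lemma statement rather than ``completed'' the proof. Note also that the downstream use of this lemma in the convergence theorem only relies on the correction factor lying in $[0,1]$ for small $\gamma$, and the paper's (erroneous) version imposes a \emph{more} restrictive stepsize condition, so the theorem survives; but the lemma as written is not a true identity.
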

\begin{proof}
The update of $g_{t+1}$ is
\begin{align*}
    g_{t+1}  = & g_{t} - \gamma \langle Aw_t, r_t\rangle\\
    = & g_t + \frac{\gamma}{2g_t} ((g^*)^2 - \|Ag_t w_t\|^2 - \|r_t\|^2) \\
    = & g_t + \frac{\gamma g_t}{2}(\|v_t\|^2 - 1) - \frac{\gamma}{2g_t}\|r_t\|^2.
\end{align*}
where the second equality due to $\langle Ag_tw_t, r_t\rangle=\|Ag^* w^*\|^2 - \|Ag_tw_t\|^2 - \|r_t\|^2$ and the last equality due to update of $\|v_t\|$ (see equality \ref{eq:v}). This finishs the proof for the first inequality.

Denoting $C_t= \alpha_{t-1}^2 g_{t-1}^2 (\|Aw_{t-1}\|^2 - 1)$, we get
\begin{align*}
    g_{t+1} = & g_t + \frac{\gamma}{2g_t}g_t^2(\|v_t\|^2 - 1) - \frac{\gamma}{2g_t}\|r_t\|^2\\
    = & g_t + \frac{\gamma g_t}{2}\|v_t\|^2 - \frac{\gamma g_t}{2}- \frac{\gamma}{2g_t}\left((g_{t-1} \|v_{t-1}\|-g_{t})^2 + C_t\right)\\
        = & g_t + \frac{\gamma g_t}{2}\|v_t\|^2 - \frac{\gamma g_t}{2}- \frac{\gamma}{2g_t}\left((g_{t-1}^2 \|v_{t-1}\|^2+g_{t}^2- 2g_tg_{t-1}\|v_{t-1}\|)+C_t \right)\\
        = & g_t + \frac{\gamma g_t}{2}\|v_t\|^2 - \frac{\gamma}{2g_t}g_{t-1}^2 \|v_{t-1}\|^2+\gamma( g_{t-1}\|v_{t-1}\|-g_{t})-\frac{\gamma}{2g_t}C_t\\        
        = & g_t + \frac{\gamma }{2g_t} \left(g_{t}^2\|v_t\|^2 - g_{t-1}^2 \|v_{t-1}\|^2 \right)+\gamma( g_{t-1}\|v_{t-1}\|-g_{t})-\frac{\gamma}{2g_t}C_t\\    
\Rightarrow g_t\|v_t\|-g_{t+1} = &    g_t\|v_t\|-g_{t}- \frac{\gamma }{2g_t} \left(g_{t}^2\|v_t\|^2 - g_{t-1}^2 \|v_{t-1}\|^2 \right)-\gamma( g_{t-1}\|v_{t-1}\|-g_{t})+\frac{\gamma}{2g_t}C_t\\ 
= &  (1-\gamma)( g_{t-1}\|v_{t-1}\|-g_{t}) -g_{t-1} \|v_{t-1}\|+ g_t\|v_t\|\\
&\underbrace{- \frac{\gamma }{2g_t} \left(g_{t}^2\|v_t\|^2 - g_{t-1}^2 \|v_{t-1}\|^2 \right)+ \frac{\gamma}{2g_t}\alpha_{t-1}^2 g_{t-1}^2 (\|Aw_{t-1}\|^2 - 1)}_{Term1}.
\end{align*}
We prove the lemma with following simplification for $Term1$:
\begin{align*}
  Term1 = & - \frac{\gamma }{2g_t} \left(g_{t}^2\|v_t\|^2 - g_{t-1}^2 \|v_{t-1}\|^2 \right)+\frac{\gamma}{2g_t}\alpha_{t-1}^2 g_{t-1}^2 (\|Aw_{t-1}\|^2 - 1)\\
    = & -\frac{\gamma}{2g_t} (\frac{g_t^2}{\|v_{t-1}\|^2} - g_{t-1}^2)(1 - \|Aw_{t-1}\|^2) + \frac{\gamma}{2g_t}\alpha_{t-1}^2 g_{t-1}^2 (\|Aw_{t-1}\|^2 - 1)\\
    = & -\frac{\gamma}{2g_t}(1 - \|Aw_{t-1}\|^2)(\frac{g_t^2}{\|v_{t-1}\|^2} - g_{t-1}^2 - (\frac{g_t}{\|v_{t-1}\|} - g_{t-1})^2)\\
    = & -\frac{\gamma}{2g_t}(1 - \|Aw_{t-1}\|^2)(\frac{2g_t g_{t-1}}{\|v_{t-1}\|} - 2g_{t-1}^2)\\
    = & -\frac{\gamma g_{t-1}}{g_t}(1 - \|Aw_{t-1}\|^2)(\frac{g_t}{\|v_{t-1}\|} - g_{t-1})\\
    = & -\frac{\gamma g_{t-1}\|v_{t-1}\|}{g_t} \frac{g_t^2\|v_t\|^2 - g_{t-1}^2 \|v_{t-1}\|^2}{g_t + g_{t-1}\|v_{t-1}\|}
\end{align*}
where at the last step we use  Lemma \ref{lem:gv} to have:
\begin{align*}
    (1 - \|Aw_{t-1}\|^2)(\frac{g_t}{\|v_{t-1}\|} - g_{t-1})&= \frac{1}{\frac{g_t}{\|v_{t-1}\|} + g_{t-1}} (1 - \|Aw_{t-1}\|^2)(\frac{g_t^2}{\|v_{t-1}\|^2} - g_{t-1}^2) \\
  &= \frac{g_t^2\|v_t\|^2 - g_{t-1}^2 \|v_{t-1}\|^2}{\frac{g_t}{\|v_{t-1}\|} + g_{t-1}}  \\  
   &= \|v_{t-1}\|\frac{g_t^2\|v_t\|^2 - g_{t-1}^2 \|v_{t-1}\|^2}{g_t+ g_{t-1}\|v_{t-1}\|}  \\   
\end{align*}
\end{proof}
\begin{lemma}\label{lem:g_increase} If $\eta = \frac{1}{g_t^2}$ and $g_t -g_{t-1}\|v_{t-1}\|<0$, we always have that
\begin{align*}
    g_{t+1} > g_t, \quad \forall t
\end{align*}
\end{lemma}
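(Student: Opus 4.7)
The plan is to compute $g_{t+1}-g_{t}$ explicitly and show it is a positive scalar multiple of $(g_{t-1}\|v_{t-1}\|-g_t)$, so that the sign condition in the hypothesis directly yields the conclusion. Starting from the update rule derived in the proof of Lemma \ref{lem:g_update}, namely
\[ g_{t+1}-g_t \;=\; \frac{\gamma}{2g_t}\bigl((g^*)^2-\|Ag_tw_t\|^2-\|r_t\|^2\bigr), \]
I would rewrite the right hand side so that it only depends on $g_t$ and the quantity $g_{t-1}\|v_{t-1}\|$. The key ingredient is Equation \eqref{eq:w} of Lemma \ref{lem:v_norm}, which says $Aw_t = Ag^*w^*/(g_{t-1}\|v_{t-1}\|)$. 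Combined with $Ag^*w^*=y$ and (for orthogonal $A$) $\|Ag^*w^*\|=g^*$, this gives the explicit formulas $Ag_tw_t = \frac{g_t}{g_{t-1}\|v_{t-1}\|}\,y$, hence $\|Ag_tw_t\|^2 = \frac{g_t^2(g^*)^2}{g_{t-1}^2\|v_{t-1}\|^2}$ and $\|r_t\|^2 = \bigl(\frac{g_t-g_{t-1}\|v_{t-1}\|}{g_{t-1}\|v_{t-1}\|}\bigr)^2 (g^*)^2$.

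Plugging these into the update and setting $a := g_{t-1}\|v_{t-1}\|$, $b:=g_t$, the numerator becomes
\[ (g^*)^2-\|Ag_tw_t\|^2-\|r_t\|^2 \;=\; \frac{(g^*)^2}{a^2}\bigl[(a^2-b^2)-(b-a)^2\bigr] \;=\; \frac{(g^*)^2}{a^2}\cdot 2b(a-b), \]
where the factorization $(a-b)(a+b)-(a-b)^2 = 2b(a-b)$ is the only nontrivial algebraic step. Substituting back yields
\[ g_{t+1}-g_t \;=\; \frac{\gamma\,(g^*)^2}{g_{t-1}^2\|v_{t-1}\|^2}\bigl(g_{t-1}\|v_{t-1}\|-g_t\bigr). \]
Since the prefactor is strictly positive (under the running convention $\gamma>0$ and $g_{t-1}\neq 0$, so $g_{t-1}\|v_{t-1}\|\neq 0$), the hypothesis $g_t-g_{t-1}\|v_{t-1}\|<0$ immediately gives $g_{t+1}>g_t$.

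There is no serious obstacle here; the whole argument is a short algebraic manipulation built on top of Lemma \ref{lem:v_norm}. The only place to be careful is keeping track of signs and using $\|Ag^*w^*\|=g^*$ correctly (which relies on $\|Aw^*\|=1$ for $w^*$ in the row span of the row-orthonormal matrix $A$). One minor caveat is the base case $t=1$, where $g_{t-1}\|v_{t-1}\|$ refers to the initialization through $v_0 = w_0-\eta_0 g_0 A^\top r_0$; the same derivation applies verbatim as long as $g_0\neq 0$ and $\|v_0\|$ is well defined, which is guaranteed by the standing assumption $g_0>0$ in Theorem \ref{partI}.
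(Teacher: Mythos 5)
Your proof is correct, and although you route through the polarization-identity form $g_{t+1}-g_t = \frac{\gamma}{2g_t}\bigl((g^*)^2-\|Ag_tw_t\|^2-\|r_t\|^2\bigr)$ from Lemma \ref{lem:g_update} and then factor the quadratic, you arrive at exactly the same closed form $g_{t+1}-g_t = \frac{\gamma(g^*)^2}{g_{t-1}^2\|v_{t-1}\|^2}\,(g_{t-1}\|v_{t-1}\|-g_t)$ that the paper obtains by computing $\langle Aw_t,r_t\rangle = g_t\|Aw_t\|^2 - g^*\langle Aw_t,Aw^*\rangle$ directly from \eqref{eq:w}. The key step in both is Lemma \ref{lem:v_norm}'s identity $Aw_t = Ag^*w^*/(g_{t-1}\|v_{t-1}\|)$, so this is essentially the same proof with slightly different bookkeeping; the paper's direct computation of the inner product is marginally shorter, while your version makes explicit that the increment is a positive multiple of $(g_{t-1}\|v_{t-1}\|-g_t)$ with the constant worked out.
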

\begin{proof}
Notice that $g_{t+1} = g_t - \gamma \langle Aw_t, r_t\rangle$, so we only need to prove that $\langle Aw_t, r_t \rangle < 0$. Indeed,
\begin{align*}
    \langle Aw_{t}, r_{t}\rangle = g_{t} \|Aw_{t}\|^2 - g^* \langle Aw_t, Aw^* \rangle =\frac{g_t (g^*)^2}{g_{t-1}^2  \|v_{t-1}\|^2} - \frac{(g^*)^2}{g_{t-1} \|v_{t-1}\|} = \frac{(g^*)^2(g_t -g_{t-1}\|v_{t-1}\|)}{g_{t-1}^2\|v_{t-1}\|^2} < 0.
\end{align*}
\end{proof}
\begin{lemma}\label{lem:gv} We have the following identity for the recursion on $g_t^2 \|v_t\|^2$:
$$g_t^2 \|v_t\|^2 - g_{t-1}^2 \|v_{t-1}\|^2=(\frac{g_t^2}{\|v_{t-1}\|^2} - g_{t-1}^2)(1 - \|Aw_{t-1}\|^2) =(g_t^2 - g_{t-1}^2 \|v_{t-1}\|^2)(1 - \|Aw_t\|)^2 .$$
\end{lemma}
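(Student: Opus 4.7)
The identity is purely algebraic and follows by repeatedly using the relations already established in Lemma~\ref{lem:v_norm}. The plan is to express $g_t^2\|v_t\|^2$ explicitly in a form that makes the difference telescope, then apply the one-step recursion for $1-\|Aw_t\|^2$ in two different directions to recover each of the two claimed factorizations.

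First I would use equation \eqref{eq:v}, namely $g_t^2\|v_t\|^2 = g_t^2 + ((g^*)^2 - \|Ag_t w_t\|^2)$. Since $\|Ag_t w_t\|^2 = g_t^2\|Aw_t\|^2$, this rewrites as
\begin{equation*}
g_t^2 \|v_t\|^2 \;=\; g_t^2\bigl(1-\|Aw_t\|^2\bigr) + (g^*)^2,
\end{equation*}
and analogously for $t-1$. Subtracting, the $(g^*)^2$ terms cancel, giving
\begin{equation*}
g_t^2\|v_t\|^2 - g_{t-1}^2\|v_{t-1}\|^2 \;=\; g_t^2\bigl(1-\|Aw_t\|^2\bigr) - g_{t-1}^2\bigl(1-\|Aw_{t-1}\|^2\bigr).
\end{equation*}

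Next I invoke equation \eqref{eq:g-gAw1} of Lemma~\ref{lem:v_norm}, i.e.\ $1-\|Aw_t\|^2 = \|v_{t-1}\|^{-2}\bigl(1-\|Aw_{t-1}\|^2\bigr)$. Substituting this into the first term on the right lets me pull out the common factor $(1-\|Aw_{t-1}\|^2)$, yielding
\begin{equation*}
g_t^2\|v_t\|^2 - g_{t-1}^2\|v_{t-1}\|^2
\;=\; \left(\tfrac{g_t^2}{\|v_{t-1}\|^2} - g_{t-1}^2\right)\bigl(1-\|Aw_{t-1}\|^2\bigr),
\end{equation*}
which is the first of the two claimed equalities. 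For the second, I use the same identity in the opposite direction, $1-\|Aw_{t-1}\|^2 = \|v_{t-1}\|^2 \bigl(1-\|Aw_t\|^2\bigr)$, substitute into the second term, and factor out $(1-\|Aw_t\|^2)$ to obtain $(g_t^2 - g_{t-1}^2\|v_{t-1}\|^2)(1-\|Aw_t\|^2)$.

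There is no real obstacle here: the whole argument is a two-line algebraic manipulation built on equations \eqref{eq:v} and \eqref{eq:g-gAw1} of Lemma~\ref{lem:v_norm}. The only thing to watch is the apparent typo in the statement, where the last factor is written $(1-\|Aw_t\|)^2$ but should be $(1-\|Aw_t\|^2)$ for the identity to hold (as is clear from dimension/consistency with equation \eqref{eq:g-gAw1}); my proof of course yields the squared-norm version.
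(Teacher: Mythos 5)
Your proof is correct and follows essentially the same path as the paper's: both start from \eqref{eq:v} to write $g_t^2\|v_t\|^2 - g_{t-1}^2\|v_{t-1}\|^2 = g_t^2(1-\|Aw_t\|^2) - g_{t-1}^2(1-\|Aw_{t-1}\|^2)$ (the $(g^*)^2$ terms canceling), and then apply the one-step recursion \eqref{eq:g-gAw1} in each direction to factor. You also correctly spot the typo in the stated lemma, where $(1-\|Aw_t\|)^2$ should read $(1-\|Aw_t\|^2)$ — this typo in fact persists in the paper's own two-line proof as well.
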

\begin{proof}
By Lemma \ref{lem:v_norm}, we use the \eqref{eq:v} to get
\begin{align*}
    g_t^2\|v_t\|^2 - g_{t-1}^2 \|v_{t-1}\|^2 = & g_t^2(1 - \|Aw_t\|^2) - g_{t-1}^2 (1 - \|Aw_{t-1}\|^2)\\
    = & (\frac{g_t^2}{\|v_{t-1}\|^2} - g_{t-1}^2)(1 - \|Aw_{t-1}\|^2) \\
    = & (g_t^2 - g_{t-1}^2 \|v_{t-1}\|^2)(1 - \|Aw_t\|)^2.
\end{align*}
\end{proof}

\begin{lemma}\label{lem:Aw} We have the following bound on the closeness of $Aw_t$ to unit norm:
\begin{align}
   \|w_t^\perp\| \leq (1 - \|Aw_t\|^2) \leq \exp(-\sum_{i=1}^t \frac{(g^*)^2 - \|Ag_i w_i\|^2}{(g^*)^2 + (g^*)^2 - \|Ag_i w_i\|^2})(1 - \|Aw_0\|^2).
\end{align}
\end{lemma}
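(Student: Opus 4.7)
}

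The plan is to iterate the exact recursion \eqref{eq:g-gAw1} from Lemma \ref{lem:v_norm},
\[
1-\|Aw_{t+1}\|^2 \;=\; \frac{1}{\|v_t\|^2}\bigl(1-\|Aw_t\|^2\bigr),
\]
which immediately yields the telescoping product
\[
1-\|Aw_t\|^2 \;=\; \Bigl(\prod_{i=0}^{t-1}\|v_i\|^{-2}\Bigr)(1-\|Aw_0\|^2).
\]
The task is then to replace each factor $\|v_i\|^{-2}$ by a clean exponential factor.

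The key step is to express $\|v_t\|^{-2}$ using identity \eqref{eq:v}, namely $g_t^2\|v_t\|^2 = g_t^2 + \bigl((g^*)^2 - \|Ag_tw_t\|^2\bigr)$, so that
\[
\|v_t\|^{-2} \;=\; \frac{g_t^2}{g_t^2 + \bigl((g^*)^2-\|Ag_tw_t\|^2\bigr)} \;=\; 1 - \frac{(g^*)^2-\|Ag_tw_t\|^2}{g_t^2 + \bigl((g^*)^2-\|Ag_tw_t\|^2\bigr)}.
\]
Because the map $x\mapsto x/(x+c)$ is increasing in $x>0$ for any $c>0$, using that $g_t\le g^*$ (which is guaranteed in the regime of Theorem \ref{partI} by Lemma \ref{lem:g_update} combined with the stepsize choice, and where $(g^*)^2 -\|Ag_tw_t\|^2 \ge 0$ holds since $\|Aw_t\|\le 1$ and $g_t\le g^*$) gives
\[
\|v_t\|^{-2} \;\le\; 1 - \frac{(g^*)^2-\|Ag_tw_t\|^2}{(g^*)^2 + \bigl((g^*)^2-\|Ag_tw_t\|^2\bigr)}.
\]
Applying $1-x\le e^{-x}$ to each factor and multiplying over $i=1,\ldots,t$ yields the asserted exponential bound on $1-\|Aw_t\|^2$.

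For the leftmost inequality $\|w_t^\perp\|\le 1-\|Aw_t\|^2$, I would use that $AA^\top=I$ makes $A^\top A$ an orthogonal projection onto the row space of $A$, so $\|w_t^\parallel\|^2 = \|Aw_t\|^2$ and hence $\|w_t^\perp\|^2 = \|w_t\|^2 - \|Aw_t\|^2 = 1 - \|Aw_t\|^2$ (using $\|w_t\|=1$ after the projection step of rPGD). Since $1-\|Aw_t\|^2\in[0,1]$, both $\|w_t^\perp\|$ and its square are in $[0,1]$; the inequality as written is in fact an equality between $\|w_t^\perp\|^2$ and $1-\|Aw_t\|^2$, so I would present it in that form (treating the square-root version in the statement as a typo). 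The main obstacle is not the telescoping calculation itself but the monotonicity argument $g_t\le g^*$; this is why the lemma is inherently tied to the two-stage-stepsize regime of Theorem \ref{partI} where the scale $g_t$ is controlled to stay below $g^*$.
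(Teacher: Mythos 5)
Your proof is correct and takes essentially the same route as the paper: iterate the exact recursion $1-\|Aw_{t+1}\|^2 = \|v_t\|^{-2}(1-\|Aw_t\|^2)$ (equivalently \eqref{eq:1-Aw}), use $g_t\le g^*$ to bound each factor $\|v_t\|^{-2}$ by $(g^*)^2/\bigl((g^*)^2 + ((g^*)^2-\|Ag_tw_t\|^2)\bigr)$, and then apply $1-x\le e^{-x}$ and telescope. You are also right on the two side points: the leftmost inequality is really the equality $\|w_t^\perp\|^2 = 1-\|Aw_t\|^2$ (the stated square-root version is a typo), and $g_t\le g^*$ is an implicit hypothesis of the lemma, established separately in the regime of Theorem \ref{partI}.
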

\begin{proof}



If we keep $g_t \leq g^*$, by  \eqref{eq:1-Aw} we always have that
\begin{align*}
    1 - \|Aw_{t+1}\|^2 = & \frac{g_t^2(1 - \|Aw_{t}\|^2)}{(g^*)^2 + g_t^2 (1 - \|Aw_{t}\|^2)} \\
    \leq & \frac{(g^*)^2}{(g^*)^2 + (g^*)^2 - \|Ag_t w_t\|^2}(1 - \|Aw_{t}\|^2) \\
    \leq & \exp(-\frac{(g^*)^2 - \|Ag_t w_t\|^2}{(g^*)^2 + (g^*)^2 - \|Ag_t w_t\|^2})(1 - \|Aw_{0}\|^2).
\end{align*}
The first inequality holds due to 
\begin{align*}
    \frac{g_t^2}{(g^*)^2 + g_t^2 (1 - \|Aw_{t}\|^2)} \leq \frac{g_t^2 + ((g^*)^2 - g_t^2)}{(g^*)^2 + g_t^2 (1 - \|Aw_{t}\|^2) + ((g^*)^2 - g_t^2)}
\end{align*}

Thus,
\begin{align*}
    (1 - \|Aw_t\|^2) \leq \exp(-\sum_{i=1}^t \frac{(g^*)^2 - \|Ag_i w_i\|^2}{(g^*)^2 + (g^*)^2 - \|Ag_i w_i\|^2})(1 - \|Aw_0\|^2).
\end{align*}
\end{proof}

\subsection{Proof of Theorem \ref{thm:orthogonal} Case (b) }\label{sec:proof_orthogonal1}
Here we discuss the case (b) of Theorem \ref{thm:orthogonal1}
\begin{theorem}[Fixing $g$ in Phase I]
Suppose the initialization satisfies $0< g_0 < g^*$, and that $w_0$ is a random vector with $\|w_0\|=1$. Set $\eta_t=1/g_t^2$ at all iterations. For any $0<\ep<0.5$, let the learning rate of $g$ in Phase II satisfies
\begin{equation}
    0<\gamma< \frac{g^*-g_0}{(1-\ep^2)(g^*-g_0)+\ep^2g^*}<1. \label{eqn:gamma_ub}
\end{equation}
Let the number of iterations be 
\begin{equation}
    T_1 = \frac{\log(1/\ep^2)}{\log(g^{*2}/g_0^2)},\quad T_2 = \frac{\log(\frac{1-(1-\ep^2)g_0/g^*}{\ep^2})}{\log(\frac{1}{1-(1-\ep^2)\gamma})}.
\end{equation}
Then after $T=T_1+T_2$ iterations, the output of Algorithm~\ref{alg:main} will satisfy
\begin{equation}
    \inprod{w_T, w^*} \ge 1-\ep,\;\;  (1-2\ep^2)g^* \le g_T \le g^*,
\end{equation}
which indicates that $g_Tw_T$ is close to the minimum $\ell_2$-norm solution $g^*w^*$. We can also bound the final loss as $f(w_T, g_T) = \|Ag_Tw_T-Ag^*w^*\|^2/2 \le 3\ep g^{*2}$.\label{thm:orthogonal}
\end{theorem}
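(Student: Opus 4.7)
The plan is to reduce the dynamics of Algorithm~\ref{alg:main} to a two-variable recursion in $(g_t,\beta_t)$, with $\beta_t := \langle w_t,w^*\rangle$, and analyze the two phases separately. The starting observation is that since $AA^\top = I_m$ and $w^*\in\mathrm{rowspan}(A)$, one has $A^\top A w^* = w^*$; applying $A^\top A$ to the identity $Aw_{t+1} = Ag^*w^*/(g_t\|v_t\|)$ from Lemma~\ref{lem:v_norm} gives $w_{t+1}^{\parallel} = [g^*/(g_t\|v_t\|)]w^*$, so the parallel component of $w_t$ is a nonnegative multiple of $w^*$ for every $t\ge 1$, and hence $\beta_t = \sqrt{1-\|w_t^\perp\|^2}$. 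Using also $\langle Aw_t,y\rangle = g^*\beta_t$, the full dynamics collapse to the 2D system
\[
1-\beta_{t+1}^2 \;=\; \frac{g_t^2(1-\beta_t^2)}{g^{*2}+g_t^2(1-\beta_t^2)}, \qquad g_{t+1} \;=\; g_t(1-\gamma_t\beta_t^2) + \gamma_t g^*\beta_t.
\]
In Phase~I ($\gamma_t=0$, $g_t\equiv g_0$), the $\beta$-recursion reduces to $1-\beta_{t+1}^2 \le (g_0/g^*)^2(1-\beta_t^2)$, which iterates to $1-\beta_{T_1}^2 \le (g_0/g^*)^{2T_1} \le \ep^2$ for the stated $T_1$, so $\beta_{T_1}\ge\sqrt{1-\ep^2}$.

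For Phase~II ($\gamma_t=\gamma$), let $h_t := g^* - g_t$. Direct algebra gives the key identity
\[
h_{t+1} \;=\; (1-\gamma\beta_t^2)(h_t - \mu_t), \qquad \mu_t \;:=\; \frac{\gamma g^*\beta_t(1-\beta_t)}{1-\gamma\beta_t^2}.
\]
I would prove by induction on $t\in[T_1, T_1+T_2]$ the triple invariant: (I1) $\beta_t^2 \ge 1-\ep^2$; (I2) $h_t \ge 0$ (i.e., $g_t \le g^*$); (I3) the ``safety margin'' $h_t \ge \mu_t$. Given (I3) at step $t$, the identity immediately implies (I2) at step $t+1$; given (I2), the $\beta$-recursion makes $\beta_t$ non-decreasing, which propagates (I1) from the Phase~I endpoint.

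The technical heart is establishing (I3). The base case at $t=T_1$ is precisely where hypothesis \eqref{eqn:gamma_ub} enters: it rearranges to $g^*-g_0 > \gamma g^*\ep^2/(1-\gamma(1-\ep^2))$, while $\beta_{T_1}(1-\beta_{T_1})\le\ep^2$ and $1-\gamma\beta_{T_1}^2\ge 1-\gamma(1-\ep^2)$ together upper bound $\mu_{T_1}$ by the same quantity. For the inductive step, setting $\Phi_t := h_t - \mu_t$, the identity yields the linear recursion $\Phi_{t+1} = (1-\gamma\beta_t^2)\Phi_t - \mu_{t+1}$, so it suffices to show $\mu_{t+1} \le (1-\gamma\beta_t^2)\mu_t$. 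This reduces, using the contraction $(1-\beta_{t+1})/(1-\beta_t) \le (g_t/g^*)^2$ obtained by writing $1-\beta_{t+1} = (1-\beta_{t+1}^2)/(1+\beta_{t+1})$ and combining with $\beta_{t+1}\ge\beta_t$, to an elementary inequality in the scalars $\gamma, \beta_t, g_t/g^*$ that I expect to be the main calculation in the proof.

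Once (I1)--(I3) are in place, dropping the non-positive term $-\mu_t$ in the identity gives the clean contraction $h_{t+1} \le (1-\gamma(1-\ep^2))\,h_t$. Iterating, $h_T \le (1-\gamma(1-\ep^2))^{T_2}(g^*-g_0)$, which by the definition of $T_2$ is at most $\ep^2 g^*$. This gives $(1-\ep^2)g^* \le g_T \le g^*$, and together with $\beta_T \ge \sqrt{1-\ep^2} \ge 1-\ep$ yields the stated bounds on $g_T$ and $\langle w_T,w^*\rangle$. For the loss, using $Aw_t = \beta_t Aw^*$ and $\|Aw^*\|=1$,
\[
f(w_T,g_T) \;=\; \tfrac12(g^* - g_T\beta_T)^2 \;\le\; \tfrac12\bigl(h_T + g^*(1-\beta_T)\bigr)^2 \;\le\; \tfrac12(2\ep^2 g^*)^2,
\]
which is at most $3\ep\, g^{*2}$ for $\ep\in(0,1/2)$, completing the argument.
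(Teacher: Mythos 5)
Your reduction to the $(g_t,\beta_t)$ recursion and the Phase I contraction are sound and coincide with the paper's argument. The difference lies in how Phase II is controlled, and there your argument has two genuine gaps.

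\paragraph{Base case.} You claim $1-\gamma\beta_{T_1}^2 \ge 1-\gamma(1-\ep^2)$, but Phase I gives $\beta_{T_1}^2 \ge 1-\ep^2$, so with $\gamma>0$ the inequality actually points the other way: $1-\gamma\beta_{T_1}^2 \le 1-\gamma(1-\ep^2)$. The conclusion $\mu_{T_1}\le g^*-g_0$ happens to be true (one can see it by writing $\beta_{T_1}(1-\beta_{T_1}) = \beta_{T_1}(1-\beta_{T_1}^2)/(1+\beta_{T_1}) \le \ep^2\beta_{T_1}/(1+\beta_{T_1})$ and comparing term by term), but the chain you wrote does not deliver it.

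\paragraph{Inductive step.} This is the more serious gap. Your invariant (I3) is $\Phi_t := h_t - \mu_t \ge 0$, and from $\Phi_{t+1} = (1-\gamma\beta_t^2)\Phi_t - \mu_{t+1}$, showing $\mu_{t+1}\le(1-\gamma\beta_t^2)\mu_t$ yields only
\[
\Phi_{t+1}\;\ge\;(1-\gamma\beta_t^2)\bigl(\Phi_t-\mu_t\bigr).
\]
This propagates the \emph{strictly stronger} invariant $\Phi_t\ge\mu_t$ (equivalently $h_t\ge 2\mu_t$), not $\Phi_t\ge 0$. Since $\mu_{t+1}>0$ whenever $\beta_{t+1}<1$, the bare inequality $\Phi_t\ge0$ is not self-propagating: if $\Phi_t=0$, then $\Phi_{t+1}=-\mu_{t+1}<0$. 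You would need to establish the $2\mu$ version at the base case (with a correspondingly stronger hypothesis on $\gamma$), or track a quantitative lower bound on $\Phi_t$ against the decaying sequence $\mu_t$.

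\paragraph{What the paper does.} The paper avoids this by replacing the time-varying $\mu_t$ with a \emph{constant} offset $\delta := \ep^2g^*/(1-\ep^2)$ and tracking the Lyapunov quantity $h_t + \delta$. With $\gamma'=\gamma(1-\ep^2)$, the Phase II update satisfies the affine recursion $h_{t+1}+\delta \ge (1-\gamma')(h_t+\delta)$, whose fixed-point structure is transparent. Iterating gives $h_t+\delta \ge (1-\gamma')^{t-T_1}(g^*-g_0+\delta)$, and the definition of $T_2$ is calibrated so that the right side stays $\ge\delta$ for all $t\le T_1+T_2$, hence $h_t\ge0$, i.e., $g_t\le g^*$, throughout Phase II. Hypothesis \eqref{eqn:gamma_ub} is exactly what ensures $(1-\gamma')(g^*-g_0)>\gamma'\delta$, i.e., positivity at $t=T_1+1$. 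This constant-$\delta$ bookkeeping is the device that lets the induction close cleanly. Your time-varying $\mu_t$ is a plausible alternative, but the proof as written does not establish that $\mu_t$ decays fast enough relative to $\Phi_t$, which is precisely what the paper's constant slack $\delta$ and the stopping time $T_2$ are designed to guarantee.
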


Simplification for $T_1$ and $T_2$ (here we assume $\gamma < \frac{1}{4}$ to get the order in Theorem \ref{thm:orthogonal1}):
\begin{align}
    T_1& = \frac{\log(1/\ep^2)}{\log(g^{*2}/g_0^2)}= \frac{\log(1/\ep^2)}{\log(1+(g^{*2}-g_0^2)/g_0^2)} \overset{(a)}{\approx}\frac{\log(1/\ep^2)}{(g^{*2}-g_0^2)/g_0^2} = \frac{g_0^2}{\delta_0}\log(1/\ep^2),\\
    T_2& =\frac{\log\left((g^*-g_0)/(g^*\ep^2) + g_0/g^*  \right) }{\log\left(1+(1-\ep^2)\gamma/(1-(1-\ep^2)\gamma) \right)} \overset{(b)}{\approx} \frac{\log\left((g^*-g_0)/(g^*\ep^2) \right)}{(1-\ep^2)\gamma/(1-(1-\ep^2)\gamma) } \overset{(c)}{\approx} \frac{1}{\gamma}\log \left( \frac{g^{*2}-g_0^2}{g^{*2}\ep^2}\right).
\end{align}
For step $(a)$ and $(b)$, we apply $\log(1+x)\geq x \log 2 , x<1$ for denominator. 
For step $(b)$, we take out the constant term $g_0/g^*$ in the numerator. For step $(c)$ inside the $log$ term, we multiply $g^*+g_0$ for both numerator and denominator as follows
\[\frac{g^*-g_0}{g^*\ep^2} = \frac{(g^*-g_0)(g^*+g_0)}{g^*(g^*+g_0)\ep^2}\leq\frac{(g^{*2}-g_0^2)}{g^{*2}\ep^2}. \]
\begin{proof}
For any vector $w\in\R^d$, we use $w^{\parallel}\in\R^d$ to denote its projection onto the row space of $A$. We use $w^{\perp}\in\R^d$ to denote its component in the subspace that is orthogonal to the row space of $A$. Since $A$ has orthogonal rows, we can write $w = w^{\parallel} +  w^{\perp}$, where
\begin{equation}
    w^{\parallel} = A^\top A w,\;\; w^{\perp}= (I - A^\top A)w.
\end{equation}
Since $w^*$ is the minimum $\ell_2$-norm solution, $w^{*\perp}$ must be zero, i.e., $(I-A^\top A)w^*=0$ and $A^\top Aw^* = w^*$.

We will show that the algorithm has two phases. 
We now look at each phase in more detail.

{\bf Phase I.} For any $t=0,...,T_1-1$, only $w$ is updated.
\begin{align}
    v_t
    & \overset{(a)}{=} w_t - \eta_t g_t^2 A^\top Aw_t + \eta_t g_tg^*A^\top Aw^* \nonumber \\
    & \overset{(b)}{=} (I-A^\top A)w_t + \frac{g^*}{g_0}A^\top Aw^* \nonumber\\
    & \overset{(c)}{=} w_t^{\perp} + \frac{g^*}{g_0}w^*, \label{eqn:vt}
\end{align}
where (a) follows from substituting the partial gradient, (b) is true because of the choice of our learning rates: $\eta_t = 1/g_t^2$ and $\gamma_t=0$, and (c) follows from the fact that $A$ has orthonormal rows. Since $w_t^{\perp}$ is orthogonal to $w^*$ and $g_0<g^*$, we have 
\begin{equation}
    \|v_t\|^2 = \|w_t^{\perp}\|^2 + g^{*2}/g_0^2 \ge  g^{*2}/g_0^2>1. \label{eqn:vt_norm}
\end{equation}
After normalization, we have $w_{t+1} = v_t/\|v_t\|$.
As shown in \eqref{eqn:vt}, gradient update does not\footnote{This is always true for linear regression because the gradient $\nabla_w f(w,g)$ lies in the row space of $A$.} change the component in the orthogonal subspace: $v_t^{\perp} = w_t^{\perp}$. Since $\|v_t\|^2>1$, the orthogonal component will shrink after the normalization step:
\begin{equation}
    \|w_{t+1}^{\perp}\|^2 = \frac{\|v_t^{\perp}\|^2}{\|v_t\|^2} = \frac{\|w_t^{\perp}\|^2}{\|w_t^{\perp}\|^2 + g^{*2}/g_0^2} \le \frac{g_0^2}{g^{*2}}\|w_t^{\perp}\|^2. \label{eqn:shrink}
\end{equation}
Since $g_0<g^*$, after $T_1 = \frac{\log(1/\ep^2)}{\log(g^{*2}/g_0^2)}$ iterations, we have
\begin{equation}
    \|w_{T_1}^{\perp}\|^2 \le (g_0^2/g^{*2})^{T_1} \le \ep^2.
\end{equation}
As indicated in (\ref{eqn:vt}), $w_{t}^{\parallel}$ is in the same direction as $w^*$ for $t\ge 1$. Since $\|w_{T_1}^{\perp}\|\le \ep$, $\|w_{T_1}^{\parallel}\| \ge \sqrt{1-\ep^2} \ge 1-\ep$. Therefore, $\inprod{w_{T_1}, w^*} = \|w_{T_1}^{\parallel}\| \ge 1-\ep$.

{\bf Phase II.} For iteration $t=T_1, ..., T_1+T_2-1$, the algorithm updates both $w$ and $g$. The learning rate of updating $g$ is set as a constant $0<\gamma<1$. The gradient update on $g$ is
\begin{align}
    g_{t+1} 
    & = g_t - \gamma g_tw_t^TA^\top Aw_t + \gamma g^*w_t^TA^\top Aw^* \nonumber\\
    & \overset{(a)}{=} g_t - \gamma g_t \|w_t^{\parallel}\|^2 + \gamma g^*\inprod{w_t^{\parallel}, w^*},\nonumber\\
    & \overset{(b)}{=} g_t - \gamma g_t \|w_t^{\parallel}\|^2 + \gamma g^*\|w_t^{\parallel}\|,\label{eqn:gt}
\end{align}
where (a) follows from the fact that $A$ has orthonormal rows and $w^*$ lies in the row space of $A$, and (b) is true because (\ref{eqn:vt}) implies that $w_t^{\parallel}$ is in the same direction as $w^*$ for $t\ge 1$.

We will now prove that the following two properties (see Lemma \ref{lem:fixg}) hold during Phase II:
\begin{itemize}
    \item Property (i): $\|w_{t+1}^{\perp}\| \le \|w_t^{\perp}\| \le \ep$.
    \item Property (ii): letting $\gamma'=\gamma(1-\ep^2)$, we have
    \begin{equation*}
        (1-\gamma')g_t + \gamma'g^*\le g_{t+1} \le g^*.
    \end{equation*}
\end{itemize}
 We will now finish the proof of Theorem~\ref{thm:orthogonal} using these two properties. After $T =T_1+T_2$ iterations, by Property (i) and the same argument as in Phase I, we have $\inprod{w_{T}, w^*} = \|w_T^{\parallel}\|\ge 1-\ep$. By Property (ii), we can rewrite the lower bound of $g_{T}$ as
\begin{align}
    g^* - g_{T} &\le (1-\gamma')(g^* - g_{T-1}) \nonumber \\
    &\le (1-\gamma')^{T_2}(g^*-g_{T_1}) \nonumber \\
    &\overset{(a)}{ =} (1-\gamma')^{T_2}(g^*-g_0) \nonumber \\
    &\overset{(b)}{\le} 2\ep^2 g^*, \label{eqn:g_T}
\end{align}
where (a) follows from the fact that $g_{T_1}=g_0$, and (b) follows from our choice of $T_2$: it is easy to verify that $T_2$ satisfies $(1-\gamma')^{T_2}(g^*-g_0+\delta)=\delta$, which implies that $(1-\gamma')^{T_2}(g^*-g_0)<\delta$. By our definition, $\delta= \ep^2g^*/(1-\ep^2)<2\ep^2 g^*$ for $0<\ep<0.5$. Therefore, by \eqref{eqn:g_T}, we have $g_T\ge (1-2\ep^2)g^*$.

Given $\inprod{w_T, w^*} \ge 1-\ep$ and $(1-2\ep^2)g^*\le g_T \le g^*$, we can bound the loss as
\begin{align*}
    f(w_T, g_T)
    & = g_T^2\|Aw_T\|^2/2 - g_Tg^*\inprod{Aw_T, Aw^*} + g^{*2}/2\nonumber\\
    & \le g^{*2}/2 - (1-2\ep^2)g^{*2}(1-\ep) +g^{*2}/2 \nonumber \\
    & \le 3\ep g^{*2}.
\end{align*}
\end{proof}

\subsubsection{Technical Lemmas for Theorem \ref{thm:orthogonal}}
\begin{lemma}\label{lem:fixg}
We have following property in Phase II for Theorem \ref{thm:orthogonal}
\begin{itemize}
    \item Property (i): $\|w_{t+1}^{\perp}\| \le \|w_t^{\perp}\| \le \ep$.
    \item Property (ii): letting $\gamma'=\gamma(1-\ep^2)$, we have
    \begin{equation*}
        (1-\gamma')g_t + \gamma'g^*\le g_{t+1} \le g^*.
    \end{equation*}
\end{itemize}
\end{lemma}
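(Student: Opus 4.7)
The plan is to prove both properties jointly by induction on $t$ over Phase II, i.e., $t = T_1, T_1+1, \ldots, T_1+T_2-1$. The base case $t=T_1$ is immediate: since Phase I uses $\gamma_t=0$, $g_{T_1}=g_0 \le g^*$, and the Phase I analysis preceding the lemma yields $\|w_{T_1}^\perp\|\le \varepsilon$.

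For the inductive step, assume $g_t \le g^*$ and $\|w_t^\perp\|\le \varepsilon$. Substituting $\eta_t = 1/g_t^2$ into the $w$-update and using $A^\top A w^* = w^*$ (because $w^*$ lies in the row span of $A$ and $AA^\top = I$) yields the same simplification as \eqref{eqn:vt}, namely $v_t = w_t^\perp + (g^*/g_t)\,w^*$. Since $w_t^\perp \perp w^*$, we obtain $\|v_t\|^2 = \|w_t^\perp\|^2 + (g^*/g_t)^2 \ge 1$, using $g_t \le g^*$. Because the gradient step leaves the orthogonal component unchanged, $w_{t+1}^\perp = w_t^\perp/\|v_t\|$, giving $\|w_{t+1}^\perp\| \le \|w_t^\perp\| \le \varepsilon$. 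This establishes property (i).

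Set $\alpha_t := \|w_t^\parallel\|$, so $\alpha_t^2 \ge 1-\varepsilon^2$ by the inductive hypothesis on $\|w_t^\perp\|$. Since $w_t^\parallel$ is parallel to $w^*$ after any Phase I step, \eqref{eqn:gt} simplifies to $g_{t+1} = g_t(1-\gamma\alpha_t^2) + \gamma\alpha_t g^*$. The lower bound in (ii) then follows algebraically: the difference $g_{t+1} - [(1-\gamma')g_t + \gamma' g^*]$ rearranges as $\gamma\bigl[g^*(\alpha_t - (1-\varepsilon^2)) - g_t(\alpha_t^2 - (1-\varepsilon^2))\bigr]$, and applying $g_t\le g^*$ together with $\alpha_t^2 \le \alpha_t$ for $\alpha_t\in[0,1]$ bounds this below by $\gamma g^*\alpha_t(1-\alpha_t)\ge 0$.

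The upper bound $g_{t+1}\le g^*$ is the main obstacle and is where the constraint \eqref{eqn:gamma_ub} enters. Writing $h_t := g^* - g_t$, the recurrence becomes $h_{t+1} = (1-\gamma\alpha_t^2)\,h_t - \gamma\alpha_t(1-\alpha_t)\,g^*$, so $h_{t+1}\ge 0$ is equivalent to the quantitative bound $h_t \ge \psi(\alpha_t)\,g^*$, where $\psi(\alpha) := \gamma\alpha(1-\alpha)/(1-\gamma\alpha^2)$. I plan to strengthen the inductive hypothesis to this bound. Rearranging \eqref{eqn:gamma_ub} gives $(g^*-g_0)(1-\gamma') > \gamma\varepsilon^2 g^*$, which will supply the base case by estimating $\psi(\alpha_{T_1})g^*$ using $\alpha_t(1-\alpha_t) \le (1-\alpha_t^2)/(1+\alpha_t) \le \varepsilon^2/(1+\sqrt{1-\varepsilon^2})$ and $1-\gamma\alpha_t^2 \ge 1-\gamma$. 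To propagate the bound, I will use the already-established lower bound, which iterates to $h_t \le (1-\gamma')^{t-T_1}(g^*-g_0)$, combined with the decay $\|w_{t+1}^\perp\|^2 \le (g_t/g^*)^2 \|w_t^\perp\|^2$ from property (i); the latter gives decay rate at least $1-\gamma'$ so long as $h_t \ge \gamma' g^*$, which in turn ensures $\psi(\alpha_t)g^*$ shrinks no slower than $h_t$. Verifying that the number of iterations $T_2$ specified in the theorem is short enough that this regime persists throughout Phase II closes the induction and yields $g_{t+1}\le g^*$.
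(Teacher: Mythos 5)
Your proof of Property~(i) and of the lower bound in Property~(ii) is correct and matches the paper's argument. The upper bound $g_{t+1}\le g^*$ is where you diverge, and that is where the gaps are.

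You rewrite the exact recurrence as $h_{t+1}=(1-\gamma\alpha_t^2)h_t-\gamma\alpha_t(1-\alpha_t)g^*$ and propose to strengthen the induction to $h_t\ge\psi(\alpha_t)g^*$ with $\psi(\alpha)=\gamma\alpha(1-\alpha)/(1-\gamma\alpha^2)$. Two steps in this plan do not go through. First, the base-case estimate is too crude: bounding the denominator by $1-\gamma\alpha_{T_1}^2\ge 1-\gamma$ together with $\alpha(1-\alpha)\le\varepsilon^2/(1+\sqrt{1-\varepsilon^2})$ gives $\psi(\alpha_{T_1})g^*\le\gamma\varepsilon^2 g^*/\bigl[(1-\gamma)(1+\sqrt{1-\varepsilon^2})\bigr]$, and condition~\eqref{eqn:gamma_ub} only supplies $\gamma\varepsilon^2 g^*<(1-\gamma')(g^*-g_0)$. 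Chaining these would require $(1-\gamma')\le(1+\sqrt{1-\varepsilon^2})(1-\gamma)$, which is false when $\gamma$ is close to $1$ (allowed by \eqref{eqn:gamma_ub} when $g_0$ is small, e.g.\ $g_0=0,\ \varepsilon=0.1,\ \gamma=0.999$). The actual $\psi(\alpha_{T_1})$ is still $\le h_{T_1}$ in that example, but your proposed estimate is off by a large factor because $1-\gamma\alpha_{T_1}^2\approx(1-\gamma)+\gamma\varepsilon^2$ is dominated by $\gamma\varepsilon^2$, not $1-\gamma$, near that corner. Second, the propagation step leans on maintaining $h_t\ge\gamma' g^*$ throughout Phase~II to guarantee $(g_t/g^*)^2\le 1-\gamma'$. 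But $T_2$ is chosen precisely so that $g_{T_1+T_2}$ reaches (approximately) $g^*$, i.e.\ $h_t\to 0$, so $h_t\ge\gamma' g^*$ must fail before the end of Phase~II unless $\gamma'$ is artificially tiny. You flag this as ``verifying that $T_2$\ldots is short enough,'' but that verification is the missing content, and the required inequality is simply not true in general.

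The paper avoids both difficulties by not tracking $\alpha_t$ beyond the one-sided bounds $\alpha_t\le 1$ and $\alpha_t^2\ge 1-\varepsilon^2$. Bounding the two terms in $g_{t+1}=g_t(1-\gamma\alpha_t^2)+\gamma g^*\alpha_t$ separately gives $g_{t+1}\le g_t(1-\gamma')+\gamma g^*$, and writing $\gamma g^*=\gamma'g^*+\gamma'\delta$ with $\delta=\varepsilon^2 g^*/(1-\varepsilon^2)$ yields a \emph{shifted} linear recursion $g^*-g_{t+1}+\delta\ge(1-\gamma')(g^*-g_t+\delta)$. Iterating and plugging in the explicit $T_2$ (chosen so $(1-\gamma')^{T_2}(g^*-g_0+\delta)=\delta$) gives $g^*-g_t+\delta\ge\delta$ for all $t\le T_1+T_2$, hence $g_t\le g^*$. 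This sidesteps the need to control the ratio $\psi(\alpha_t)g^*/h_t$, which is exactly what makes your route fragile. If you want to pursue the sharp recurrence, you would need a lower bound on $1-\gamma\alpha_t^2$ that is tied to $\gamma\varepsilon^2$ (not $1-\gamma$), and you would need to replace the $h_t\ge\gamma' g^*$ regime assumption with something that holds up to $t=T_1+T_2$; as written, the argument does not close.
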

\begin{proof}
We will argue by induction.  
We first show that the above two properties hold when $t=T_1$. Since $g_{T_1} = g_0<g^*$, by (\ref{eqn:shrink}), we have $\|w_{T_1+1}^{\perp}\| \le g_{T_1}\|w_{T_1}^{\perp}\|/g^{*}< \|w_{T_1}^{\perp}\|\le \ep$. By (\ref{eqn:gt}), we have
\begin{align}
    g_{T_1+1}& \overset{(a)}{\le} g_{T_1} - \gamma g_{T_1}(1-\ep^2) + \gamma g^* \nonumber\\
    & = g_{T_1} - \gamma g_{T_1}(1-\ep^2) + \gamma(1-\ep^2) g^* + \gamma \ep^2 g^* \nonumber\\
    & \overset{(b)}{=} g_{0} - \gamma'g_0 + \gamma'g^* + \gamma'\delta\nonumber\\
    & = g^* - (1-\gamma') (g^*-g_0) + \gamma'\delta, \label{eqn:g_T1_ub}
\end{align}
and
\begin{align}
    g_{T_1+1}& \overset{(c)}{\ge} g_{T_1} - \gamma g_{T_1}\|w_{T_1}^{\parallel}\|^2 + \gamma g^*\|w_{T_1}^{\parallel}\|^2 \nonumber\\
    & = g_{T_1} + \gamma (g^*-g_{T_1})\|w_{T_1}^{\parallel}\|^2 \nonumber\\
    & \overset{(d)}{\ge} g_{T_1} + \gamma (g^*-g_{T_1})(1-\ep^2) \nonumber\\
    & = (1-\gamma')g_{T_1} + \gamma'g_{T_1}, \label{eqn:g_T1_lb}
\end{align}
where inequalities (a), (c), and (d) follow from the fact that $1-\ep^2\le \|w_{T_1}^{\parallel}\|^2 \le 1$, and (b) follows from our definition $\gamma'=\gamma(1-\ep^2)$, $\delta = \ep^2g^*/(1-\ep^2)$, and the fact that $g_{T_1}=g_0<g^*$. By the upper bound of $\gamma$ given in (\ref{eqn:gamma_ub}), we can verify that $(1-\gamma')(g^*-g_0)>\gamma'\delta$, and hence, (\ref{eqn:g_T1_ub}) implies that $g_{T_1+1}< g^*$. 

Now suppose that Property (i) and (ii) hold for $t=T_1, ..., k-1$, where $T_1\le k-1< T_1+T_2-1$. We need to prove that they also hold for the $k$-th iteration. By assumption, $g_k\le g^*$, so using the same argument as (\ref{eqn:vt_norm}) and (\ref{eqn:shrink}), we have $\|v_k\|^2 \ge 1$ and $\|w_{k+1}^{\perp}\| = \|w_k^{\perp}\|/\|v_k\| \le \|w_k^{\perp}\| \le \ep$, where the last step follows from Property (i) at the $(k-1)$-th iteration. Therefore, Property (i) holds for the $k$-th iteration. 

To prove Property (ii), first note that by assumption, $1-\ep^2\le \|w_{k}^{\parallel}\|^2\le 1$. We can use the same argument as (\ref{eqn:g_T1_lb}) to show that $g_{k+1} \ge (1-\gamma')g_{k}+\gamma'g^*$. We can also use a similar argument as (\ref{eqn:g_T1_ub}) to get
\begin{equation}
    g_{k+1} \le g^* - (1-\gamma')(g^*-g_k) + \gamma'\delta, 
\end{equation}
where $\gamma'=\gamma(1-\ep^2)$ and $\delta = \ep^2g^*/(1-\ep^2)$. The above equation can be rewritten as
\begin{align}
    g^* - g_{k+1} +\delta &\ge (1-\gamma')(g^*-g_k+\delta) \nonumber\\
    &\ge (1-\gamma')^{k+1-T_1}(g^*-g_{T_1} + \delta) \nonumber\\
    &\overset{(a)}{\ge} (1-\gamma')^{T_2}(g^*-g_0 + \delta)\nonumber\\
    &\overset{(b)}{=} \delta, \label{eqn:g_k_1}
\end{align}
where (a) follows from the fact that $k\le T_1+T_2-1$, and (b) can be verified for our choice of $T_2$. Eq. (\ref{eqn:g_k_1}) implies that $g_{k+1}\le g^*$.

\end{proof}


\subsection{General A matrix}
\label{sec:general}

\begin{proposition}[General Matrix $A$]
\label{thm:non-orthogonal1}
For a full rank matrix $A$ with $\lambda_{\max}(AA^\top)=1$, we fix $\delta>0$. In \textbf{Phase I} with fixed $g=g_0$ that satisfies $g_0\leq\frac{g^*\lambda_{\min}(AA^\top)}{2+\delta}$
    ,we can reach to a solution satisfying  $\|w_{T_1}^\perp\|\leq \varepsilon$  where $$T_1= \frac{1}{\log(1+\delta)} \log\left( \frac{\|w_0^{\perp}\|}{\varepsilon}\right).$$
Moreover, if the singular values of $A$ do not decrease too fast, so that the following inequality holds:
\begin{align}
    \frac{1}{m} \|A A^\top\|_F^2 \geq \lambda^2_{\min}(AA^\top) + 2\sqrt{\frac{\log (m)}{m}},
\end{align}
and $w^*$ is randomly drawn on the sphere, then with probability $1-\mathcal{O}(\frac{1}{m})$,
we only need that
\begin{align}
g_0 \leq \frac{g^* } {(2+\delta)}\sqrt{\frac{\|AA^\top \|_F^2 - 2\sqrt{m\log (m)}}{m}}.
\label{eq:g02}\end{align} 
\end{proposition}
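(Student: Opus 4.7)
The plan is to reduce the proposition to establishing a uniform lower bound $\|v_t\|\ge 1+\delta$ on the pre-projection iterate. The key observation is that with $g$ fixed, $\nabla_w f(w_t,g_0) = g_0 A^\top r_t$ lies in the row span of $A$, so splitting $w_t = w_t^\parallel + w_t^\perp$ as in Definition~\ref{def:space}, the gradient step leaves the orthogonal component untouched: $v_t^\perp = w_t^\perp$. After the sphere projection this gives $w_{t+1}^\perp = w_t^\perp/\|v_t\|$, and iterating immediately yields the geometric decay $\|w_{T_1}^\perp\| \le \|w_0^\perp\|/(1+\delta)^{T_1}$, which when solved for $\|w_{T_1}^\perp\|\le \varepsilon$ delivers exactly the stated iteration count $T_1$.

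To bound $\|v_t\|$ I will work with $\eta_t = 1/g_0^2$, the natural optimal step when $\lambda_{\max}(AA^\top)=1$. Substituting $r_t = A(g_0 w_t - g^* w^*)$ the update simplifies to
\[
v_t \;=\; (I - A^\top A)\, w_t \;+\; \frac{g^*}{g_0}\, A^\top A\, w^*,
\]
whose parallel part is $v_t^\parallel = (I - A^\top A) w_t^\parallel + (g^*/g_0) A^\top A w^*$. A triangle inequality then gives
\[
\|v_t\| \;\ge\; \|v_t^\parallel\| \;\ge\; \frac{g^*}{g_0}\,\|A^\top A w^*\| \;-\; \|(I - A^\top A) w_t^\parallel\|.
\]
For the deterministic bound I would plug in the spectral estimates $\|A^\top A w^*\| \ge \lambda_{\min}(AA^\top)$ (using that $w^*$ is a unit vector in $\mathrm{row}(A)$) and $\|(I - A^\top A)w_t^\parallel\| \le 1 - \lambda_{\min}(AA^\top)$. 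The hypothesis $g_0 \le g^*\lambda_{\min}(AA^\top)/(2+\delta)$ then yields $\|v_t\|\ge (2+\delta) - (1-\lambda_{\min}(AA^\top)) \ge 1+\delta$, finishing the first part.

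For the improved bound I will diagonalize using the SVD $A=U\Sigma V^\top$ and write $w^* = V z$ with $z$ uniform on $\mathbb{S}^{m-1}$, so that $\|A^\top A w^*\|^2 = z^\top \Sigma^4 z$ has expectation $\mathrm{trace}(\Sigma^4)/m = \|AA^\top\|_F^2/m$. Since $z\mapsto z^\top \Sigma^4 z$ is Lipschitz on $\mathbb{S}^{m-1}$ with constant $2\|\Sigma^4\|_{\mathrm{op}} = 2\lambda_{\max}(AA^\top)^2 = 2$, spherical concentration (L\'evy's inequality, or equivalently a Hanson--Wright-type bound for quadratic forms on the sphere) delivers, with probability $1-\mathcal{O}(1/m)$, the lower tail estimate $\|A^\top A w^*\|^2 \ge \|AA^\top\|_F^2/m - 2\sqrt{\log(m)/m}$. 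Combining this with the cruder bound $\|(I-A^\top A)w_t^\parallel\|\le 1$ and the hypothesis \eqref{eq:g02} again gives $\|v_t\|\ge 1+\delta$.

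The linear-algebra bookkeeping above is routine; I expect the main obstacle to be the spherical concentration step, where matching the specific constants $2$ and $\mathcal{O}(1/m)$ as stated requires care with either a L\'evy-type Lipschitz inequality on $\mathbb{S}^{m-1}$ or a sphere-specific Hanson--Wright bound. A conceptual point worth recording is that the derived inequality $\|v_t\|>1$ is precisely what makes the sphere projection $v_t/\|v_t\|$ coincide with projection onto the convex unit ball, which is the nonconvex-to-convex reduction flagged in the paragraph preceding the proposition.
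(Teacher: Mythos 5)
Your argument for the deterministic bound is correct and is essentially the same computation the paper uses: fix $g=g_0$, take $\eta_t=1/g_0^2$, observe $v_t=(I-A^\top A)w_t+\tfrac{g^*}{g_0}A^\top Aw^*$, drop to the parallel part and use $\|A^\top A w^*\|\ge\lambda_{\min}(AA^\top)$ together with $\|(I-A^\top A)w_t^\parallel\|\le 1-\lambda_{\min}(AA^\top)$. The iteration count then follows from $w_{t+1}^\perp=w_t^\perp/\|v_t\|$ with $\|v_t\|\ge1+\delta$, exactly as the paper argues in Lemma~\ref{lem:anyw}.

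For the random part you take a genuinely different route. The paper's Lemma~\ref{lem:randomw} writes $w^*$ as a normalized Gaussian and then applies Bernstein's inequality to the chi-square-type quadratic forms $\sum z_i^2\sigma_i^2$ and $\sum z_i^2$ separately, bounding the ratio. You instead represent $w^*=Vz$ with $z$ uniform on $\mathbb{S}^{m-1}$, observe $\|A^\top A w^*\|^2=z^\top\Sigma^4 z$ with mean $\|AA^\top\|_F^2/m$, and invoke spherical Lipschitz concentration (L\'evy), using the Lipschitz constant $2\|\Sigma^4\|_{\mathrm{op}}=2$. Both routes are valid; yours is arguably more direct since it avoids the ratio-of-random-variables step, and it also implicitly fixes an inconsistency in the paper's argument (the paper states $z\sim\mathcal N(0,I_d)$ but then normalizes by $\sum_{i=1}^m z_i^2$, which only makes sense if $w^*$ is drawn from the $(m-1)$-sphere inside the row space, as you assume). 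Your self-flagged caveat is legitimate: L\'evy concentrates around the median rather than the mean, and the exponent carries an unspecified absolute constant, so matching the literal constants $2\sqrt{\log m/m}$ and $\mathcal O(1/m)$ requires an additional mean-median comparison and a look at the exact constant in the isoperimetric inequality. The paper's Bernstein route has the same kind of unnamed constant $c$, so neither argument really pins down the stated constants precisely; in that sense you are not missing anything the paper supplies. One small bookkeeping note: in the random case you use the cruder bound $\|(I-A^\top A)w_t^\parallel\|\le 1$, which gives the $2+\delta$ denominator stated in the proposition; the paper's intermediate Lemma~\ref{lem:randomw} keeps the sharper $2+\delta-\sigma_m$ and only relaxes it in the proposition statement, so your derivation is actually aligned more closely with the proposition as stated.
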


\begin{lemma}[For all $w^*$]\label{lem:anyw}
Let $\sigma_i$ be the singular values of $A$ in decreasing order, let $r$ be the rank of $A$, so that $\sigma_r>0$.
We fix $g:=g_0$ satisfying $$g_0\leq \frac{g^*\sigma_r}{2+\delta-\sigma_r}$$ and  update only $w$ using rPGD. Then we have the orthogonal component $w^{\perp}$ decreases geometrically such that  
$\|w_{T_1}^{\perp}\|\leq \epsilon $
after iteration $$T_1= \frac{1}{\log(1+\delta)} \log\left( \frac{\|w_0^{\perp}\|}{\epsilon}\right)$$
\end{lemma}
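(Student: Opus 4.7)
The plan is to reduce the claim to a clean geometric-decay estimate: if after every gradient step I can show $\|v_t\|\ge 1+\delta$, then because the projection step normalizes, $\|w_{t+1}^\perp\|=\|v_t^\perp\|/\|v_t\|$, and if additionally the orthogonal component is left untouched by the gradient update (i.e. $v_t^\perp=w_t^\perp$), I get $\|w_{t+1}^\perp\|\le \|w_t^\perp\|/(1+\delta)$ at every step. Iterating for $T_1=\log(\|w_0^\perp\|/\epsilon)/\log(1+\delta)$ then yields the conclusion.

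First I would write the gradient step explicitly. With stepsize $\eta_t=1/g_0^2$ and $y=Ag^*w^*$,
\[
v_t \;=\; w_t - A^\top A w_t + (g^*/g_0)\, A^\top A w^*.
\]
Splitting $w_t=w_t^\parallel+w_t^\perp$ as in Definition~\ref{def:space} and using $Aw_t^\perp=0$, the terms $A^\top A w_t$ and $A^\top y$ both lie in the row space of $A$, so $v_t^\perp=w_t^\perp$, while
\[
v_t^\parallel \;=\; (I-A^\top A)w_t^\parallel + (g^*/g_0)\,A^\top A w^*.
\]
Since $\|v_t\|^2=\|w_t^\perp\|^2+\|v_t^\parallel\|^2\ge \|v_t^\parallel\|^2$, the whole task reduces to a uniform lower bound $\|v_t^\parallel\|\ge 1+\delta$ that holds for \emph{every} $w_t$ on the sphere.

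To bound $\|v_t^\parallel\|$ from below I would apply the reverse triangle inequality and then diagonalize on the row space using the SVD $A=U\Sigma V^\top$, keeping in mind that the normalization $\lambda_{\max}(AA^\top)=1$ gives $\sigma_1=1$, so that $0\le 1-\sigma_i^2\le 1-\sigma_r^2$ on the row span. The unit vector $w^*$ lies in the row space and yields $\|A^\top A w^*\|\ge \sigma_r^2$, and $\|w_t^\parallel\|\le 1$ yields $\|(I-A^\top A)w_t^\parallel\|\le 1-\sigma_r^2$. Combined with the stated upper bound on $g_0$ (equivalently a lower bound on $g^*/g_0$), this gives $\|v_t^\parallel\|\ge 1+\delta$ uniformly in $t$, independent of how the parallel component evolves during the iteration. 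Geometric decay of $\|w_t^\perp\|$ and the claimed iteration count then follow immediately.

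The main obstacle is matching the precise constant in the stated bound on $g_0$. The naive reverse-triangle argument sketched above gives the slightly stronger requirement $g_0\le g^*\sigma_r^2/(2+\delta-\sigma_r^2)$, i.e.\ with $\lambda_{\min}(AA^\top)=\sigma_r^2$ in the numerator, matching Proposition~\ref{thm:non-orthogonal}. Obtaining the lemma's cleaner bound involving $\sigma_r$ will likely require either a sharper treatment of the cross term in the expansion of $\|v_t^\parallel\|^2=\|(I-A^\top A)w_t^\parallel\|^2+2\langle(I-A^\top A)w_t^\parallel,(g^*/g_0)A^\top A w^*\rangle+(g^*/g_0)^2\|A^\top A w^*\|^2$---exploiting that both terms are coupled through the spectrum of $A^\top A$ in the same eigenbasis $V_r$---or reconciling a notational convention in which $\sigma_r$ denotes $\lambda_{\min}(AA^\top)$ rather than the smallest singular value of $A$. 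Either way, the skeleton $v_t^\perp=w_t^\perp$ plus a spectral lower bound on $\|v_t^\parallel\|$ is the whole argument.
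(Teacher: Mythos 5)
Your proof matches the paper's argument in both structure and detail: you write the gradient step with $\eta_t = 1/g_0^2$, observe that the orthogonal component is preserved ($v_t^\perp = w_t^\perp$), and then lower bound $\|v_t\|$ by $(g^*/g_0)\lambda_{\min}(AA^\top) - \bigl(1-\lambda_{\min}(AA^\top)\bigr)$ via the reverse triangle inequality in the eigenbasis of $A^\top A$, which is exactly the paper's computation. Your suspicion about the constant is also correct: in the paper's proof the decomposition is written as $A^\top A = U\Sigma U^\top$ with diagonal entries $1=\sigma_1\geq\cdots\geq\sigma_m>0$, so the symbol $\sigma_r$ in the lemma statement is actually $\lambda_{\min}(AA^\top)$ (the squared smallest singular value of $A$, not the singular value itself as the lemma's prose suggests), and your bound $g_0 \leq g^*\lambda_{\min}(AA^\top)/\bigl(2+\delta-\lambda_{\min}(AA^\top)\bigr)$ is precisely what the paper's proof establishes.
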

\begin{proof}

Consider the singular value decomposition of $A^\top A=U\Sigma U^\top$ with
\begin{align}
\Sigma =
\begin{bmatrix}
    \sigma_{1} & & &\\
    & \ddots & &\\
    & & \sigma_{m}&\\
        & & & \vzero_{d-m}\\
    \end{bmatrix}
\quad 
\text{with } 1=\sigma_1\geq\sigma_2\geq \ldots \geq \sigma_m > 0.     \label{eq: }
\end{align}
Moreover $U$ is a $d\times d$ orthogonal matrix.
We now use superscripts $t$ to illustrate the $t$th iteration 
$w_t$ since we  use subscript for the eigenvalues index.
Let $\eta = \frac{1}{g_t^2\sigma_1}=\frac{1}{g_t^2}$. The update of $v_t$ can be written as
$$v_t = w_t -\eta g_0A^\top A(g_0w_t-g^*w^*)=(I-A^\top A)w_t +\frac{g^*}{g_0}A^\top Aw^*= U\left( I- \Sigma\right)U^\top w_t+\frac{g^*}{g_0}U\Sigma U^\top w^*$$

\begin{align}
   \|v_t\| &=  \| \frac{g^*}{g_0}\Sigma U^\top w^*+\left( I- \Sigma\right)U^\top w_t\| \nonumber \\
   &=   \left\|\frac{g^*}{g_0}\begin{bmatrix}  
   \sigma_{1} & & &\\
    & \ddots & &\\
    & & \sigma_{m}&\\
        & & & \vzero_{d-m}\\
        \end{bmatrix}U^\top w^*+\begin{bmatrix}  
   0 & & &\\
    & \ddots & &\\
    & &0&\\
        & & & \vone_{d-m}\\
        \end{bmatrix} U^\top w_t +\begin{bmatrix}  
   1-\sigma_{1} & & &\\
    & \ddots & &\\
    & & 1-\sigma_{m}&\\
        & & & \vzero_{d-m}\\
        \end{bmatrix} U^\top w_t \right\| \nonumber \\
&\geq \sqrt{\left(\frac{g^*}{g_0}\right)^2\sum_{i=1}^m\sigma_i^2 [U^\top w^*]_i^2+\sum_{i=m+1}^d[U^\top w_t]_i^2}- \sqrt{\sum_{i=1}^m(1-\sigma_i)^2 [U^\top w_t]_i^2} \nonumber \\
&\geq \frac{g^*}{g_0} \sigma_m -  (1-\sigma_m)  \label{eq:normvt} \\
&\geq \left(\frac{g^*}{g_0}+1\right) \sigma_m-1\nonumber \\
&\geq 1+\delta \nonumber
\end{align}
since we have
$$\sigma_m \geq \frac{2+\delta }{ \left(\frac{g^*}{g_0}+1\right) }\quad \Leftrightarrow \quad g_0\leq \frac{g^*\sigma_m}{2+\delta-\sigma_m} \text{ and } \sigma_m\leq2$$
Note that the singular values are sorted so that $\sigma_m \le \sigma_1 = 1$, so the second inequality clearly holds.
The above inequality implies that as long as we have $g_0$ small, we can always guarantee $\|v\|\geq 1+\delta$. Using the equality 
$$\|w^\perp_{t+1}\|=\frac{\|w^\perp_{t}\|}{\|v_{t+1}\|}\leq \frac{1}{1+\delta} \|w^\perp_{t}\|$$
we see that the orthogonal component $w^{\perp}$ decreases geometrically.
\end{proof}
\begin{lemma}[random vector $w^*$ uniformly distributed on the sphere]\label{lem:randomw}
Suppose further that  $w^*$ is randomly drawn on the sphere, i.e. $w^*= \frac{z}{\|z\|}$ where $z\sim \mathcal{N}(0,I_{d})$. If the input data matrix $A$ satisfies :
\begin{itemize}
\item the maximum eigenvalue of $\lambda_{max}(AA^\top)=1$
\item the rank of $AA^\top$ is $m$.
\item the spectral of $A\in\mathbb{R}^{m\times d}$ satisfies $\frac{1}{m}\left(\|A A^\top\|_F^2-2\sqrt{m\log(m)}\right)\geq \sigma_m^2$ where the $\sigma_m$ is the minimum eigenvalue of $AA^\top$.
\end{itemize}

Then we can fix $g:=g_0$ satisfying 
$$g_0 \leq \frac{g^*} {(2+\delta-\sigma_m)} \sqrt{ \frac{\|A A^\top\|_F^2-2\sqrt{m\log(m)}}{m} }$$  
and update only $w$ using rPGD. Then with probability $1 - \mathcal{O}(\frac{1}{m})$, we have the orthogonal component $w^{\perp}$ decreases geometrically such that  
$\|w_{T_1}^{\perp}\|\leq \epsilon $
after iteration $$T_1= \frac{1}{\log(1+\delta)} \log\left( \frac{\|w_0^{\perp}\|}{\epsilon}\right)$$
\end{lemma}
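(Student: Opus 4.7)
The argument mirrors the proof of Lemma \ref{lem:anyw}, replacing its worst-case bound by $\sigma_m$ with a sharper high-probability bound that exploits the randomness of $w^*$. First I would reuse the SVD step: writing $A^\top A = U\Sigma U^\top$ with $1 = \sigma_1 \ge \cdots \ge \sigma_m > 0$ (normalization from $\lambda_{\max}(AA^\top)=1$), the identical triangle-inequality calculation leading to \eqref{eq:normvt} still yields
\begin{align*}
\|v_t\| \;\ge\; \frac{g^*}{g_0}\sqrt{\textstyle\sum_{i=1}^m \sigma_i^2 [U^\top w^*]_i^2}\;-\;\sqrt{\textstyle\sum_{i=1}^m (1-\sigma_i)^2 [U^\top w_t]_i^2}.
\end{align*}
The subtracted term is bounded above by $1-\sigma_m$ exactly as before, using $\sigma_i \in [\sigma_m,1]$ and $\|w_t\|=1$; this step is independent of $w^*$. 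The new work is to replace the worst-case bound $\frac{g^*}{g_0}\sigma_m$ on the first term with a sharper high-probability estimate.

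Under the random model, only the row-space coordinates of $U^\top w^*$ carry nonzero weight $\sigma_i$, so the relevant quantity can be realized as $Y := \sum_{i=1}^m \sigma_i^2 Z_i^2/\|Z\|^2$ with $Z \sim \mathcal{N}(0,I_m)$ (equivalently, $w^*$ restricted to the row space is uniform on $\mathbb{S}^{m-1}$). A direct calculation gives $\mathbb{E}[\sum_i \sigma_i^2 Z_i^2] = \|AA^\top\|_F^2$ and $\mathbb{E}[\|Z\|^2] = m$, so $Y$ concentrates near $\|AA^\top\|_F^2/m$. The quantitative goal is to establish
\begin{align*}
\textstyle\sum_{i=1}^m \sigma_i^2 [U^\top w^*]_i^2 \;\ge\; \frac{\|AA^\top\|_F^2 - 2\sqrt{m\log m}}{m} \qquad \text{with probability } 1-\mathcal{O}(1/m).
\end{align*}

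The main obstacle is this concentration inequality. I would obtain it via the Laurent--Massart one-sided tail bound for the weighted chi-square $\sum_i \sigma_i^2 Z_i^2$ at deviation parameter $x = \log m$, using the elementary estimate $\sum_i \sigma_i^4 \le m$ (immediate from $\sigma_i \le 1$), combined with a matching chi-square tail for $\|Z\|^2$ and a union bound; alternatively one can invoke Levy's spherical concentration for the $1$-Lipschitz map $w \mapsto \|A^\top A w\|$ and separately control the mean versus the median. Matching the specific $2\sqrt{m\log m}$ constant in the statement is the one delicate point. Once this bound is in hand, substituting back gives $\|v_t\| \ge \frac{g^*}{g_0}\sqrt{(\|AA^\top\|_F^2 - 2\sqrt{m\log m})/m} - (1-\sigma_m)$, and the stated hypothesis on $g_0$ is precisely what forces this to be $\ge 1+\delta$. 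The geometric decrease $\|w_{t+1}^\perp\| = \|w_t^\perp\|/\|v_t\| \le \|w_t^\perp\|/(1+\delta)$ and the iteration count $T_1 = \log(\|w_0^\perp\|/\varepsilon)/\log(1+\delta)$ then transfer verbatim from Lemma \ref{lem:anyw}.
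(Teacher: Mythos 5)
Your plan matches the paper's proof essentially step for step: the SVD-based lower bound on $\|v_t\|$ is reused verbatim from Lemma \ref{lem:anyw}, the problem reduces to lower-bounding $\sum_{i=1}^m \sigma_i^2 [U^\top w^*]_i^2$, and that lower bound is obtained by combining concentration of the weighted chi-square $\sum_i \sigma_i^2 Z_i^2$ with concentration of the normalizing chi-square $\sum_i Z_i^2$. The only difference is that you propose Laurent--Massart for the one-sided tail where the paper invokes Bernstein's inequality (Vershynin, Thm.\ 2.8.1); both give $\mathcal{O}(1/m)$ tails at deviation $\Theta(\sqrt{m\log m})$ using $\sum_i\sigma_i^4 \le m$, so this is not a genuinely different route. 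One small caution worth recording: your parenthetical "(equivalently, $w^*$ restricted to the row space is uniform on $\mathbb{S}^{m-1}$)" is what makes the ratio $\sum_{i=1}^m \sigma_i^2 Z_i^2/\sum_{i=1}^m Z_i^2$ with $Z\sim\mathcal{N}(0,I_m)$ correct; the paper's proof silently makes the same reduction (its denominator is $\sum_{i=1}^m z_i^2$ rather than $\|z\|^2 = \sum_{i=1}^d z_i^2$), which is consistent with $w^*$ lying in the row space of $A$ as the min-norm direction must, but does not literally follow from the lemma's stated model $w^*=z/\|z\|$ with $z\sim\mathcal{N}(0,I_d)$. That ambiguity is inherited from the paper, not introduced by you.
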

\begin{proof}

Since $w^*$ is uniform on the $d$-dimensional sphere, $u^*:=U^\top w^*$ (let $ [U^\top w^*]_i =u^*_i$) is also uniform on the $d$-dimensional sphere. Moreover, we can represent the random vector $u^*$ as a standard Normal random vector divided by its norm, i.e.
\begin{align*}
    u^*=\frac{z}{\|z\|},
\end{align*}
where $z\sim \mathcal{N}(0,I_{d})$. We want to lower bound $\sum_{i=1}^m\sigma_i^2 [U^\top w^*]_i^2 = \sum_{i=1}^m\sigma_i^2 (u^*_i)^2$. We can write
\begin{align*}
    \sum_{i=1}^m(u^*_i)^2\sigma_i^2 = \frac{\sum_{i}^m z_{i}^2\sigma_i^2}{\sum_{i=1}^m  z_{i}^2}.
\end{align*}
Thus we need to get the upper bound of $\sum_{i=1}^m z_{i}^2$ and the lower bound of $\sum_{i=1}^{m} z_{i}^2\sigma_i^2$. Note that 
$$1=\sigma_1\geq\sigma_2\geq \ldots \sigma_r>0.$$

Since $z_i \sim \mathcal{N}(0, 1)$, $X=\sum_{i=1}^m z_i^2$ is  $1$-sub-exponential r.v. with expectation $1$. Thus, with Bernstein inequality (i.e., see Theorem 2.8.1 in \cite{vershynin2018high}), for $\epsilon > 0$, we have that:
\begin{align}
 \mathbb{P}\left(\sum_{i=1}^m \sigma_i^2 z_i^2 
\leq \sum_{i=1}^m \sigma_i^2 - \epsilon m\right)
&\leq  \mathbb{P}\left( \left|\frac{1}{m}\sum_{i=1}^m \sigma_i^2 z_i^2 - \frac{1}{m}\sum_{i=1}^m \sigma_i^2 \right|\leq \epsilon \right)\nonumber\\
&\leq 2\exp\left(-c\min \left(\frac{\epsilon^2 m^2}{\sum_{i=1}^m \sigma_i^4}, \frac{\epsilon m }{\max_i \sigma_i^2}\right)\right)\nonumber\\\
&\leq 2\exp\left(-c\min \left(\frac{\epsilon^2 m^2}{\sum_{i=1}^m \sigma_i^2}, \frac{\epsilon m }{\max_i \sigma_i^2}\right)\right) \quad \text{ since }\quad \sum_{i=1}^m \sigma_i^2\geq \sum_{i=1}^m \sigma_i^4
\label{eq:hd1}
\end{align}
and
\begin{align}
   \mathbb{P}\left(\sum_{i=1}^m z_i^2 \geq (1+\epsilon)m\right) \leq  \mathbb{P}\left(\left|\frac{1}{m}\sum_{i=1}^m(z_i^2-1) \right|\geq \epsilon\right)\leq   2\exp(-c \min\{ m\epsilon^2,m\epsilon\} ). \label{eq:hd2}
\end{align}
where $c$ is an absolute constant.

Let $\epsilon=\sqrt{\frac{\log(m)}{m}}\leq 1$. Then,
$ \epsilon^2 m\leq \frac{\epsilon^2 m^2}{\sum_{i=1}^m \sigma_i^2} < \frac{\epsilon m}{\max_i \sigma_i^2}=\epsilon m$ since $2\sqrt{m\log m} < \sum_{i} \sigma_i^2$.
Thus \eqref{eq:hd1} and \eqref{eq:hd2} can be simplified, respectively,
\begin{align}
 \mathbb{P}\left(\sum_{i=1}^m \sigma_i^2 z_i^2 
 \leq \sum_{i=1}^m \sigma_i^2 - \sqrt{m\log(m)}\right)
&\leq \exp\left(-cm\epsilon^2\right)=\frac{1}{m}e^{-c}=\mathcal{O}\left( \frac{1}{m}\right)
\end{align}
and
\begin{align}
   \mathbb{P}\left(\sum_{i=1}^m z_i^2 \geq m + \sqrt{m\log(m)}\right)
   \leq \frac{1}{m}e^{-c}=\mathcal{O}\left( \frac{1}{m}\right). 
\end{align}
Then with probability $1-\mathcal{O}\left( \frac{1}{m}\right)$, we have
\begin{align*}
    \sum_{i=1}^m(u^*_i)^2\sigma_i^2 = \frac{\sum_{i}^mz_{i}^2\sigma_i^2}{\sum_{r=1}^m  z_{i}^2} \geq \frac{\sum_{i=1}^m \sigma_i^2 - \sqrt{m\log(m)} }{ m + \sqrt{m \log m}} \geq \frac{1}{m}\sum_{i=1}^m \sigma_i^2 - 2\sqrt{\frac{\log(m)}{m}} \geq \sigma_m^2
\end{align*}
where the last inequality is due to the assumption that the spectral satisfies $\frac{1}{m}\sum_{i=1}^m \sigma_i^2 > 2\sqrt{\frac{\log(m)}{m}}+\sigma_m^2$. 
To sum up, with probability $1-\mathcal{O}\left( \frac{1}{m}\right)$,
\begin{align*}
 \sum_{i=1}^m \sigma_i^2 [{u}^*]_i^2 \geq\frac{1}{m}\sum_{i=1}^m \sigma_i^2 - 2\sqrt{\frac{\log(m)}{m}} = \frac{1}{m}\left(\|A A^\top\|_F^2-2\sqrt{m\log(m)}\right)
\end{align*}
Now, using the derivation in \eqref{eq:normvt} for lower bound of $\|v_t\|$, we have that: 
\begin{align*}
    \|v_t\| &\geq \frac{g^*}{g_0}\sqrt{ \frac{\|A A^\top\|_F^2-2\sqrt{m\log(m)}}{m}}- (1- \sigma_m)  \geq 1+\delta\\
    \Rightarrow g_0 &\leq \frac{g^*} {(2+\delta-\sigma_m)} \sqrt{ \frac{\|A A^\top\|_F^2-2\sqrt{m\log(m)}}{m} }
\end{align*}
With $g_0$ satisfying above, we can guarantee that $\|v_t\| \geq 1+\delta$.

\end{proof}

\section{Experiments}\label{sec:lr_exp}
We evaluate WN and \ourAlgo on two problems: linear regression and matrix sensing. Due to space limit, we only include the experiment for linear regression here and put the experiment for matrix sensing to the appendix and matrix sensing. We show that for a wide range of initialization, WN and \ourAlgo converges to the minimum $\ell_2$-norm solution for linear regression and the minimum nuclear norm solution for matrix sensing. 
This is in contrast to the standard GD algorithm. 
For both problems GD requires initialization very close to, or exactly at, the origin to converge to the minimum norm solution~\cite{li2018algorithmic}. 
We will compare with the following two step-size schemes.

(1) \textbf{Algorithm with $\gamma_t=\eta_t$}: We simultaneously update the weight vector (matrix) and the scalar $g$. This is similar to the training of deep neural networks, where we use the same learning rate for all of the layers.

(2) \textbf{Two-phase algorithm}: In \textbf{Phase I}, we use sufficiently small learning rate to update $g$, the scale component (a scalar in linear regression). In \textbf{Phase II}, we use large step-size to update $g$. For both phases, we use large learning rate to update the direction component (weight vector in linear regression and weight matrix in matrix sensing)

\subsection{Linear Regression}
Let $m=20$, $d=50$. We generate the feature matrix as $A = U\Sigma V^T\in\R^{m\times d}$, where $U\in\R^{m\times m}$ and $V\in\R^{d\times m}$ are two random orthogonal matrices chosen uniformly over the Stiefel manifold of partial orthogonal matrices, and $\Sigma$ is a diagonal matrix described below.  Let $\kappa = \frac{\lambda_{\max}(AA^\top)}{\lambda_{\min}(AA^\top)}$. We vary the condition number $\kappa \ge 1$ of $A$ in our experiments. The diagonal entries of $\Sigma$ are set as $1, (1/\kappa)^{1/(m-1)}, (1/\kappa)^{2/(m-1)},..., 1/\kappa$. Set $g^*=3$, and $w^*$ as an arbitrary unit norm vector.
\begin{figure}[t]
    \centering
    \includegraphics[width=0.75\linewidth]{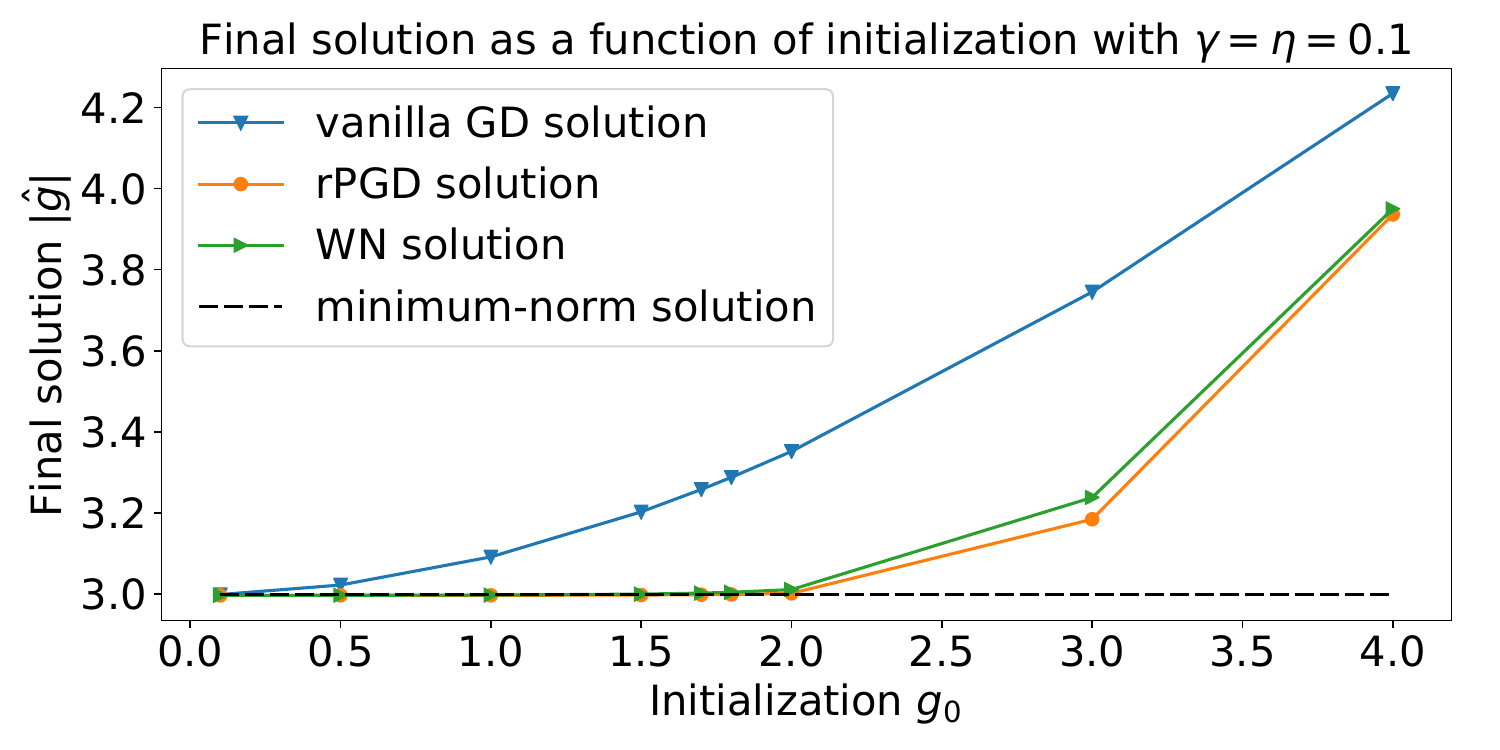}
    \includegraphics[width=0.75\linewidth]{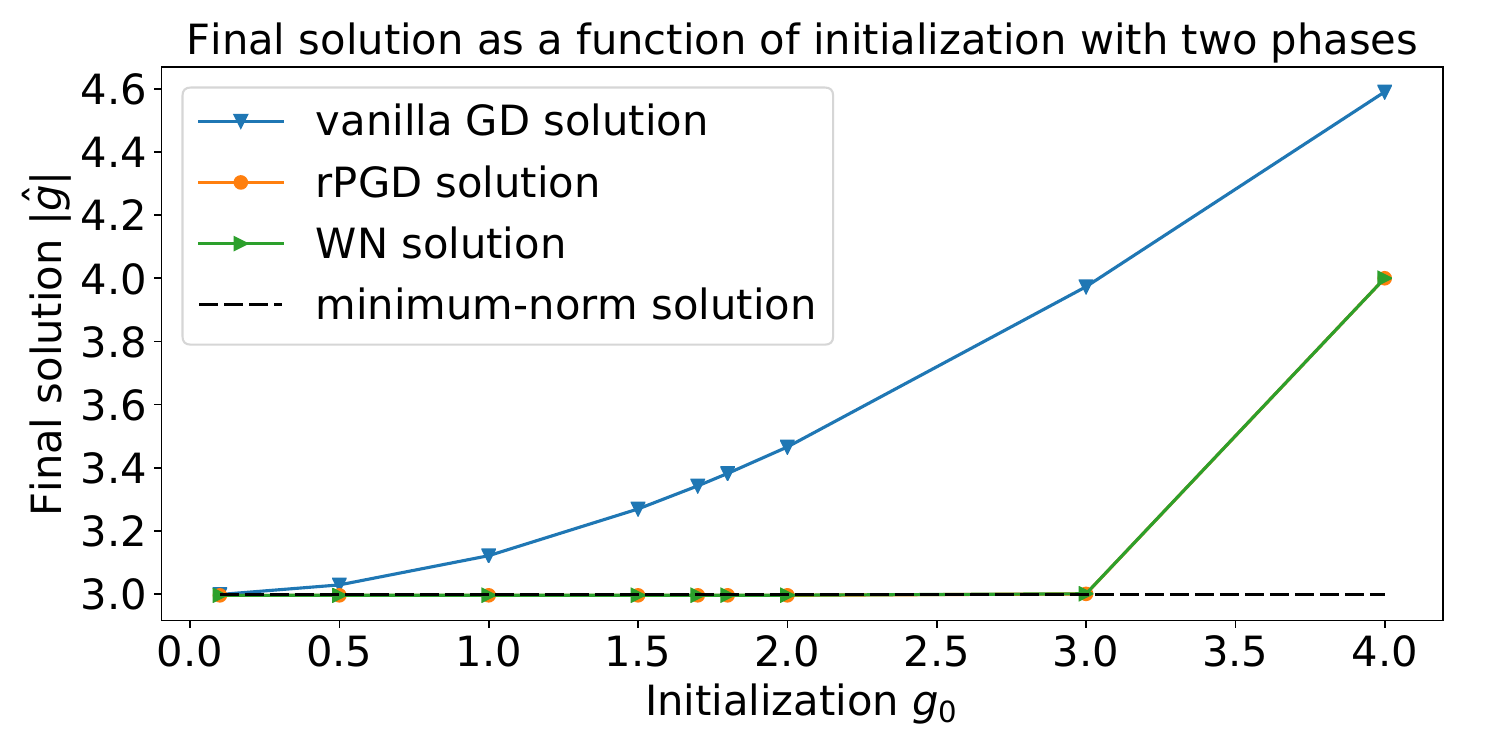}
    \caption{\textbf{Fixed Orthogonal Matrix $A$.} Comparison of the final solutions $|\widehat{g}|=\|\widehat{g}\widehat{w}\|=||\widehat{x}||$ provided by GD, WN and our proposed \ourAlgo on an overparameterized linear regression problem $\min_x\frac{1}{2}\|Ax-y\|_2^2$. All algorithms start from the same initialization $x_0 = g_0w_0$. Compared to GD, WN and \ourAlgo converge to the minimum $\ell_2$-norm solution for a larger region of initialization. Top plot is when we use the same stepsize for $w$ and $g$: $\gamma_t=\eta_t=0.1$. Bottom plot is when we use a particularly small stepsize for $g$ and optimal stepsize for $w$. This implies that a small stepsize for $g$ can arrive to a solution that is close to the minimum-norm solution for even wider range of $g_0$.
    }
    \label{fig:lr_init}
\end{figure}

Let $w_0$ be a random unit norm vector. We run the standard gradient descent (GD) algorithm on the problem 
 \ref{eqn:lsq}
with the initialization  $x_0=g_0w_0$. We run Algorithm~\ref{alg:wn} and \ref{alg:main} starting from the same initialization, and plot $|\widehat{g}|=\|\widehat{g}\widehat{w}\|_2$ as a function of $g_0$. We run all of the algorithms until the squared loss satisfies $f(\widehat{w}, \widehat{g})\le 10^{-5}$, where the final solution is denoted as $\widehat{g}\widehat{w}$.
We have the following observations:

 \textbf{Figure \ref{fig:lr_init_intro}} shows the result when we set a very small but equal learning rate for $w$ and $g$: $\eta_t = \gamma_t=0.005$. It shows there is no difference between WN and rPGD when the learning rate is small, which matches Lemma \ref{cor:flow}. We can see that both WN and rPGD can get close to the minimum norm solution with a large range of initializations ($g^* w^*$ for $g_0\lessapprox 1.5$) while this is only true for GD when $g_0$ is close to $0$. This experiment supports our theory.
 
\textbf{The top plot in Figure \ref{fig:lr_init}} shows the result when we set relatively large learning rates of $w$ and $g$: $\eta_t = \gamma_t=0.1$, as in practice where we use the same non-vanishing learning rate for all the layers when training deep neural networks. The plot shows a difference between WN and rPGD when $g_0>2$, while the two perform similarly when $g_0<2$.
    
 \textbf{The bottom plot in Figure \ref{fig:lr_init}} is when we set (1) WN $\eta_t=\|w_t\|/(g_t^2\lambda_{\max})$ for $w$ and $\gamma_t =0.005 $ for $g$; (2) rPGD $\eta_t = 1/(g_t^2\lambda_{\max})$ and $\gamma_t=0.005$. This mimics the two-phase algorithm as shown in Theorem \ref{thm:orthogonal1}. We can arrive at a solution close to the minimum norm solution for even wider range of $g_0\lessapprox 3$.

\begin{figure}[tb]
    \centering
    \includegraphics[width=0.75\linewidth]{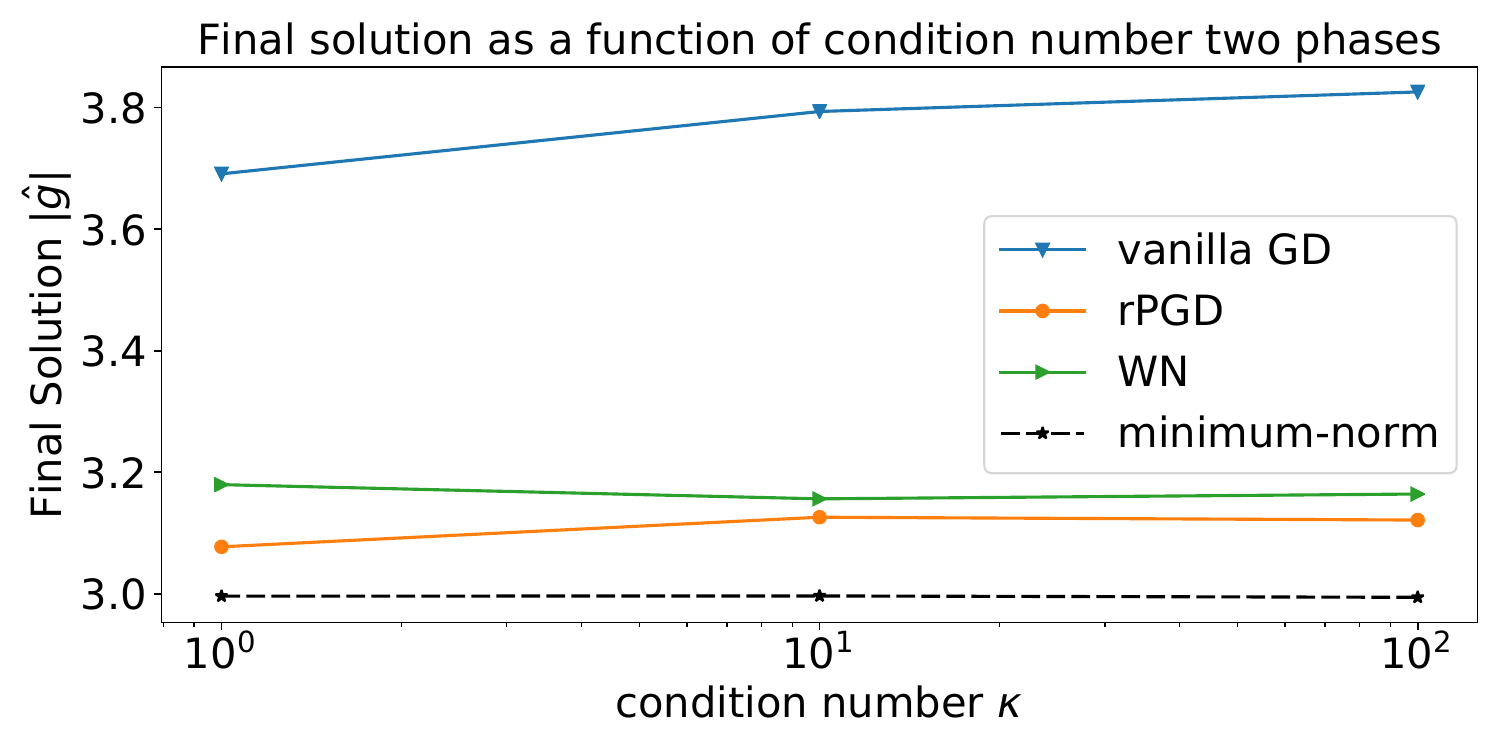}
    \centering
    \includegraphics[width=0.75\linewidth]{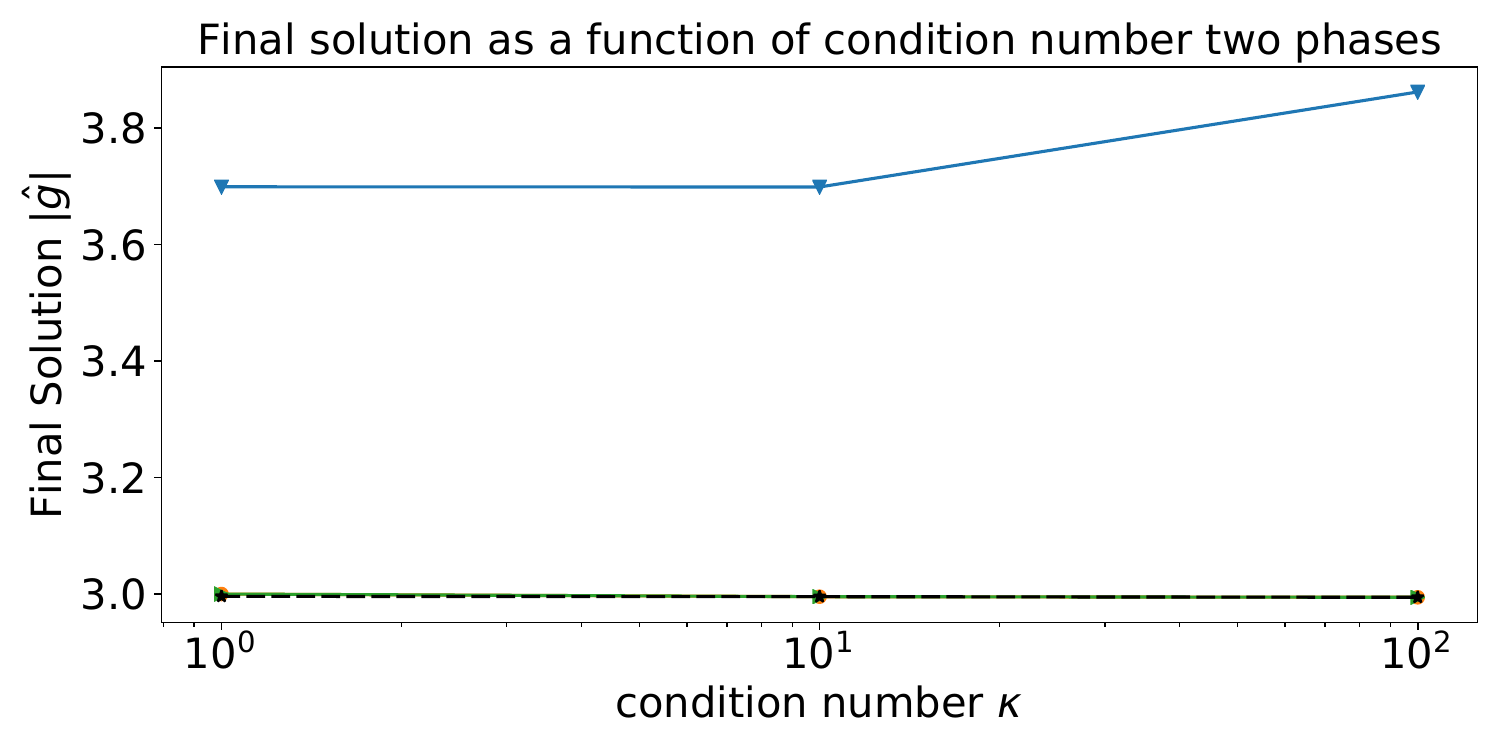}
    \caption{\textbf{Various General Matrix $A$}. Fix $g_0=2.8$ and increase the condition number $\kappa$. Top plot: $\gamma_t=\eta_t=0.01$. The $\ell_2$-norm of the WN and \ourAlgo solutions increases slowly as $\kappa$ increases, but their norm is smaller than when using Gradient Descent. Bottom plot: $\gamma_t=1, \eta_t=0.1\times\mathbb{1}_{\{t>5000\}}$. The $\ell_2$-norm of WN and \ourAlgo solutions are robust to condition number and close to the minimum $\ell_2$-norm for any $\kappa$. 
    Note that green, orange and black curves of the bottom plot overlap. 
    }
    \label{fig:lr_kappa}
\end{figure}
\textbf{Robustness to the condition number $\kappa$.} We repeat the previous experiment for various input matrix $A$ with a  wider range of $\kappa$ with fixed initialization $g_0=2.8$. The top plot in  Figure \ref{fig:lr_kappa} shows that for $\gamma_t=\eta_t$ as $\kappa$ increases, the $\ell_2$-norm of the solutions provided by WN and \ourAlgo also gradually increases but not as much as those provided by the vanilla GD. The bottom plot in Figure \ref{fig:lr_kappa} shows that the performance of the two-phase algorithms, with $\eta_t=0$ in the first 5000 iterations, thus have a better performance compared with algorithm using $\gamma_t=\eta_t$.

\subsection{Experiment: Matrix Sensing}\label{sec:mat_sen}
We show that the normalization methods can also be applied to the matrix sensing problem, to get closer to the minimum nuclear norm solutions. The goal in the matrix sensing problem is to recover a low-rank matrix from a small number of random linear measurements. Here we follow the setup considered in~\cite{li2018algorithmic} 
(for more related work on matrix sensing and completion, see, e.g., \cite{candes2009exact,donoho2013phase,ge2016matrix} and references therein). Let $X^*=U^*U^{^*T}\in\R^{d\times d}$ (with $U^*\in\R^{d\times r}$) be the ground-truth rank-$r$ matrix. Let $A_1,.., A_m\in\R^{d\times d}$ be $m$ random sensing matrices, with each entry sampled from a standard Gaussian distribution. We are interested in the setting when $r\ll d$ and $m\ll d^2$. Given $m$ linear measurements of the form $\inprod{A_i, X^*}$, let $U\in\R^{d\times d}$ be the variable matrix, we define the (over-parameterized) loss function as
\begin{equation}
    f(U) = \frac{1}{2m}\sum_{i=1}^m(\inprod{A_i,UU^T}- \inprod{A_i, X^*})^2. \label{eqn:uut}
\end{equation}
It is proved in~\cite{li2018algorithmic} that if $m=\tilde{O}(dr^2)$, then gradient descent on $f(U)$, when initialized very close to the origin, can recover the low-rank matrix $X^*$.  

\paragraph{WN.}
To apply WN, we need to reparametrize $U$ into a direction variable and a scale variable. We consider two choices:
\begin{itemize}
    \item Let $UU^T = g\frac{WW^T}{\|W\|_F^2}$, where $g\in\R$, and $W\in\R^{d\times d}$. In Figure  \ref{fig:matrix}, the green curve represents this algorithm. We label it with  \textbf{WN}.
    \item Let $UU^T = WDW^T$, where $D\in\R^{d\times d}$ is a diagonal matrix, and all the column vectors of $W\in\R^{d\times d}$ have unit $\ell_2$ norm. That is, for $w_i \in \mathbb{R}^d, i=1,2,\ldots,d$
    $$W=\left[\frac{w_1}{\|w_1\|};\frac{w_2}{\|w_2\|};\ldots;\frac{w_d}{\|w_d\|} \right].$$ In Figure  \ref{fig:matrix}, the purple curve represents the algorithm. We label it with \textbf{WN-Diag} where ``Diag" references the diagonal matrix.
\end{itemize}

\paragraph{rPGD.} To apply \ourAlgo, we need to reparametrize $U$ into a direction variable and a scale variable. We consider two choices:
\begin{itemize}
    \item Let $UU^T = gWW^T$, where $g\in\R$, and $W\in\R^{d\times d}$ satisfies $\|W\|_F=1$. See Algorithm \ref{alg:gwwt}. In Figure  \ref{fig:matrix}, the orange curve represents the algorithm. We label it with  \textbf{rPGD}.
    \item Let $UU^T = WDW^T$, where $D\in\R^{d\times d}$ is a diagonal matrix, and all the column vectors of $W\in\R^{d\times d}$ are projected to have unit $\ell_2$ norm. See Algorithm \ref{alg:wdwt}.  In Figure  \ref{fig:matrix}, the red curve represents the algorithm. We label it with  \textbf{rPGD-Diag}.
\end{itemize}
\begin{algorithm}[ht]
    \caption{\textbf{\ourAlgo} for matrix sensing loss $f(W, g)$}
    \label{alg:gwwt}
\begin{algorithmic}
    \STATE {\bfseries Input:} initialization $W_0$ and $g_0$, number of iterations $T$, step-sizes $\gamma_t$ and $\eta_t$.
    \FOR{$t=0, 1, 2, \cdots,T-1$}
        \STATE $V_t = W_t - \eta_t \nabla_{W}f(W_t, g_t)$
        \STATE $W_{t+1} = \frac{V_t}{\|V_t\|_F}$
        \STATE $g_{t+1} = g_t - \gamma_t \nabla_{g} f(W_t, g_t)$
    \ENDFOR
\end{algorithmic}
\end{algorithm} 
\begin{algorithm}[ht]
    \caption{\textbf{rPGD-Diag} for matrix sensing loss $f(W, D)$}
    \label{alg:wdwt}
\begin{algorithmic}
    \STATE {\bfseries Input:} initialization $W_0$ and $D_0$, number of iterations $T$, step-sizes $\gamma_t$ and $\eta_t$.
    \FOR{$t=0, 1, 2, \cdots,T-1$}
        \STATE $V_t = W_t - \eta_t \nabla_{W}f(W_t, D_t)$
        \STATE $W_{t+1} = V_t$ with all column vectors normalized.
        \STATE diag($D_{t+1}$) = diag($D_t$) $- \gamma_t$ diag($\nabla_{D} f(W_t, D_t)$)
    \ENDFOR
\end{algorithmic}
\end{algorithm}

\begin{figure}[tb]   
    \centering
    \begin{minipage}[t]{0.48\linewidth}
    \centering
    \includegraphics[width=\linewidth]{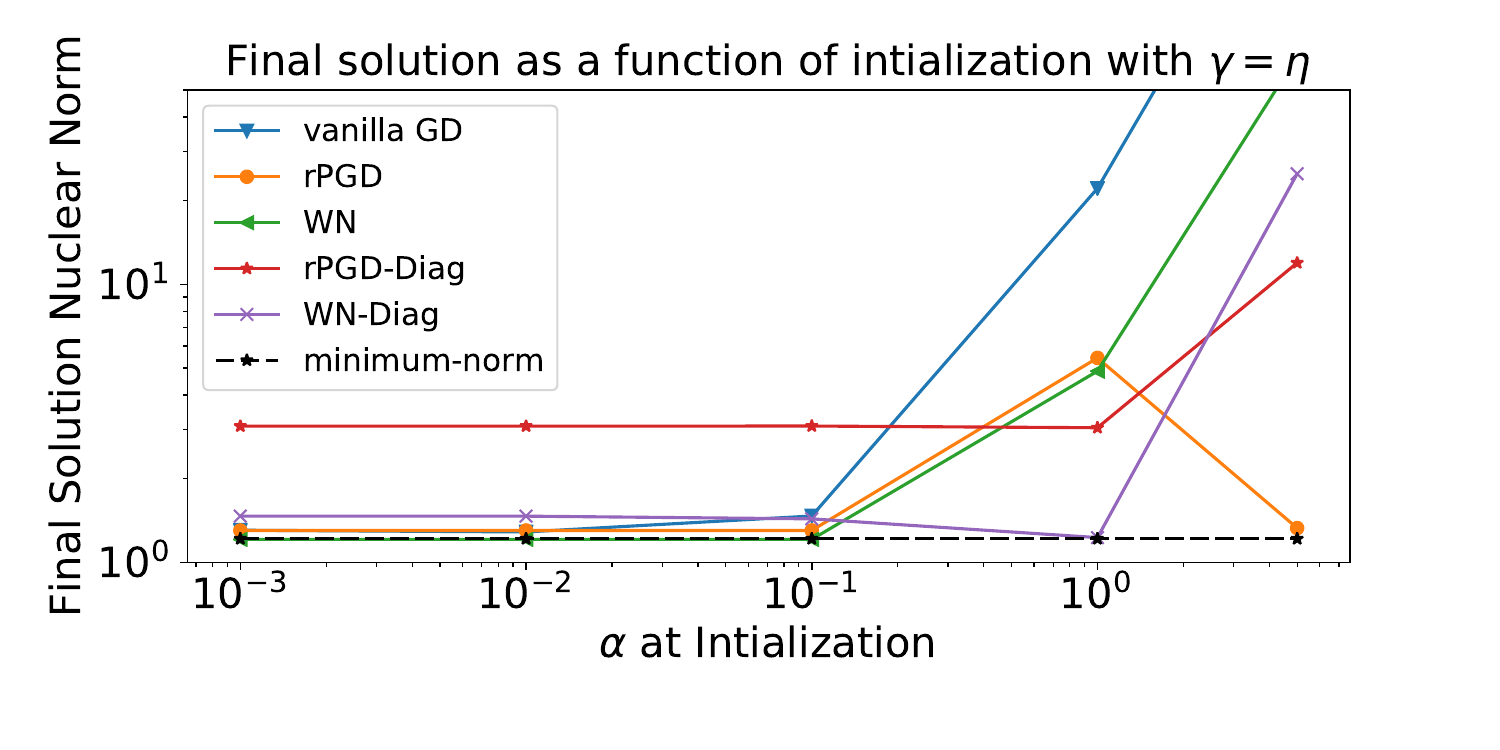}
    \end{minipage}
    \begin{minipage}[t]{0.48\linewidth}
    \centering
    \includegraphics[width=\linewidth]{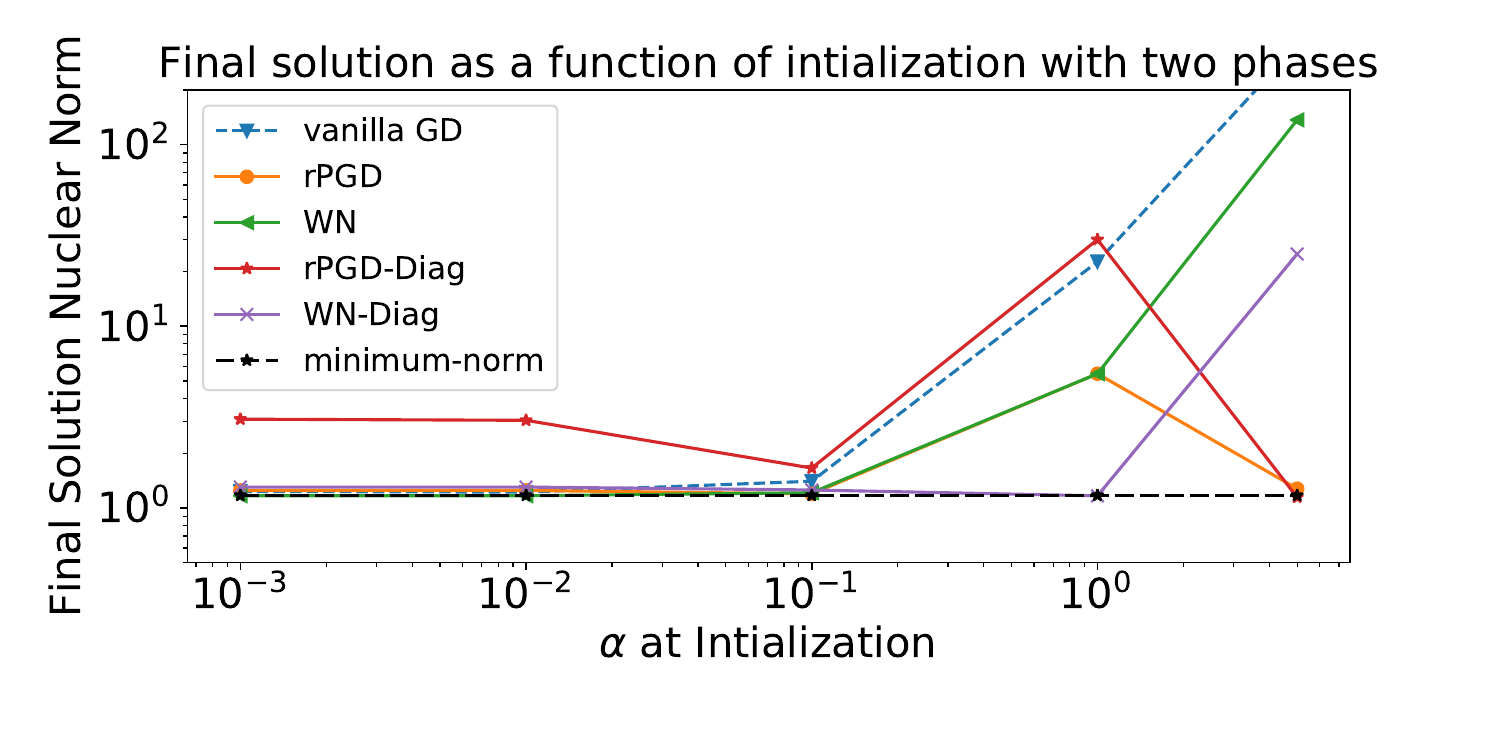}
    \end{minipage}
    \caption{Comparison of the final solutions $\|\widehat{X}\|_*$ provided by GD, WN, and \ourAlgo on an overparameterized matrix sensing problem (\eqref{eqn:uut}). All algorithms start from the same initialization with the scale $\alpha = \sqrt{\|UU^\top\|_F}$. Compared to GD, WN and \ourAlgo converge close to the minimum nuclear-norm solution for a broader region of initialization. The left plot is when we use the same stepsize for $W$ and $g$: $\gamma_t=\eta_t=c$. The right plot is when we use $\eta_t=c$ and $\gamma_t=c \mathbb{1}_{\{t>1000\}}$. 
    This suggests that Two-phase algorithm can arrive to a solution closer to the nuclear-norm solution for a broader range of $g_0$. The blue, green, and black curves of the top plot overlap when $0<\alpha<0.1$. The blue,  orange, green, purple, and black curves of the bottom plot overlap when $0<\alpha<0.1$.}  \label{fig:matrix}
\end{figure}

Denote the corresponding loss functions for rPGD as $f(W, g)$ and $f(W, D)$. Let $Z_0 = Z/\|Z\|_{F}$ where $Z$ is a matrix with i.i.d. Gaussian  entries, after which all column vectors have been normalized. We set the experiments with the following initialization:
\begin{itemize}
    \item For vanilla GD on $f(U)$, let $U_0=\alpha Z_0$;
    \item For WN and rPGD, let $W_0=Z_0$, and $g_0=\alpha^2$ ;\
    \item For WN-Diag and rPGD-Diag, let $W_0=Z_0$ and $D_0=\alpha^2I$. 
\end{itemize} 
We set $d=30$, $r=4$, and $m=60$. We simulate  $y_i=\langle{A_i,\hat{U}\hat{U}\rangle}$ with $\hat{U}\in\R^{d\times r}$ generated as a random matrix.\footnote{Code: $\hat{U}=\text{numpy.random.randn}(d,r)$. Note that this is not necessary the minimum  nuclear solution. We use the python package ``cvxpy" to solve for the minimum nuclear solution for \eqref{eqn:uut}.}


We compare the performance of gradient descent, and our algorithms for several initializations scales $g$. We run each algorithm until convergence (i.e., when the squared loss is less than $10^{-6}$). 

Similar to Figure 3, we use different learning rate schemes to get the final solution. We use grid search to find appropriate constant learning rate c.\footnote{Note $c$ varies for different $g_0$ and different algorithms. Here we start with $0.5$ and then decay by a factor of 2 until we get a step-size that can converge to the solution.} The top plot in Figure \ref{fig:matrix} uses the following learning rate: constant $c$ for gradient descent; $\eta_t=\gamma_t=c$ for rPGD (Algorithm~\ref{alg:gwwt} and \ref{alg:wdwt}); and set $\eta_t =\gamma_t=c\|W\|_F$  for WN.
 The bottom plot in Figure \ref{fig:matrix} uses the two phase learning rates: constant for gradient descent; $\eta_t=c$ and $\gamma_t=c \mathbf{1}_{\{t>1000\}}$ for rPGD (Algorithm~\ref{alg:gwwt} and \ref{alg:wdwt}); and set $\eta_t =\gamma_t=c\|W\|_F$ and $\gamma_t=c\mathbb{1}_{\{t>1000\}}$ for WN. Compared to GD, WN and \ourAlgo converge close to the minimum nuclear-norm solution for a larger region of initialization.  Moreover, these results also suggest that with the two-phase  algorithm, one can arrive to a solution  close to the nuclear-norm solution for a wider range of $g_0$.



\end{document}